\newtheorem{remark}{\it Remark}
\newcolumntype{L}[1]{>{\raggedright\let\newline\\\arraybackslash\hspace{0pt}}m{#1}}
\newcommand{\R}{\ensuremath{\mathbb{R}}}
\newcommand{\Diag}{\mathop{\mathrm{Diag}}\nolimits}
\newcommand{\tr}{\mathop{\mathrm{Tr}}\nolimits}
\newcommand*{\defeq}{\mathrel{\vcenter{\baselineskip0.5ex \lineskiplimit0pt
                     \hbox{\scriptsize.}\hbox{\scriptsize.}}}
                     =}
\title{Stable Camera Motion Estimation Using Convex Programming}
\author{Onur~\"{O}zye\c{s}\.{i}l\footnotemark[2]
\and Amit~Singer\footnotemark[3]
\and Ronen~Basri\footnotemark[4]}
\begin{document}
\maketitle
\newcommand{\slugmaster}{%
\slugger{siims}{xxxx}{xx}{x}{x--x}}

\renewcommand{\thefootnote}{\fnsymbol{footnote}}

\footnotetext[2]{Program in Applied and Computational Mathematics (PACM), Princeton University, Princeton, NJ 08544-1000, USA (\email{oozyesil@math.princeton.edu}).}
\footnotetext[3]{Department of Mathematics and PACM, Princeton University, Princeton, NJ 08544-1000, USA (\email{amits@math.princeton.edu}).}
\footnotetext[4]{Department of Computer Science and Applied Mathematics, Weizmann Institute of Science, Rehovot, 76100, ISRAEL (\email{ronen.basri@weizmann.ac.il}).}

\renewcommand{\thefootnote}{\arabic{footnote}}

\begin{abstract}
We study the inverse problem of estimating $n$ locations $\mathbf{t}_1, \mathbf{t}_2, \ldots, \mathbf{t}_n$ (up to global scale, translation and  negation) in $\R^d$ from noisy measurements of a subset of the (unsigned) pairwise lines that connect them, that is, from noisy measurements of $\pm \frac{\mathbf{t}_i - \mathbf{t}_j}{\|\mathbf{t}_i - \mathbf{t}_j \|_2}$ for some pairs $(i,j)$ (where the signs are unknown). This problem is at the core of the structure from motion (SfM) problem in computer vision, where the $\mathbf{t}_i$'s represent camera locations in $\R^3$. The noiseless version of the problem, with exact line measurements, has been considered previously under the general title of parallel rigidity theory, mainly in order to characterize the conditions for unique realization of locations. For noisy pairwise line measurements, current methods tend to produce spurious solutions that are clustered around a few locations. This sensitivity of the location estimates is a well-known problem in SfM, especially for large, irregular collections of images.\\
\indent In this paper we introduce a semidefinite programming (SDP) formulation, specially tailored to overcome the clustering phenomenon. We further identify the implications of parallel rigidity theory for the location estimation problem to be well-posed, and prove exact (in the noiseless case) and stable location recovery results. We also formulate an alternating direction method to solve the resulting semidefinite program, and provide a distributed version of our formulation for large numbers of locations. Specifically for the camera location estimation problem, we formulate a pairwise line estimation method based on robust camera orientation and subspace estimation. Lastly, we demonstrate the utility of our algorithm through experiments on real images.
\end{abstract}

\begin{keywords}
Structure from motion, parallel rigidity, semidefinite programming, convex relaxation, alternating direction method of multipliers\end{keywords}

\begin{AMS}
68T45, 52C25, 90C22, 90C25
\end{AMS}

\pagestyle{myheadings}
\thispagestyle{plain}
\markboth{O.~\"{O}ZYE\c{S}\.{I}L, A.~SINGER AND R.~BASRI}{STABLE CAMERA MOTION ESTIMATION}

\section{Introduction}  
Global positioning of $n$ objects from partial information about their relative locations is prevalent in many applications spanning fields such as sensor network localization~\cite{SNLSDP1,TubaishatMadriaSensors,AmitMihailSensors,ErenNetwork2}, structural biology~\cite{HendricksonBio}, and computer vision~\cite{HartleyBook,BATL2}. A well-known instance that attracted much attention from both the theoretical and algorithmic perspectives is that of estimating the locations $\mathbf{t}_1, \mathbf{t}_2, \ldots, \mathbf{t}_n \in \R^d$ from their pairwise Euclidean distances $\|\mathbf{t}_i - \mathbf{t}_j\|_2$. In this case, the large body of literature in rigidity theory (cf.~\cite{AspnesSurvey,SNLSDP}) provides conditions under which the localization is unique given a set of noiseless distance measurements. Also, much progress has been made with algorithms that estimate positions from noisy distances, starting with classical multidimensional scaling~\cite{SchoenbergCMDS} to the more recent semidefinite programming (SDP) approaches (see, e.g.,~\cite{SNLSDP1,SNLSDP2}).
\newline\indent Here we consider a different global positioning problem, in which the locations $\mathbf{t}_1, \ldots, \mathbf{t}_n$ need to be estimated from a subset of (potentially noisy) measurements of the pairwise lines that connect them (see Figure~\ref{fig:LineEstInstance} for a noiseless instance). The line connecting $\mathbf{t}_i$ and $\mathbf{t}_j$ is identified with the rank-$1$ projection matrix $\Gamma_{ij}$ defined by
\begin{equation}
\label{eq:GammaMats}
\Gamma_{ij} = (\mathbf{t}_i - \mathbf{t}_j)(\mathbf{t}_i - \mathbf{t}_j)^T / \|\mathbf{t}_i - \mathbf{t}_j\|_2^2 \\
\end{equation}
Notice that there is no available information about the Euclidean distances between the points. The entire information of pairwise lines is represented as a measurement graph $G_t = (V_t, E_t)$, where the $i$'th node in $V_t = \{1,2,\ldots,n\}$ corresponds to the location $\mathbf{t}_i$ and each edge $(i,j)\in E_t$ is endowed with the corresponding projection matrix $\Gamma_{ij}$.  
\begin{figure}[!htbp]
\begin{center}
   \includegraphics[trim=0cm 0cm 0cm 0cm, clip=true, width=0.65\linewidth]{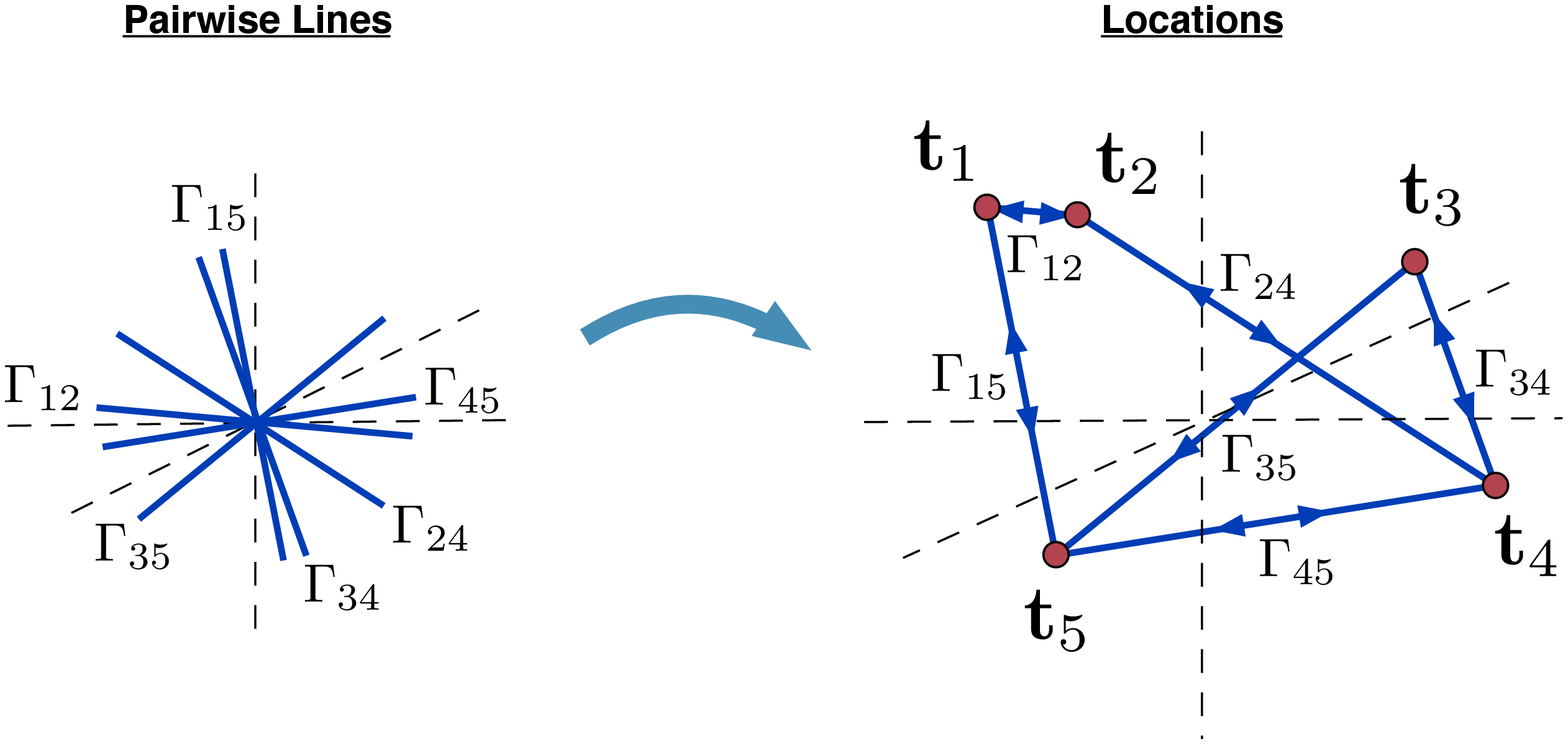}
\end{center}
   \caption{A (noiseless) instance of the line estimation problem in $\R^3$, with $n=5$ locations and $m=6$ pairwise lines.\label{fig:LineEstInstance}}
\end{figure}
\newline\indent The noiseless version of this problem (i.e., realization of locations from exact line measurements~(\ref{eq:GammaMats})) was previously studied in several different contexts such as discrete geometry, sensor network localization, and robotics, under various formulations (see~\cite{WhiteleyMatroidBook, WhiteleyMatroid, ErenNetwork, ErenNetwork2, ServatiusWhiteleyCAD}). The concepts and the results for the noiseless case, to which we refer here as {\em parallel rigidity theory}, are aimed at characterizing the conditions for the existence of a unique realization of the locations $\mathbf{t}_i$ (of course, up to global translation, scaling and negation of the locations $\mathbf{t}_i$, since the pairwise lines $\Gamma_{ij}$ are invariant under these transformations). 
\newline\indent However, location estimation from (potentially) noisy line measurements did not receive much attention previously. The camera location estimation part of the structure from motion (SfM) problem in computer vision (see, e.g.,~\cite{HartleyBook}), where $\mathbf{t}_i$'s represent the camera locations in $\R^3$, is an important example of the abstract problem with noisy measurements. To the best of our knowledge, a structured formulation (in terms of the pairwise lines) of the camera location estimation problem and its relation to the existing results of parallel rigidity theory (characterizing conditions for {\em well-posed} instances of the problem) were not considered previously. We now give more details on the camera location estimation problem and the existing techniques for its solution. \\ \\
\noindent {\bf Camera Location Estimation in SfM:} Structure from motion (SfM) (depicted in Figure~\ref{fig:SfMProblem}) is the problem of recovering a $3$D structure by estimating the camera motion corresponding to a collection of $2$D images (cf. \S\ref{sec:CamMotEst} for technical details). 
\begin{figure}[!htbp]
\begin{center}
   \includegraphics[trim=0cm 0cm 0cm 0cm, clip=true, width=0.7\linewidth]{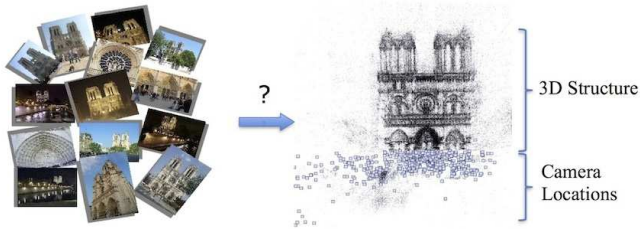}
\end{center}
   \caption{The structure from motion (SfM) problem.\label{fig:SfMProblem}}
\end{figure}
Classically, SfM is solved in three stages: $(1)$ Feature point matching between image pairs (as in~Figure~\ref{fig:CorrPtsEx}) and relative pose estimation of camera pairs based on extracted feature points  $(2)$ Estimation of camera motion, i.e. global camera orientations and locations, from relative poses $(3)$ $3$D structure recovery based on estimated camera motion by reprojection error minimization (e.g. bundle adjustment of~\cite{BundleAdjustment}). Although the first and the third stages are relatively well understood and there exist accurate and efficient algorithms in the literature for these stages, existing methods for camera motion estimation, and specifically for the camera location estimation part, are sensitive to errors that result from mismatched feature correspondences. Among the widely used methods are incremental approaches (e.g.~\cite{SnavelyRome,SnavelyData,SnavelySkeletal,VisualSfM,ZhangIncremental,HavlenaGroups,FurukawaPartial}), that integrate images to the estimation process one by one or in small groups. These incremental methods usually result in accumulation of estimation errors at each step and also require several applications of bundle adjustment for improved accuracy, hence leading to computational inefficiency. Alternatively, global estimation of the camera motion, i.e. solving for all camera orientations and/or locations simultaneously, can potentially yield a more accurate estimate. A generally adapted procedure to prevent high computational costs is to estimate the camera motion and the $3$D structure separately. In principle, considering the large reduction in the number of variables, camera motion estimation can be performed jointly for all images, followed by $3$D structure recovery using a single instance of reprojection error minimization. Obviously, such a procedure presumes a stable and efficient motion estimation algorithm. Most of the existing methods for motion estimation perform orientation and location estimation separately. Orientation estimation turns out to be a relatively well-posed problem and there exist several efficient and stable methods, e.g.~\cite{MartinecRotations,MicaAmitSfM,HartleyRotations,GovinduLieAlg,TronVidalDist}. On the other hand, global estimation of camera locations, specifically for large, unordered sets of images, usually suffers from instability to errors in feature correspondences (resulting in solutions clustered around a few locations, as for~\cite{MicaAmitSfM, BATL2, MultiLinear}), sensitivity to outliers in pairwise line measurements (e.g., the $\ell_{\infty}$ approach of~\cite{HartleySim, KahlHartley}) and susceptibility to local solutions in non-convex formulations (e.g.,~\cite{GovinduLieAlg}). Hence, a well-formulated, robust, efficient method for global location estimation (scalable to large sets of images) with provable convergence and stability guarantees is of high value.\\
\indent Early works of~\cite{GovinduEarlyL2,BATL2} on location estimation reduce the problem to that of solving a set of linear equations that originate from the pairwise line measurements. Finding solutions to these linear systems can be done in a computationally efficient manner. However, these solutions have been empirically observed to be sensitive to errors in the pairwise line measurements. The sensitivity of such solutions is expressed by the tendency of the estimated locations to cluster, regardless of the true locations (cf. Figure~\ref{fig:NotreL2Cluster} for such a clustering solution for a real data set, and also the optimization problem~(\ref{eq:L2TransEst}) and the following discussion). The multistage linear method of~\cite{MultiLinear} attempts to resolve this issue by first performing pairwise reconstructions, then registering these in pairs, and finally aligning them by estimating relative scales and locations. Nonetheless, this approach does not produce satisfactory results in terms of estimation accuracy. Another interesting and efficient method is the Lie algebraic averaging approach of~\cite{GovinduLieAlg}. However, this non-convex method is susceptible to convergence to local optima. The spectral formulation of~\cite{MicaAmitSfM}, which is based on a novel decomposition of the essential matrix and is similar to~\cite{GovinduEarlyL2,BATL2} in its formulation of the problem, yields an efficient linear solution for the locations, though, it also suffers from spurious clustered solutions. Yet another formulation related to our work is the quasi-convex formulation of~\cite{HartleySim}, which relies on (iteratively) optimizing a functional of the $\ell_\infty$ norm and requires the estimation of the signed directions of the pairwise lines, i.e. knowledge of $\frac{\mathbf{t}_i - \mathbf{t}_j}{\|\mathbf{t}_i - \mathbf{t}_j\|_2}$. However, $\ell_{\infty}$ norm is highly susceptible to outliers, resulting in unsatisfactory solutions for real image sets. Also, the requirement for estimating signed line directions may introduce additional error to the problem. A similar idea is employed in~\cite{KahlHartley} to simultaneously estimate the $3$D structure, which exhibits the same difficulties. There are also various works that aim to improve the high sensitivity and inefficiency of these quasi-convex methods (see, e.g.,~\cite{MartinecRotations,OlssonKahlEfficient}). Another method requiring the estimation of the signed directions of the pairwise lines is studied in~\cite{TronVidalDist}. In contrast to the sensitivity of the quasi-convex method of~\cite{HartleySim} to outliers, the method in~\cite{TronVidalDist} is based on optimizing a functional of the $\ell_2$ norm, and hence produces more accurate location estimates. Additionally, \cite{AddRef1} introduces a framework based on classical rigidity theory (involving pairwise distance information), which aims to identify rigid instances of the joint motion and structure estimation problem. Also, an SDP approach is introduced in~\cite{AddRef1} in order to jointly estimate motion and structure from noisy feature correspondences. We note that our work is significantly different from~\cite{AddRef1}: While we use parallel rigidity,~\cite{AddRef1} employs classical rigidity, leading to completely different SDP formulations.
\begin{figure}[!htbp]
\begin{center}
   \includegraphics[trim=2cm 1cm 1.4cm 0.75cm, clip=true, width=0.6\linewidth]{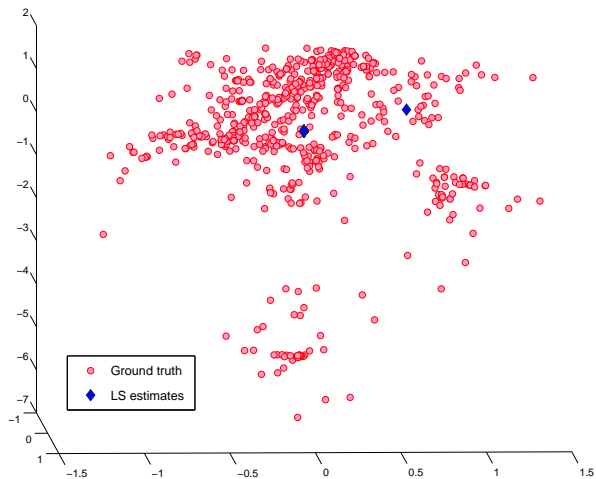}
\end{center}
   \caption{A $2$D snapshot of the collapsing solution of the least squares (LS) method~{\rm\cite{MicaAmitSfM,BATL2}}, for the Notre-Dame data set from~{\rm\cite{SnavelyData}}. The solution of~{\rm\cite{SnavelyData}} is taken as the ground truth.\label{fig:NotreL2Cluster}}
\end{figure}
\\ \\
\noindent {\bf Main Contributions and Broader Context:} In this paper we make the following principal contributions for the problem of location estimation from pairwise lines: 
\begin{romannum} \item The main contribution of our work is the introduction of a new semidefinite relaxation (SDR) formulation for estimating locations from pairwise line measurements. Empirically, we observe that solutions obtained by the SDR do not suffer from the clustering phenomenon of location estimates that plague many of the existing algorithms.
\item To quantify the performance of our SDR formulation, we prove exact (in the noiseless case, cf. Proposition~\ref{propo:ExactRecovery}) and stable (in the noisy case, cf. Theorem~\ref{thm:WeakStability} and Corollary~\ref{corol:LocationStability}) location recovery results. We also provide a provably convergent and efficient alternating direction method to solve our SDR. 
\item We provide a distributed version (scalable to a large number of locations) of the SDR formulation, based on spectral partitioning and convex programming. Additionally, we prove exact location recovery (in the noiseless case, cf. Proposition~\ref{propo:ExactRecoveryDist}) for the distributed approach. 
\item We formulate the camera location estimation problem of SfM in terms of the pairwise line measurements (between the camera locations in $\R^3$). Moreover, we show how to improve the stability of our approach to outlier pairwise measurements by robust preprocessing of the available pairwise camera information and describe a robust iterative approach for camera orientation estimation.
\item We also demonstrate the efficiency and the accuracy of our formulations via synthetic and real data experiments. We note that these experiments show a specifically important characteristic of the SDR formulation: As long as the level of noise in the pairwise lines is below some threshold, the SDR always produces rank-$1$ solutions, i.e. it actually solves the original non-convex program, i.e. the relaxation is tight\footnote{This is sometimes referred to as ``exact rank recovery'', and is not to be confused with our exact ``location'' recovery results for the SDR in the presence of noiseless pairwise lines.}. Also, for higher noise levels, even when the SDR does not produce rank-$1$ solutions, its solution typically has a large spectral gap (i.e., it can be well approximated by a rank-$1$ matrix). In other words, we do not observe a sharp phase transition in the quality of the relaxation.
\item Since the existing results of parallel rigidity theory (cf. Appendix~\ref{Apdx:ParallelRigidity} and, e.g.,~\cite{WhiteleyMatroidBook, WhiteleyMatroid, ErenNetwork, ErenNetwork2, ServatiusWhiteleyCAD}) have not been previously applied to the camera location estimation problem, we provide a summary of the main results of parallel rigidity theory, which completely characterize the conditions for the problem to be well-posed (in $\R^d$, for arbitrary $d$). Also, we formulate a randomized algorithm to efficiently decide in the existence of these conditions.
\end{romannum}

\indent In the literature, convex programming relaxations (and in particular semidefinite programs) have previously served as convex surrogates for non-convex (particularly NP-hard) problems arising in several different areas, such as sensor network localization (from pairwise distances)~\cite{SNLSDP,SNLSDP1}, low-rank matrix completion~\cite{CandesMC1}, phase retrieval~\cite{CandesPR}, robust principal component analysis (PCA)~\cite{CandesRPCA}, multiple-input multiple-output (MIMO) channel detection~\cite{MIMO1,MIMO2}, and many others (also see~\cite{MAXCUTSDP,SSReaper,BoydCVXSurvey,AMCSDP}). Notably, the SDP formulation for sensor network localization~\cite{SNLSDP,SNLSDP1} is not guaranteed (even in the noiseless case) to provide the unique configuration of a globally rigid framework (cf.~\cite{AspnesSurvey}, for global rigidity and other fundamental concepts in ``classical'' rigidity theory). Only if the framework is ``uniquely localizable''~\cite{SNLSDP}, then the SDP is guaranteed to provide the unique solution in the noiseless case. In contrast, our SDR formulation is guaranteed to provide the unique solution (up to global scale, translation and negation) for a parallel rigid framework (cf. \S\ref{sec:ParRigidity}). Similar to our case, the tightness of the relaxation, i.e. obtaining rank-$1$ solutions from semidefinite relaxations, is also observed in several different SDR formulations (see, e.g.,~\cite{KunalRegistration,AfonsoSDRtight} and the survey~\cite{AMCSDP}).\\ \\ 
\noindent {\bf Organization of the Paper:} In \S\ref{sec:TransEstim} we provide the connection to parallel rigidity theory and introduce the SDR formulation. Also, we prove exact (in the noiseless case) and stable (in the noisy case) location recovery results, and formulate an alternating direction method for the SDR. In \S\ref{sec:DistApproach} we introduce the distributed approach and prove its well-posedness. In \S\ref{sec:CamMotEst}, we formulate the camera location estimation problem in SfM as a problem of estimating locations from their pairwise line measurements. We also present the robust camera orientation and pairwise line estimation procedures. We evaluate the experimental performance of our algorithm in \S\ref{sec:Experims}, using synthetic and real data sets. Lastly, \S\ref{sec:Conclusion} is a summary. \\ \\
\noindent {\bf Reproducibility:} The methods and algorithms presented in this paper are packaged in a MATLAB toolbox that is freely available for download from the first author's webpage at \url{http://www.math.princeton.edu/~oozyesil/}.\\ \\
\noindent {\bf Notation:} We denote vectors in $\R^d$, $d\geq 2$, in boldface. $I_d$ and $J_{d}$ are used for the $d\times d$ identity and all-ones matrices, respectively. $S^d$ and $\mbox{SO}(d)$ denote the (Euclidean) sphere in $\R^{d+1}$ and the special orthogonal group of rotations acting on $\R^d$, respectively. We use the hat accent, to denote estimates of our variables, as in $\hat{X}$ is the estimate of $X$. We use star to denote solutions of optimization problems, as in $X^*$. For an $n\times n$ symmetric matrix $A$, $\lambda_1(A) \leq \lambda_2(A) \leq \ldots \leq \lambda_n(A)$ denote its eigenvalues (in ascending order) and $A\succeq0$ denotes that $A$ is positive semidefinite (i.e., $\lambda_i(A)\geq 0$, for all $i = 1,2,\ldots,n$). Also, for an $n\times n$ matrix $X$, $\diag(X)$ denotes the vector formed by the diagonal entries of $X$, and conversely, for $\mathbf{x}\in\R^n$, $\Diag(\mathbf{x})$ denotes the diagonal matrix formed by the entries of $\mathbf{x}$. Lastly, we use the letters $n$ and $m$ to denote the number of locations $|V_t|$ and the number of edges $|E_t|$ of graphs $G_t = (V_t,E_t)$ that encode the pairwise line information.

\section{Location Estimation}
\label{sec:TransEstim}
Consider a graph $G_t = (V_t,E_t)$ of pairwise lines, with each edge $(i,j)\in E_t$ endowed with a pairwise line $\Gamma_{ij}$ corresponding to the locations $\{\mathbf{t}_i\}_{i\in V_t}$ (i.e., satisfying~(\ref{eq:GammaMats})). Given this information, we first address the question of unique realizability of the locations in the next subsection, followed by our SDP formulation for location estimation (from noisy pairwise lines).

\subsection{Parallel Rigidity}
\label{sec:ParRigidity}
The unique realizability of locations (or {\em the solvability problem}) from pairwise line measurements was previously studied, mainly under the name of {\em parallel rigidity theory} (see, e.g.,~\cite{WhiteleyMatroidBook, WhiteleyMatroid, ErenNetwork, ErenNetwork2, ServatiusWhiteleyCAD}). However, to the best of our knowledge, the concepts and the results of parallel rigidity theory have not been related to the well-posedness of the camera location estimation part of SfM, which is why we study them again here. Note that while camera orientation estimation from noiseless pairwise ratios (cf. \S\ref{sec:CamMotEst}) only requires the connectivity of the measurement graph, connectivity alone is insufficient for camera location estimation (see Figure~\ref{fig:ParallelRigidityEx} for such an instance). To address this problem, we now briefly discuss the main results of parallel rigidity theory. For further details on fundamental concepts and results in parallel rigidity theory, see Appendix~\ref{Apdx:ParallelRigidity}.
\begin{figure}[!htbp]
\begin{center}
   \includegraphics[trim=0cm 0cm 0cm 0cm, clip=true, width=0.75\linewidth]{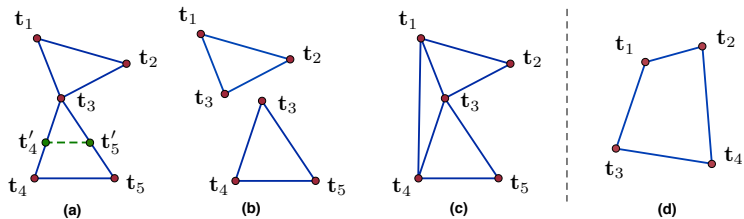}
\end{center}
   \caption{{\bf(a)} A formation of $5$ locations on a connected graph, which is {\em not} parallel rigid (in $\R^2$ and $\R^3$). Non-uniqueness is demonstrated by two non-congruent location solutions $\{\mathbf{t}_1,\mathbf{t}_2,\mathbf{t}_3,\mathbf{t}_4,\mathbf{t}_5\}$ and $\{\mathbf{t}_1,\mathbf{t}_2,\mathbf{t}_3,\mathbf{t}_4',\mathbf{t}_5'\}$, each of which can be obtained from the other by an {\bf independent rescaling} of the solution for one of its maximally parallel rigid components, {\bf(b)} Maximally parallel rigid components of the formation in {\rm(a)}, {\bf(c)} A parallel rigid formation (in $\R^2$ and $\R^3$) obtained from the formation in {\rm(a)} by adding the extra edge $(1,4)$ linking its maximally parallel rigid components, {\bf(d)} A formation of $4$ locations, which is parallel rigid in $\R^3$, but is not parallel rigid in $\R^2$\label{fig:ParallelRigidityEx}}
\end{figure}

 \indent We call the (noiseless) pairwise line information $\{\Gamma_{ij}\}_{(i,j)\in E_t}$ a ``formation'' and consider the following fundamental questions: Using this information, can we {\em uniquely} realize the points $\{\mathbf{t}_i\}_{i\in V_t}$, of course, up to a global translation, scale and negation (i.e. can we realize a set of points {\em congruent} to $\{\mathbf{t}_i\}_{i\in V_t}$)? Is unique realizability a generic property (i.e. is it only a function of the underlying graph, independent of the particular realization of the points, assuming they are in generic position) and can it be decided efficiently? If we cannot uniquely realize $\{\mathbf{t}_i\}_{i\in V_t}$, can we efficiently determine maximal components of $G_t$ that can be uniquely realized? These questions were previously studied in several different contexts like discrete geometry, bearing and angle based sensor network localization and robotics, under various formulations and names (see~\cite{WhiteleyMatroidBook, WhiteleyMatroid, ErenNetwork, ErenNetwork2, ServatiusWhiteleyCAD, KatzUnique,MaximalRigid,AddRef2} and references therein).
\newline \indent The identification of parallel rigid formations is addressed in~\cite{WhiteleyMatroidBook,WhiteleyMatroid,ErenNetwork,ErenNetwork2} (also see~\cite{KatzUnique} and the survey~\cite{JacksonJordanSurvey}), where it is shown that parallel rigidity in $\R^d$ ($d\geq2$) is a generic property of the measurement graph $G_t$ equivalent to unique realizability, and admits a complete combinatorial characterization (a generalization of Laman's condition from classical rigidity to parallel rigidity):
 \begin{theorem}
\label{thm:LamanConds}
For a graph $G = (V,E)$, let $(d - 1)E$ denote the set consisting of $(d - 1)$ copies of each edge in $E$. Then,
$G$ is generically parallel rigid in $\R^d$ if and only if there exists a nonempty $D\subseteq(d - 1)E$, with $|D| = d|V| - (d+1)$, such that for all subsets $D'$ of $D$, we have
\begin{equation}
\label{eq:LamanIneqs}
|D'| \leq d|V(D')| - (d+1) \ ,
\end{equation}
where $V(D')$ denotes the vertex set of the edges in $D'$.
\end{theorem}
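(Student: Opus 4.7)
The plan is to follow the standard matroidal proof strategy for Laman's theorem, adapted to the parallel setting. First I would define the parallel rigidity matrix $R(G,\mathbf{t})$ of size $(d-1)|E| \times d|V|$: for each edge $(i,j) \in E$, fix an orthonormal basis $\{\mathbf{n}_{ij}^k\}_{k=1}^{d-1}$ of the hyperplane orthogonal to $\mathbf{t}_i - \mathbf{t}_j$ and record the $d-1$ infinitesimal parallel constraints $(\mathbf{n}_{ij}^k)^T(\mathbf{s}_i - \mathbf{s}_j) = 0$ on a displacement field $\{\mathbf{s}_i\}$. The kernel of $R(G,\mathbf{t})$ always contains a canonical $(d+1)$-dimensional subspace generated by the trivial infinitesimal parallel motions: global translations $\mathbf{s}_i \equiv \mathbf{s}$ (contributing $d$ dimensions) and the global scaling $\mathbf{s}_i = \lambda \mathbf{t}_i$ (contributing $1$ more). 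A standard genericity argument then shows that $G$ is generically parallel rigid in $\R^d$ iff $\mathrm{rank}(R(G,\mathbf{t})) = d|V| - (d+1)$ for generic $\mathbf{t}$, and that the subsets of $(d-1)E$ indexing generically independent rows of $R$ form a matroid $\mathcal{M}_d(G)$ on the ground set $(d-1)E$, independent of the choice of the bases $\{\mathbf{n}_{ij}^k\}$.

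Next I would prove necessity of the Laman count. Taking any $D' \subseteq (d-1)E$ indexing linearly independent rows of $R$, the restriction of $R$ to the vertex set $V(D')$ and to the rows $D'$ must retain the $(d+1)$-dimensional kernel coming from the trivial motions of the subframework; hence $|D'| \leq d|V(D')| - (d+1)$. Thus if $G$ is generically parallel rigid, a basis of $\mathcal{M}_d(G)$ has size $d|V| - (d+1)$ and automatically satisfies (\ref{eq:LamanIneqs}) for every subset, giving the forward direction.

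The hard direction is sufficiency: given $D$ of the stated size satisfying (\ref{eq:LamanIneqs}) throughout, we must exhibit a generic $\mathbf{t}$ making the corresponding rows of $R$ linearly independent, i.e., show $D$ is a basis of $\mathcal{M}_d(G)$. The plan is (i) to verify that the count conditions themselves define a matroid $\mathcal{L}_d(G)$ on $(d-1)E$, a standard Edmonds-style consequence of submodularity of the set function $D' \mapsto d|V(D')| - (d+1)$; and (ii) to identify $\mathcal{L}_d(G)$ with $\mathcal{M}_d(G)$. The inclusion $\mathcal{M}_d(G) \subseteq \mathcal{L}_d(G)$ is precisely the necessity above. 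The reverse inclusion is the essential geometric content and the main obstacle. The standard route, following Whiteley and Servatius, is an inductive Henneberg-type construction on $|V|$: show that any Laman-independent $D$ can be built from a trivial base formation by a sequence of vertex additions adapted to parallel constraints (e.g., $0$-extensions and $1$-extensions counted in the $(d-1)E$ ground set) that preserve both the count inequalities and the generic independence of the induced row set. The key technical step is a case analysis showing that each admissible extension admits a generic embedding realizing independence, which is where the geometry of parallel (line) constraints enters and must be handled separately from the classical distance-rigidity argument.

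Once $\mathcal{L}_d(G) = \mathcal{M}_d(G)$ is established, the theorem is immediate: the existence of a set $D \subseteq (d-1)E$ of size $d|V|-(d+1)$ satisfying (\ref{eq:LamanIneqs}) is equivalent to $\mathcal{M}_d(G)$ admitting a basis of maximal size $d|V|-(d+1)$, which is equivalent to generic parallel rigidity of $G$ in $\R^d$. I expect the inductive realization argument in step (ii) to be by far the most delicate component, both because it must be done uniformly for all $d \geq 2$ and because the analog of Henneberg moves for parallel rigidity requires a careful genericity argument to rule out nongeneric coincidences of the chosen line directions $\mathbf{n}_{ij}^k$.
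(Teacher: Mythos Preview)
The paper does not prove Theorem~\ref{thm:LamanConds}. It is stated without proof as a known combinatorial characterization of generic parallel rigidity, attributed to the references~\cite{WhiteleyMatroidBook,WhiteleyMatroid,ErenNetwork,ErenNetwork2} (with further pointers to~\cite{KatzUnique,JacksonJordanSurvey}); the paper merely follows the statement with illustrative examples and then moves on to algorithmic consequences. So there is no ``paper's own proof'' to compare your proposal against.

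Your outline is the standard matroidal route (parallel rigidity matrix, trivial $(d{+}1)$-dimensional kernel, necessity of the counts on independent row sets, then identifying the count matroid with the generic rigidity matroid via an inductive/Henneberg-type construction), and it is broadly in line with how the cited sources establish the result. If you want to actually carry out the sufficiency step, be aware that the parallel case admits a somewhat cleaner treatment than classical bar-joint rigidity: the constraints are linear in the point positions, and one can realize independence directly by choosing generic normal directions (or, equivalently, work through Whiteley's matroid-union description of the parallel rigidity matroid), which avoids some of the delicacy you anticipate in the Henneberg case analysis. But none of this is in the present paper.
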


\indent To elucidate the conditions of Theorem~\ref{thm:LamanConds}, let us consider the simple examples provided in Figure~\ref{fig:ParallelRigidityEx}: For $\R^2$, if exists, the certificate set $D$ satisfying the conditions in Theorem~\ref{thm:LamanConds} is simply a subset of the edge set $E$. The graphs in the subfigures {\bf (b)} and {\bf (c)} are minimally parallel rigid in $\R^2$, i.e. $D = E$ is the certificate. On the other hand, the graphs in the subfigures {\bf (a)} and {\bf (d)} do not have sufficiently many edges to be parallel rigid, i.e. even if we consider $D$ to be the set of all edges $E$, we get $|D| < 2|V| - 3$. For $\R^3$, let us first consider the (triangle) graphs in the subfigure {\bf (b)}: If we set $D\subseteq2E$ to be the set of two copies of any of the two edges in $E$, and a single copy of the remaining edge (e.g., $D = \{(1,2)_1,(1,2)_2, (2,3)_1,(2,3)_2,(1,3)_1\}$, where $(i,j)_k$ denotes the $k$'th copy of the edge $(i,j)\in E$), then $D$ is a certificate. For the graph in the subfigure {\bf (c)}, $D = \{(1,2)_1,(1,2)_2,(1,3)_1,(1,3)_2,(1,4)_1,(2,3)_1,(3,4)_1,(3,4)_2,(3,5)_1,(4,5)_1,(4,5)_2\}$ satisfies the conditions of Theorem~\ref{thm:LamanConds}. Also, for the graph in the subfigure {\bf (d)}, $D = 2E$ is the certificate. On the other hand, if we consider the graph in the subfigure {\bf (a)}, $D$ can satisfy the first condition $|D| = 11$, if and only if $D = 2E\setminus\{(i,j)_k\}$, for some $i,j\in V$, $k\in{1,2}$. However, in this case, $D$ has a subset $D'$ consisting of two copies of each of the three edges in a triangle graph, which violates the condition (\ref{eq:LamanIneqs}).
\newline\indent The conditions of Theorem~\ref{thm:LamanConds} can be used to design efficient algorithms (e.g., adaptations of the pebble game algorithm~\cite{PebbleGame}, with a time complexity of $\mathcal{O}(n^2)$) for testing parallel rigidity. In Appendix~\ref{Apdx:ParallelRigidity} we detail a randomized algorithm (having time complexity $\mathcal{O}(m)$) for testing parallel rigidity. Moreover, polynomial time algorithms for finding maximally parallel rigid components of non-parallel rigid formations are provided in~\cite{MaximalRigid,KatzUnique}.
\newline\indent In the presence of noiseless pairwise line measurements, parallel rigidity (and hence, unique realizability) is equivalent to error-free location recovery. However, for real images, we are provided with {\em noisy} measurements. In this case, instead of uniqueness of the solution of a specific location estimation algorithm, we consider the following question: Is there enough information for the problem to be {\em well-posed}, i.e. if the noise is small enough, can we estimate the locations stably? For formations which are not parallel rigid, instability may result from independent scaling and translation of maximally rigid components. In this sense, we consider problem instances on parallel rigid graphs to be well-posed. 
\newline \indent Considering the existing results of parallel rigidity theory, we take the following approach for location estimation: Given a (noisy) formation $\{\Gamma_{ij}\}_{(i,j)\in E_t}$ on $G_t = (V_t,E_t)$, we first check for parallel rigidity of $G_t$, if the formation is non-rigid we extract its maximally rigid components (using the algorithm in~\cite{MaximalRigid}) and estimate the locations for the largest maximally rigid component. The details of our formulation for (stable) location estimation are provided in the next section.

\subsection{Location Estimation by SDR}
\label{sec:SDREstim}
In this section we introduce and describe our location estimation algorithm. Suppose that we are given a set of (noisy) pairwise line measurements $\{\Gamma_{ij}\}_{(i,j)\in E_t}$ on $G_t = (V_t,E_t)$. Also, for each $(i,j)\in E_t$, let $Q_{ij}\defeq I_d - \Gamma_{ij}$ denote the matrix of projection onto the $(d-1)$-dimensional subspace orthogonal to this line. Firstly, let us consider the least squares approach studied in~\cite{MicaAmitSfM,BATL2}\footnote{In~\cite{MicaAmitSfM}, the least squares approach is studied specifically for the location estimation problem in $\R^3$ with a slightly different definition for the pairwise measurements $\Gamma_{ij}$'s (also see \S\ref{sec:CamMotEst}), whereas in~\cite{BATL2}, it is studied in arbitrary dimension using {\em unnormalized} $\Gamma_{ij}$'s (i.e., $\Gamma_{ij}$'s do not necessarily satisfy $\tr(\Gamma_{ij}) = 1$).}, which is equivalent to solving the following (non-convex) program
\begin{subequations}
\label{eq:L2TransEst}
\begin{align}
\underset{{\scriptstyle \{\mathbf{t}_i\}_{i\in V_t} \subseteq \R^d}}{\text{minimize}}
& \ \ \sum_{(i,j)\in E_{t}}  \left(\mathbf{t}_i - \mathbf{t}_j\right)^TQ_{ij}\left(\mathbf{t}_i - \mathbf{t}_j\right)\\
\text{subject to} & \ \ \sum_i \mathbf{t}_i = \mathbf{0} \ , \ \sum_i\|\mathbf{t}_i\|_2 ^2= 1\ 
\end{align}
\end{subequations}
Note that we can rewrite the cost function in~(\ref{eq:L2TransEst}) as $\sum_{(i,j)\in E_{t}}  \|Q_{ij}\left(\mathbf{t}_i - \mathbf{t}_j\right)\|_2^2$ (since the projection matrices $Q_{ij}$ satisfy $Q_{ij} = Q_{ij}^TQ_{ij}$), which is why we call~(\ref{eq:L2TransEst}) the {\em least squares} approach. The constraints in~(\ref{eq:L2TransEst}) are designed to exclude the trivial case $\mathbf{t}_i = \mathbf{t}_0, \forall i\in V_t$, for some $\mathbf{t}_0 \in\R^d$. In fact, (\ref{eq:L2TransEst}) is an eigenvalue problem and hence can be efficiently solved. However, for large and noisy data sets, this formulation turns out to be ``ill-conditioned'' in the following sense: The solution has the tendency to ``collapse'', i.e. the problem is not sufficiently constrained to prevent the (less trivial, but still undesired) solutions of the form $\hat{\mathbf{t}}_i \simeq \mathbf{t}_0, \forall i\in V_t\setminus \{i^*\}$ and $\hat{\mathbf{t}}_{i^*} \simeq -\sum_{i\in V_t\setminus \{i^*\}}\hat{\mathbf{t}}_i$, where $i^*$ has a (relatively) small degree in $G_t$ (for such collapsing solutions in $\R^3$, see Figure~\ref{fig:NotreL2Cluster} for a real data set, and Figure~\ref{fig:CollapseExample} for synthetic data). For large data sets having nodes with significantly varying degrees, collapsing solutions of (\ref{eq:L2TransEst}) (sometimes occurring in more than one group of points) can be observed even for low levels of noise in the $Q_{ij}$'s. 
It is worthwhile noting that the problem of collapsing solutions is not caused by the quadratic nature of the cost function in (\ref{eq:L2TransEst}): Formulations of (\ref{eq:L2TransEst}) using sum of (unsquared) projection errors, i.e. $\sum_{(i,j)\in E_{t}}  \|Q_{ij}\left(\mathbf{t}_i - \mathbf{t}_j\right)\|_2$, as the cost function (which are also significantly harder to solve) that we studied using (heuristic) iteratively-reweighted least squares solvers exhibit even worse characteristics in terms of collapsing solutions.
\newline \indent We overcome the collapsing behavior in two steps, first by introducing non-convex ``repulsion constraints'' in (\ref{eq:L2TransEst}) and then formulating an SDR version of the resulting non-convex problem. The non-convex problem is given by
\begin{subequations}
\label{eq:NonConvexL2}
\begin{align}
\underset{{\scriptstyle \{\mathbf{t}_i\}_{i\in V_t} \subseteq \R^d}}{\text{minimize}}
& \ \ \sum_{(i,j)\in E_{t}}  \tr\left(Q_{ij}\left(\mathbf{t}_i - \mathbf{t}_j\right)\left(\mathbf{t}_i - \mathbf{t}_j\right)^T\right)\\
\text{subject to} & \ \  \|\mathbf{t}_i - \mathbf{t}_j\|_2^2 \geq c, \ \forall (i,j) \in E_t \ , \\
& \ \ \sum_i \mathbf{t}_i = \mathbf{0}
\end{align}
\end{subequations}
where $c\in\R^+$ is a constant fixing the (undetermined) scale of the solution ({\em wlog} we take $c = 1$) and the cost function is rewritten in a slightly different way. The repulsion constraints $\|\mathbf{t}_i - \mathbf{t}_j\|_2^2 \geq 1$ are non-convex constraints making the estimation problem difficult even for small-sizes. We introduce a matrix $T$ of size $dn\times dn$, whose $d\times d$ blocks are given by $T_{ij} = \mathbf{t}_i\mathbf{t}_j^T$. Consequently, $T\succeq0$ and $\rank(T) = 1$. To rewrite the cost function in~(\ref{eq:NonConvexL2}) linearly in terms of $T$, we define a Laplacian matrix $L\in\R^{dn\times dn}$, whose $d\times d$ blocks are given by
\begin{equation}
\label{eq:CostLaplacian}
L_{ij} = \begin{cases} &\hspace{0.39in} -Q_{ij} \ \hspace{0.515in} \mbox{for} \ (i,j)\in E_t \\ &\sum_{\{k:(i,k)\in E_t\}} \ Q_{ik} \ \ \hspace{0.115in}\mbox{for} \ i=j \\ & \hspace{0.5in} 0_{d} \ \hspace{0.6in} \mbox{else}\end{cases}
\end{equation}
and which satisfies $\tr(LT) = \sum_{(i,j)\in E_t}\tr(Q_{ij}\left(\mathbf{t}_i - \mathbf{t}_j\right)\left(\mathbf{t}_i - \mathbf{t}_j\right)^T)$. Note that $L$ is symmetric, since $Q_{ij} = Q_{ij}^T$ and $Q_{ij} = Q_{ji}$, and also positive semidefinite, since for $\mathbf{t}\in\R^{dn}$, $\mathbf{t}^TL\mathbf{t} = \sum_{(i,j)\in E_t}\|Q_{ij}(\mathbf{t}_i-\mathbf{t}_j)\|_2^2\geq0$. Also, for every edge $(i,j)\in E_t$, we define a matrix $C^{ij}\in\R^{dn\times dn}$, whose $kl$'th $d\times d$ block is given by
\begin{equation}
\label{eq:ConstLaplacian}
C^{ij}_{kl} = \begin{cases} &\ \ I_d  \ \ \ \  \hspace{0.07in}\mbox{for} \ (k,l) = (i,i) \ \mbox{or} \ (k,l) = (j,j) \\ &-I_d \ \ \ \ \hspace{0.045in}\mbox{for} \ (k,l) = (i,j) \ \mbox{or} \ (k,l) = (j,i) \\ & \ \ 0_{d}  \ \ \ \ \hspace{0.07in}\mbox{else}\end{cases}
\end{equation}
and which allows us to rewrite the inequality constraints $\|\mathbf{t}_i - \mathbf{t}_j\|_2^2 \geq 1$ in~(\ref{eq:NonConvexL2}), linearly in $T$ as $\tr(C^{ij}T) \geq 1$. Moreover, the equality constraint $\sum_i\mathbf{t}_i = \mathbf{0}$ can be rewritten linearly in $T$ as $\tr\left(HT\right) = 0$, for $H = J_{n}\otimes I_d$.
\newline \indent We now relax the only non-convex constraint, that is $\rank(T) = 1$, to formulate the following SDR (known as ``matrix lifting''):
\begin{subequations}
\label{eq:SDRTransEst}
\begin{align}
\underset{{\scriptstyle T \in \R^{dn\times dn}}}{\text{minimize}}
& \ \ \tr\left(LT\right)\\
\text{subject to} & \ \ \tr\left(C^{ij}T\right)\geq 1\ , \ \forall (i,j)\in E_t \ ,\\
& \ \ \tr\left(HT\right) = 0 \ , \\
& \ \ \ T \succeq 0 
\end{align}
\end{subequations}
After solving for $T^*$ in~(\ref{eq:SDRTransEst}), we compute the location estimates $\{\hat{\mathbf{t}}_i\}_{i\in V_t}$
by a deterministic rounding procedure, i.e. we compute the leading eigenvector $\hat{\mathbf{t}}$ of $T^*$ and let $\hat{\mathbf{t}}_i$ be given by the $i$'th $d\times1$ block of $\hat{\mathbf{t}}$.

\subsection{Stability of SDR}
\label{sec:SDRStability}
In this section we analyze the SDR~(\ref{eq:SDRTransEst}) in terms of exact location recovery in the presence of noiseless pairwise line information and stable recovery with noisy line information. \\
\indent We first introduce some notation to simplify our statements. Consider a set of locations $\{\mathbf{t}_i^0\}_{i\in V_t}\subseteq \R^d$ in generic position and let $\gamma_{ij}^0 = \frac{\mathbf{t}_i^0-\mathbf{t}_j^0}{\|\mathbf{t}_i^0-\mathbf{t}_j^0\|_2}$ denote the unit vector from $\mathbf{t}_j^0$ to $\mathbf{t}_i^0$. Then, the (noiseless) formation corresponding to $\{\mathbf{t}_i^0\}_{i\in V_t}$ is given by $\{\Gamma_{ij}^0\}_{(i,j)\in E_t}$, where $\Gamma_{ij}^0 = \gamma_{ij}^0(\gamma_{ij}^0)^T$. We also let $Q_{ij}^0$ denote the projection matrices onto the $(d-1)$-dimensional subspace orthogonal to $\gamma_{ij}^0$, i.e. $Q_{ij}^0 = I_d - \Gamma_{ij}^0$, $L^0$ denote the Laplacian matrix corresponding to $Q_{ij}^0$'s (cf. (\ref{eq:CostLaplacian})), and  $T_0\in\R^{dn\times dn}$ denote the matrix of the noiseless locations, i.e. $T_0 = \mathbf{t}^0(\mathbf{t}^0)^T$ where the i'th $d\times 1$ block of $\mathbf{t}^0$ is equal to $\mathbf{t}_i^0$.\\
\indent We start with the simpler exact recovery result.
\begin{proposition}[Exact Recovery in the Noiseless Case]
\label{propo:ExactRecovery}
Assume that the (noiseless) formation $\{\Gamma_{ij}^0\}_{(i,j)\in E_t}$ is parallel rigid. Then, the SDR~$(\ref{eq:SDRTransEst})$ (with $L = L^0$), followed by the deterministic rounding procedure, recovers the locations exactly, in the sense that any rounded solution of the SDR is congruent to $\{\mathbf{t}_i^0\}_{i\in V_t}$.
\end{proposition}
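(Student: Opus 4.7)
My plan is to show the SDR has optimal value zero and that any optimal $T^*$ must collapse to rank one, proportional to a centered version of the ground truth. I would proceed in four steps: (i) exhibit a feasible point achieving objective zero; (ii) use positive semidefiniteness to force $\mathrm{range}(T^*) \subseteq \ker(L^0)$; (iii) invoke parallel rigidity to pin down $\ker(L^0)$; and (iv) use the centering constraint to reduce $T^*$ to rank one.

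For step (i), since $\mathbf{t}_i^0 - \mathbf{t}_j^0$ is parallel to $\gamma_{ij}^0$ on every edge, one has $Q_{ij}^0(\mathbf{t}_i^0 - \mathbf{t}_j^0) = 0$ and hence $L^0 \mathbf{t}^0 = 0$; likewise any constant-block vector lies in $\ker(L^0)$. Setting $\tilde{\mathbf{t}}^0$ to be the centered ground truth (with $i$-th $d\times 1$ block $\mathbf{t}_i^0 - \bar{\mathbf{t}}^0$, where $\bar{\mathbf{t}}^0 \defeq \frac{1}{n}\sum_i \mathbf{t}_i^0$) and $T = \alpha\, \tilde{\mathbf{t}}^0 (\tilde{\mathbf{t}}^0)^T$ with $\alpha > 0$ large enough to meet the repulsion constraints gives a feasible point with $\tr(L^0 T) = 0$, so the SDR optimum equals zero. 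For step (ii), at any optimizer $T^*$ the identity $\tr(L^0 T^*) = \tr\bigl((L^0)^{1/2} T^* (L^0)^{1/2}\bigr) = 0$, applied to the PSD matrix $(L^0)^{1/2} T^* (L^0)^{1/2}$, forces it to vanish, yielding $L^0 T^* = 0$ and hence $\mathrm{range}(T^*) \subseteq \ker(L^0)$.

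The crux is step (iii). A vector $\mathbf{v} \in \R^{dn}$ lies in $\ker(L^0)$ iff each difference $\mathbf{v}_i - \mathbf{v}_j$ is parallel to $\gamma_{ij}^0$ on every edge of $G_t$. Parallel rigidity (Theorem~\ref{thm:LamanConds} together with the equivalence to unique realizability discussed in Appendix~\ref{Apdx:ParallelRigidity}) says precisely that the only such $\mathbf{v}$ are the trivial ones $\mathbf{v}_i = s \mathbf{t}_i^0 + \mathbf{b}$ with $s \in \R$, $\mathbf{b} \in \R^d$. Thus $\ker(L^0)$ is the $(d+1)$-dimensional subspace spanned by $\mathbf{t}^0$ and the $d$-dimensional subspace of constant-block vectors. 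This is where the parallel-rigidity hypothesis enters essentially, and I expect it to be the main conceptual step of the proof; however, because the equivalence is already established in the cited parallel-rigidity literature, here it amounts to a direct appeal.

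For step (iv), I would take a spectral decomposition $T^* = \sum_m \lambda_m \mathbf{v}^{(m)} (\mathbf{v}^{(m)})^T$ with $\lambda_m > 0$. A short block calculation gives $\tr(H T^*) = \sum_m \lambda_m \bigl\|\sum_i \mathbf{v}^{(m)}_i \bigr\|_2^2$, so the centering constraint forces every eigenvector with $\lambda_m > 0$ to have zero block sum. Intersecting this condition with the $\ker(L^0)$ description from step (iii) kills the $d$ translation directions and leaves the one-dimensional subspace $\R\, \tilde{\mathbf{t}}^0$. Therefore $T^* = \alpha\, \tilde{\mathbf{t}}^0 (\tilde{\mathbf{t}}^0)^T$ for some scalar $\alpha$, necessarily positive since at least one repulsion constraint is active. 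The deterministic rounding then returns $\hat{\mathbf{t}} = \pm \sqrt{\alpha}\, \tilde{\mathbf{t}}^0$, so each $\hat{\mathbf{t}}_i$ is a scaled, translated, possibly negated copy of $\mathbf{t}_i^0$, i.e., the rounded solution is congruent to $\{\mathbf{t}_i^0\}_{i \in V_t}$ as claimed.
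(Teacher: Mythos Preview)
Your proposal is correct and follows essentially the same approach as the paper: both arguments exploit that parallel rigidity fixes $\dim\ker(L^0)=d+1$, that positive semidefiniteness together with $\tr(L^0T^*)=0$ forces the range of $T^*$ into $\ker(L^0)$, and that the centering constraint $\tr(HT^*)=0$ then eliminates the $d$ translation directions, leaving a rank-one matrix proportional to $\tilde{\mathbf{t}}^0(\tilde{\mathbf{t}}^0)^T$. The only cosmetic difference is that the paper phrases the conclusion by exhibiting $dn-1$ orthogonal null vectors of $T^*$ (the positive-eigenvalue eigenvectors of $L^0$ together with the $d$ constant-block eigenvectors of $H$), whereas you argue dually that the range of $T^*$ is one-dimensional.
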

\begin{proof}
{\em Wlog}, we assume $\min_{(i,j)\in E_t} \|\mathbf{t}_i^0-\mathbf{t}_j^0\|_2 = 1$ and $\sum_i \mathbf{t}_i^0 = \mathbf{0}$. Then we have $\tr(L^0T_0) = 0$, i.e. $T_0$ is a minimizer of (\ref{eq:SDRTransEst}) (since $T_0$ is also feasible by construction). The parallel rigidity of the formation implies $\rank(L^0) = dn - (d+1)$, where the only eigenvector $\mathbf{t}^0$ of $T_0$ with non-zero eigenvalue and the $d$ eigenvectors $\mathbf{v}^1_H,\mathbf{v}^2_H,\ldots, \mathbf{v}^d_H$ of $H$ (in~(\ref{eq:SDRTransEst})) with nonzero eigenvalues form an orthogonal basis for the nullspace of $L^0$ (see, Appendix~\ref{Apdx:ParallelRigidity}). Let $\mathbf{u}_0^i$, $i = d+2,\ldots,dn$, denote the (normalized) eigenvectors of $L^0$ corresponding to its positive eigenvalues. Consider an arbitrary minimizer $T^*$ of (\ref{eq:SDRTransEst}). Since $\tr(L^0T^*) = \sum_{i=d+2}^{dn}  \lambda_i(L^0)(\mathbf{u}_0^i)^TT^*\mathbf{u}_0^i= 0$, where $\lambda_i(L^0) > 0$, $T^*\succeq0$ satisfies $(\mathbf{u}_0^i)^TT^*\mathbf{u}_0^i = 0$ for all $i = d+2,\ldots,dn$. Also, by the feasibility of $T^*$, we get $\tr(HT^*) = \sum_{i=1}^{d} (\mathbf{v}^i_H)^TT^*\mathbf{v}^i_H = 0$, i.e. $(\mathbf{v}^i_H)^TT^*\mathbf{v}^i_H = 0$ for all $i = 1,\ldots,d$. Hence, $\{\mathbf{v}^1_H, \ldots, \mathbf{v}^d_H, \mathbf{u}_0^{d+2}, \ldots, \mathbf{u}_0^{dn}\}$ form an orthogonal basis for the nullspace of $T^*$. This implies $\rank(T^*)=1$, where $T^*$ is of the form $T^* = \alpha T_0$  for some $\alpha \geq 1$ (by the feasibility of $T^*$), establishing the uniqueness of the solution up to scale. As a result, applying the rounding procedure to any solution of~(\ref{eq:SDRTransEst}) yields exact recovery of $\mathbf{t}_i^0$'s (of course, up to congruence).\qquad\end{proof}
\newline\indent Our next result is the stability of the SDR with noisy pairwise line information.
\newline\noindent {\bf Noise Model and Error Function:} We let each edge $(i,j)$ of the measurement graph $G_t = (V_t,E_t)$ be endowed with a line measurement $\Gamma_{ij} = \gamma_{ij}\gamma_{ij}^T$, where $\gamma_{ij} = \gamma_{ij}^0 + \epsilon_{ij}$ with $\|\epsilon_{ij}\|_2 \leq \epsilon$ and $\|\gamma_{ij}\|_2 = 1$. Also, $L_G = D_G-A_G$ denotes the Laplacian of the graph $G_t$, where $D_G$ is the (diagonal) degree matrix of $G_t$ (whose $i$'th diagonal element is equal to the degree of the $i$'th node) and $A_G$ is the adjacency matrix of $G_t$.\\
\indent This time we assume ({\em wlog}), $\tr(T_0) = 1$ and $\tr(HT_0) = 0$. For a solution $T^*$ of the SDR (\ref{eq:SDRTransEst}), we consider the following error function as our measure of stability
\begin{equation}
\label{eq:ErrorMeasure}
\delta(T^*,T_0) = \min_{c\geq 0} \|cT^* - T_0\|_F = \|\frac{\tr(T^*T_0)}{\|T^*\|_F^2}T^*-T_0\|_F
\end{equation}
We are now ready to present the main result of this section:
\begin{theorem}[Stability of SDR Solution]
\label{thm:WeakStability}
Consider a set of noisy pairwise line measurements $\{\Gamma_{ij}\}_{(i,j)\in E_t}$ related to the (noiseless) parallel rigid formation $\{\Gamma_{ij}^0\}_{(i,j)\in E_t}$ as in the noise model given above, and let $T^*$ be a solution of~$(\ref{eq:SDRTransEst})$. Then,
\begin{equation}
\label{eq:WeakStability}
\delta(T^*,T_0) \leq \epsilon\left[\alpha_1 + \left(\alpha_1^2 + 2\alpha_2\right)^{\frac{1}{2}}\right]
\end{equation}
where, the (data dependent) constants $\alpha_1$ and $\alpha_2$ are given by $\alpha_1 = \frac{\sqrt{2}m}{\lambda_{d+2}(L^0)}$ and $\alpha_2 = (\frac{\kappa\sqrt{d}\|L_{G}\|_F}{m} + 1)\frac{\lambda_{n}(L_G)}{\lambda_{d+2}(L^0)}$. Here, the parameter $\kappa$ is given by $\kappa = (\min_{(i,j)\in E_t}\|\mathbf{t}^0_i-\mathbf{t}_j^0\|_2^2)^{-1}$.
\end{theorem}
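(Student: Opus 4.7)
The plan is to establish a quadratic inequality $\delta^2 \leq 2\alpha_1\epsilon\delta + 2\alpha_2\epsilon^2$ in the unknown $\delta = \delta(T^*, T_0)$, whose larger nonnegative root is exactly the bound claimed in the theorem. The strategy mirrors the null-space argument in Proposition~\ref{propo:ExactRecovery}, but quantified through the spectral gap $\lambda_{d+2}(L^0)$. I would begin by noting that $\kappa T_0$ is feasible for (\ref{eq:SDRTransEst}): $\kappa\|\mathbf{t}_i^0 - \mathbf{t}_j^0\|_2^2 \geq 1$ by the definition of $\kappa$, $\tr(HT_0) = 0$ by $\sum\mathbf{t}_i^0 = \mathbf{0}$, and $T_0 \succeq 0$. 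Optimality of $T^*$ then gives $\tr(LT^*) \leq \kappa\tr(LT_0)$, and using $Q_{ij}^0(\mathbf{t}_i^0 - \mathbf{t}_j^0) = 0$ together with the noise model $\|\gamma_{ij}-\gamma_{ij}^0\|_2\leq\epsilon$ yields $\|Q_{ij}(\mathbf{t}_i^0-\mathbf{t}_j^0)\|_2 \leq \epsilon\|\mathbf{t}_i^0-\mathbf{t}_j^0\|_2$, so that $\tr(LT_0) = \sum\|Q_{ij}(\mathbf{t}_i^0-\mathbf{t}_j^0)\|_2^2 \leq \epsilon^2\lambda_n(L_G)$.

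Next I would decompose $T^* = aT_0 + \mathbf{t}^0\mathbf{e}^T + \mathbf{e}(\mathbf{t}^0)^T + E$ with $a = \tr(T^*T_0)$, $\mathbf{e}\perp\mathbf{t}^0$, $E\succeq 0$, and $E\mathbf{t}^0 = 0$; a direct computation gives $\delta^2\|T^*\|_F^2 = 2\|\mathbf{e}\|_2^2 + \|E\|_F^2$. The constraint $\tr(HT^*) = 0$, combined with $T^*, H \succeq 0$ and $H\mathbf{t}^0 = 0$, forces $\mathbf{e}$ and $\mathrm{range}(E)$ to be orthogonal to every $\mathbf{v}_H^k$, so --- being also orthogonal to $\mathbf{t}^0$ --- they lie inside the positive eigenspace of $L^0$, exactly as in the noiseless proof. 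Two consequences follow: (a) since $L^0\mathbf{t}^0 = 0$ the cross-terms vanish and $\tr(L^0T^*) = \tr(L^0E) \geq \lambda_{d+2}(L^0)\,\tr(E) \geq \lambda_{d+2}(L^0)\,\|E\|_F$, using $\tr(E)\geq\|E\|_F$ for PSD $E$; and (b) applying the $2\times 2$ positive-semidefinite test to the principal submatrix of $T^*$ along $\mathbf{t}^0$ and $\mathbf{e}/\|\mathbf{e}\|_2$ gives $\|\mathbf{e}\|_2^4 \leq a\,\mathbf{e}^T E\mathbf{e} \leq a\|E\|_F\|\mathbf{e}\|_2^2$, i.e.\ $\|\mathbf{e}\|_2^2 \leq a\|E\|_F$. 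Combining and using $a\leq\|T^*\|_F$, one obtains $\delta^2\|T^*\|_F^2 \leq 2\|T^*\|_F\|E\|_F + \|E\|_F^2$.

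The remaining step is the upper bound $\tr(L^0T^*) \leq \tr(LT^*) + |\tr((L-L^0)T^*)| \leq \kappa\epsilon^2\lambda_n(L_G) + |\tr((L-L^0)T^*)|$. Splitting $T^* = aT_0 + R$ in the perturbation term, the $aT_0$ piece reduces to $a|\tr(LT_0)| \leq \|T^*\|_F\epsilon^2\lambda_n(L_G)$, an $O(\epsilon^2)$ contribution accounting for the $\lambda_n(L_G)/\lambda_{d+2}(L^0)$ summand in $\alpha_2$. For the $R$ piece I would use the edge-wise identity $\tr((L-L^0)R) = \sum_{(i,j)\in E_t}\tr((Q_{ij}-Q_{ij}^0)B_{ij}^R)$ with an edge Cauchy--Schwarz, exploiting $\sum_{(i,j)\in E_t}\|Q_{ij}-Q_{ij}^0\|_F^2 \leq 2m\epsilon^2$ (which produces the $\sqrt{2}m$ in $\alpha_1$ after conversion), then pass from the edge-block Frobenius norm of $R$ back to $\|R\|_F = \delta\|T^*\|_F$ through the block-Laplacian structure of $L-L^0$ (this conversion is where $\sqrt{d}\|L_G\|_F$ enters the second term of $\alpha_2$). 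Substituting $\|E\|_F \leq \tr(L^0T^*)/\lambda_{d+2}(L^0)$ into the inequality from (b) and dividing through by $\|T^*\|_F^2$ gives the quadratic $\delta^2 \leq 2\alpha_1\epsilon\delta + 2\alpha_2\epsilon^2$, and the quadratic formula produces $\delta \leq \epsilon[\alpha_1 + (\alpha_1^2+2\alpha_2)^{1/2}]$. The main obstacle I anticipate is the bookkeeping in this last step: tracking how the factors of $\sqrt{m}$, $\sqrt{d}\|L_G\|_F$, and $\kappa$ combine so as to match the stated $\alpha_1, \alpha_2$ exactly, and ensuring the argument does not require any a-priori bound on $\|T^*\|_F$ (which it does not, because $\delta$ and the ratio $\|E\|_F/\|T^*\|_F$ are scale-invariant).
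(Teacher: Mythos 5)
Your plan has the same skeleton as the paper's proof (Appendix~\ref{Apdx:SDRStability}): decompose $T^*$ relative to the null space of $L^0$, control the component supported on the positive eigenspace via $\tr(L^0T^*)\geq\lambda_{d+2}(L^0)\tr(E)$, control the cross term by a $2\times2$ positive semidefiniteness (Schur complement) argument, bound $\tr(L^0T^*)$ by combining feasibility of $\kappa T_0$ with optimality of $T^*$ and the perturbation $L-L^0$, and finish with the quadratic formula. The ingredients $\tr(LT_0)\leq\epsilon^2\lambda_n(L_G)$, $\|\Gamma_{ij}-\Gamma_{ij}^0\|_F\leq\sqrt{2}\epsilon$, and $\|R\|_F=\delta\|T^*\|_F$ are all correct and are exactly what the paper uses.

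There is, however, one genuine gap: your closing claim that ``the argument does not require any a-priori bound on $\|T^*\|_F$'' is false, and it hides the step that actually produces the first summand of $\alpha_2$. The quantity $\tr(LT^*)\leq\kappa\tr(LT_0)\leq\kappa\epsilon^2\lambda_n(L_G)$ is an \emph{absolute} bound --- it does not scale with $T^*$ --- so when you divide $\tr(L^0T^*)$ by $\lambda_{d+2}(L^0)\|T^*\|_F$ to bound $\eta=\|E\|_F/\|T^*\|_F$, this summand becomes $\kappa\epsilon^2\lambda_n(L_G)/(\lambda_{d+2}(L^0)\|T^*\|_F)$ and cannot be expressed in terms of the stated constants without a lower bound on the scale of $T^*$. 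That lower bound must come from feasibility: $m\leq\sum_{(i,j)}\tr(C^{ij}T^*)=\tr((L_G\otimes I_d)T^*)\leq\sqrt{d}\|L_G\|_F\,\tr(T^*)$, i.e. $\tr(T^*)\geq m/(\sqrt{d}\|L_G\|_F)$ (see~(\ref{eq:TryBnd2Cst})). This --- not a conversion between edge-block and global Frobenius norms of $R$ --- is where the factor $\kappa\sqrt{d}\|L_G\|_F/m$ in $\alpha_2$ originates; the perturbation term needs no such conversion, since the single global Cauchy--Schwarz $|\tr((L-L^0)R)|\leq\|L-L^0\|_F\|R\|_F$ with $\|L-L^0\|_F\leq\sum_{i\sim j}\|\Gamma_{ij}-\Gamma_{ij}^0\|_F\leq\sqrt{2}m\epsilon$ already yields $\alpha_1$. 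Two smaller quantitative points: the paper normalizes by $\tilde{c}=(\tr(T^*))^{-1}$ rather than $\|T^*\|_F^{-1}$, which makes the three pieces of $\|\tilde{c}T^*-T_0\|_F^2$ sum \emph{exactly} to $2\tr(Y)$ (using $a+\tr(Y)=1$), whereas your version leaves an extra $\eta^2$ in $\delta^2\leq2\eta+\eta^2$ that must be absorbed and slightly degrades the constants; and since $\|T^*\|_F\leq\tr(T^*)$ for $T^*\succeq0$, normalizing by $\|T^*\|_F$ makes the $\epsilon^2$ term harder, not easier, to control. With the feasibility lower bound inserted and the trace normalization adopted, your argument becomes the paper's.
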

\newline\indent{\em Proof}. See Appendix~\ref{Apdx:SDRStability}.
\newline\indent We can obtain the stability of the estimated locations, i.e. the rounded solution of~(\ref{eq:SDRTransEst}), as a corollary of Theorem~\ref{thm:WeakStability}:
\begin{corollary}[Location Stability]
\label{corol:LocationStability}
Let $T^*$ be as in Theorem~$\ref{thm:WeakStability}$ and $\hat{\mathbf{t}}$ denote its normalized eigenvector corresponding to its largest eigenvalue. Then
\begin{equation}
\label{eq:LocationStability}
\min_{a\in \R} \|a\hat{\mathbf{t}} - \mathbf{t}^0\|_2 \leq \epsilon \left[\frac{\pi d(n-1)}{2}\left(\alpha_1 + \left(\alpha_1^2 + 2\alpha_2\right)^{\frac{1}{2}}\right)\right]
\end{equation}
\end{corollary}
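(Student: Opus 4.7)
The plan is to transfer the matrix-level Frobenius bound from Theorem~\ref{thm:WeakStability} to a bound on the leading eigenvector $\hat{\mathbf{t}}$ of $T^*$. Under the normalization $\tr(T_0) = 1$ used in that theorem, we have $\|\mathbf{t}^0\|_2 = 1$, so $T_0 = \mathbf{t}^0(\mathbf{t}^0)^T$ is rank-one with $\mathbf{t}^0$ its unique unit eigenvector corresponding to the non-zero eigenvalue $1$. Theorem~\ref{thm:WeakStability} supplies $\|c^*T^* - T_0\|_F \leq \delta_m$, where $\delta_m \defeq \epsilon[\alpha_1 + (\alpha_1^2 + 2\alpha_2)^{1/2}]$ and $c^* = \tr(T^*T_0)/\|T^*\|_F^2 \geq 0$. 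The task reduces to converting this matrix-norm closeness into control of the angle $\theta \defeq \angle(\hat{\mathbf{t}}, \mathbf{t}^0)$, then repackaging that angle bound as the stated vector-level estimate.

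For the angle bound, I would invoke an eigenvector perturbation argument. Weyl's inequality applied to $c^*T^*$ and $T_0$ forces the top eigenvalue of $c^*T^*$ to lie within $\delta_m$ of $1$ and every remaining eigenvalue to lie within $\delta_m$ of $0$, so the spectral gap of $c^*T^*$ between its top and second eigenvalue is at least $1 - 2\delta_m$. A Davis--Kahan-style $\sin\Theta$ bound (or an equivalent direct perturbation-of-invariant-subspaces computation) then yields $|\sin\theta|$ of order $\delta_m$. Since both $\hat{\mathbf{t}}$ and $\mathbf{t}^0$ are unit vectors, the minimizer of $\|a\hat{\mathbf{t}} - \mathbf{t}^0\|_2$ over $a \in \R$ is $a^* = \langle \hat{\mathbf{t}}, \mathbf{t}^0\rangle = \cos\theta$ and the minimum equals $|\sin\theta|$; the chord-arc inequality $|\sin\theta| \leq \theta \leq \tfrac{\pi}{2}|\sin\theta|$ on $[0,\pi/2]$ then accounts for the $\pi/2$ factor in the stated constant.

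The main obstacle I anticipate is accounting for the extra $d(n-1)$ factor, which is loose compared with the dimension-free estimate $|\sin\theta| = O(\delta_m)$ that a tight Davis--Kahan argument produces. This factor most likely arises from a more elementary, dimension-aware derivation: the centering constraints $\tr(HT^*) = 0$ and $\sum_i \mathbf{t}_i^0 = \mathbf{0}$ confine both $\hat{\mathbf{t}}$ and $\mathbf{t}^0$ to the $d(n-1)$-dimensional subspace orthogonal to the $d$-dimensional null space of $H = J_n\otimes I_d$, and converting between vector or matrix norms over this subspace (for instance via an $\ell_2$-to-$\ell_\infty$ or Frobenius-to-operator inequality applied block-wise across the $n-1$ effective location degrees of freedom and the $d$ ambient dimensions) introduces the $d(n-1)$ factor in exchange for a self-contained, constant-explicit bound. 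Combining this dimensional conversion with the $\pi/2$ chord-arc factor and the angle estimate $|\sin\theta| = O(\delta_m)$ assembles the stated bound~\eqref{eq:LocationStability}.
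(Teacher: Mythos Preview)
Your overall plan---reduce to an angle bound and invoke a Davis--Kahan $\sin\Theta$ theorem---matches the paper's approach, and you correctly observe that $\min_{a}\|a\hat{\mathbf t}-\mathbf t^0\|_2=|\sin\theta|$. However, your account of where the constants $\tfrac{\pi}{2}$ and $d(n-1)$ come from does not match the paper and, in the case of $d(n-1)$, your proposed mechanism (block-wise norm conversions over the $d(n-1)$-dimensional centered subspace) would not produce that factor in the stated place.

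In the paper, both constants come out of a single application of Davis--Kahan to the pair $(T_0,\tilde c\,T^*)$ with the \emph{suboptimal} scaling $\tilde c=(\tr T^*)^{-1}$, not the optimal $c^*$. Taking $S_0=\{0\}$ for $T_0$ and $S^*=\{\lambda_{dn}(\tilde c\,T^*)\}$ for $\tilde c\,T^*$, the separation in the Davis--Kahan bound is $\rho(S_0,S^*)=\lambda_{dn}(T^*)/\tr(T^*)$, so
\[
|\sin\theta|=\|(I-\mathbf t^0(\mathbf t^0)^T)\hat{\mathbf t}\hat{\mathbf t}^T\|_F\le\frac{\pi}{2}\cdot\frac{\tr(T^*)}{\lambda_{dn}(T^*)}\cdot\|\tilde c\,T^*-T_0\|_F.
\]
The $\tfrac{\pi}{2}$ is the constant in the projector form of Davis--Kahan (Bhatia, Ch.~7), not a chord--arc conversion. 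The factor $d(n-1)$ then enters via the elementary inequality $\tr(T^*)/\lambda_{dn}(T^*)\le\operatorname{rank}(T^*)$, together with $\tr(HT^*)=0$, which forces $T^*$ to live in a $d(n-1)$-dimensional subspace and hence $\operatorname{rank}(T^*)\le d(n-1)$. Finally, $\|\tilde c\,T^*-T_0\|_F$ is bounded by the same quantity $\epsilon[\alpha_1+(\alpha_1^2+2\alpha_2)^{1/2}]$ established in the proof of Theorem~\ref{thm:WeakStability} (that proof actually bounds $\|\tilde c\,T^*-T_0\|_F$, not just $\delta(T^*,T_0)$). Your alternative route through the gap $1-2\delta_m$ of $c^*T^*$ would in fact give a sharper, dimension-free estimate when $\delta_m<1/2$, but it is not what the paper does and it would not reproduce the $d(n-1)$ factor you are trying to explain.
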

\begin{proof}
We use a particular implication of the Davis-Kahan theorem (see, e.g., Ch. 7 in~\cite{BhatiaBook}) in order to relate the recovery error in the rounded solution to the error in the solution of the SDR. To this end, observe that 
\begin{equation}
\label{eq:FroVSEuc}
\min_{a\in \R} \|a\hat{\mathbf{t}} - \mathbf{t}^0\|_2 = \|((\mathbf{t}^0)^T\hat{\mathbf{t}})\hat{\mathbf{t}} - \mathbf{t}^0\|_2 = \|(I_{dn} - \mathbf{t}^0(\mathbf{t}^0)^T)\hat{\mathbf{t}}\hat{\mathbf{t}}^T\|_F
\end{equation}
For a given symmetric matrix $A$ and a subset $S$ of the real line, let $P_A(S)$ denote the projection matrix onto the subspace spanned by the eigenvectors of $A$, whose eigenvalues are in $S$. Then Davis-Kahan theorem implies
\begin{equation}
\label{eq:DavisKahanOrig}
\|P_{A}(S_1)P_B(S_2)\|_F \leq \frac{\pi}{2\rho(S_1,S_2)}\|A-B\|_F \ \,
\end{equation}
where $\rho(S_1,S_2) = \min\{|x-y|:x\in S_1, y\in S_2\}$. In our case, if we let $S_0 = \{0\}$ for $T_0$ and $S^* = \{\lambda_{dn}(\tilde{c}T^*)\}$ for $\tilde{c}T^*$, where $\tilde{c} = (\tr(T^*))^{-1}$, we obtain
\begin{equation}
\label{eq:SpecDavisKahan}
\|P_{T_0}(S_0)P_{\tilde{c}T^*}(S^*)\|_F = \|(I_{dn} - \mathbf{t}^0(\mathbf{t}^0)^T)\hat{\mathbf{t}}\hat{\mathbf{t}}^T\|_F \leq \frac{\pi\tr(T^*)}{2\lambda_{dn}(T^*)}\|\tilde{c}T^*-T_0\|_F \ \,
\end{equation}
Here, we use the fact $\frac{\tr(T^*)}{\lambda_{dn}(T^*)} \leq \rank(T^*)$ and the feasibility of $T^*$, i.e. that $\tr(HT^*) = 0$, to get $\frac{\tr(T^*)}{\lambda_{dn}(T^*)} \leq d(n - 1)$ (in fact, we can construct a solution $T^*$ of the SDR satisfying the stronger bound $\rank(T^*) \leq (\sqrt{8(m+1)+1}-1)/2$, see e.g.~\cite{PatakiRankBnd}, however we ignore this slight improvement for simplicity). Also, considering~(\ref{eq:FroVSEuc}) and~(\ref{eq:WeakStaPoly}) from the proof of Theorem~\ref{thm:WeakStability}, we recover the claim of the corollary.
\qquad\end{proof}
\begin{remark} 
{\rm \hspace{0.1in}We note that the bounds in Theorem~\ref{thm:WeakStability} and Corollary~\ref{corol:LocationStability} are quite loose when compared to our experimental observations. Nevertheless, the recovery error is within a constant factor of the noise level $\epsilon$. Also observe that Proposition~\ref{propo:ExactRecovery}, i.e. exact recovery in the noiseless case, is implied by Theorem~\ref{thm:WeakStability} when $\epsilon = 0$.}
\end{remark}
\begin{remark} 
{\rm \hspace{0.1in}The proximity of the solution $T^*$ of~(\ref{eq:SDRTransEst}) to the space of positive semidefinite rank-$1$ matrices can be considered as a measure of the quality of the relaxation. In our experiments with real images and simulated data, we make the following observations: As long as the noise level $\epsilon$ is below some threshold, we always get $\rank(T^*) = 1$, i.e. we can actually solve the non-convex problem efficiently by the SDR~(\ref{eq:SDRTransEst}). For higher levels of noise, $T^*$ is no longer a rank-1 matrix, but it typically has a large spectral gap $(\lambda_{dn}(T^*) - \lambda_{dn-1}(T^*))/\lambda_{dn}(T^*)$. In other words, we do not observe a sharp phase transition in the quality of the SDR, and the relaxation is stable under various noise models. Figure~\ref{fig:SpectralGap} provides an experimental evaluation in $\R^3$ of our observations about the stability of relaxation, using synthetic measurements generated under the noise model~(\ref{eq:NoiseModel}) of \S\ref{sec:SyntExp} (and assuming $p = 0$ in~(\ref{eq:NoiseModel}), i.e. no outlier measurements, for simplicity) for graphs of $n = 50$ nodes and various edge density $\theta = \frac{2m}{n(n-1)}$. We observe that even if the location recovery performance (represented by {\em normalized root mean squared error} (NRMSE) defined in~(\ref{eq:NRMSE})) degrades as the noise level increases, the tightness of the relaxation is preserved up to relatively high noise levels.}
\end{remark}
\begin{figure}[!htbp]
\begin{center}
   \includegraphics[trim=1.4cm 0.5cm 0.5cm 1cm, clip=true, width=0.7\linewidth]{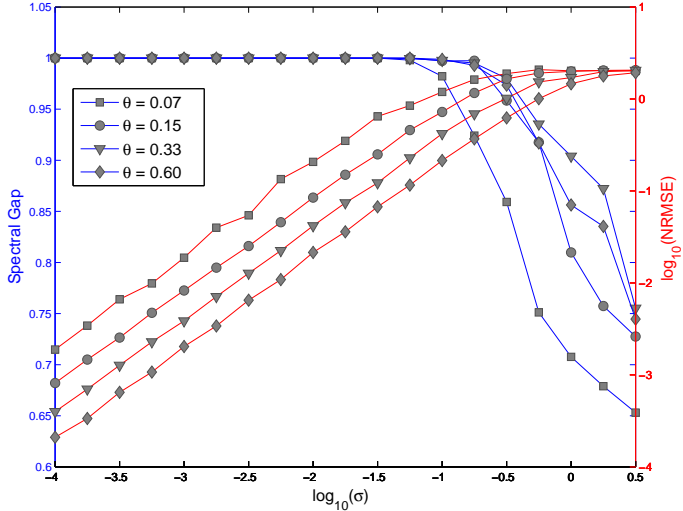}
\end{center}
   \caption{Spectral gap $(\lambda_{3n}(T^*) - \lambda_{3n-1}(T^*))/\lambda_{3n}(T^*)$ of solutions $T^*$ of~{\rm(\ref{eq:SDRTransEst})} and logarithmic recovery error $\log_{10}(\mbox{NRMSE})$ (see~{\rm(\ref{eq:NRMSE})} for NRMSE) versus noise level $\log_{10}(\sigma)$, for graphs with $n = 50$ nodes and various edge density $\theta$ (results are averaged over $10$ random realizations of noisy line measurements and locations)\label{fig:SpectralGap}}
\end{figure}

\subsection{Alternating Direction Augmented Lagrangian Method}
\label{sec:ADM}
The SDR (\ref{eq:SDRTransEst}) is solvable in polynomial time by the classical primal-dual interior-point SDP algorithms (e.g.~\cite{SDPT32}). However, in case of a dense measurement graph $G_t$ (i.e., assuming $m = \mathcal{O}(n^2)$), the interior-point methods become impractical for large number of locations, with a time complexity of $\mathcal{O}(n^6)$ (and a space complexity of $\mathcal{O}(n^4)$) for each iteration of the algorithm. In practice, the computational bottleneck becomes an issue for problem sizes of $n \geq 200$. In this respect, here we provide the details of {\em alternating direction augmented Lagrangian method} (ADM), which is a first-order method with superior computational efficiency~\cite{WenADM}. ADM is an iterative algorithm based on minimization of an augmented Lagrangian function of the dual SDP. In comparison to interior-point methods that aim to satisfy complementary slackness while maintaining primal-dual feasibility at each iteration, ADM aims to construct a primal-dual feasible pair while maintaining complementary slackness. At each iteration, ADM minimizes the dual augmented Lagrangian function first with respect to dual variables, then with respect to dual slack variables and finally updates the primal variables. In the minimization over each variable, ADM regards the other variables as fixed. \\
\indent  In order to obtain an ADM framework for the SDP~(\ref{eq:SDRTransEst}), we rewrite it in standard form and procure the ADM framework (involving variables of larger dimensions) for standard form SDPs developed in~\cite{WenADM}. Such an approach yields a (provably) convergent algorithm, however, in general it has a high computational cost (due to the high dimensionality of the variables associated with the standard form SDP). In our case, we are able to simplify the ADM framework for the standard form of~(\ref{eq:SDRTransEst}) significantly and hence do not artificially increase the computational cost by rewriting~(\ref{eq:SDRTransEst}) in standard form (we also experimentally observed that the ``ad-hoc'' approach in~\cite{WenADM} developed for SDPs involving inequality constraints, which is at least as costly as our resulting algorithm, did not produce a convergent algorithm for~(\ref{eq:SDRTransEst})). We provide the details of rewriting~(\ref{eq:SDRTransEst}) in standard form, constructing the ADM framework for the augmented Lagrangian of its dual and the simplification of the resulting algorithm in Appendix~\ref{Apdx:ADMdetails}. A pseudo-code version of our ADM algorithm is given below (see Appendix~\ref{Apdx:ADMdetails} for the linear operators $\tilde{\mathcal{B}},\tilde{\mathcal{B}}^*$ and the efficient computation of $(\tilde{\mathcal{B}}\tilde{\mathcal{B}}^* + I)^{-1}$).\\
\begin{algorithm}[!htbp]
\caption{Alternating direction augmented Lagrangian method (ADM) for SDP~(\ref{eq:SDRTransEst})\label{alg:ADM}}
\begin{algorithmic}
\STATE Initialize: $T^0\succeq0$ s.t. $\tr(HT^0) = 0$, $R^0\succeq0$, $\mathbf{\nu}^0 \geq \mathbf{0}_{m}$ and $\mathbf{\eta}^0 \geq \mathbf{0}_{m}, \mu > 0$ \vspace{0.1in}
\FOR{$k = 0,1,\ldots$} \vspace{0.05in}
\STATE $\mathbf{z}^{k+1} \ \leftarrow \ -\left(\tilde{\mathcal{B}}\tilde{\mathcal{B}}^* + I\right)^{-1}\left(\frac{1}{\mu}(\tilde{\mathcal{B}}(T^k)-\mathbf{\nu}^k-\mathbf{1}_{m}) + \tilde{\mathcal{B}}(R^k-L)-\mathbf{\eta}^k\right)$ \vspace{0.05in}
\STATE $F^{k+1} \ \leftarrow \ L - \frac{1}{\mu}T^k-\tilde{\mathcal{B}}^*(\mathbf{z}^{k+1})$
\STATE \hspace{0.15in} $\left\lfloor \begin{aligned} &\mbox{Compute the spectral decomposition of} \ F^{k+1}: \\
& F^{k+1} = \left[\begin{smallmatrix}V_+ & V_-\end{smallmatrix}\right]\left[\begin{smallmatrix}D_+ & 0 \\ 0 & D_-\end{smallmatrix}\right]\left[\begin{smallmatrix}V_+^T \\ V_-^T\end{smallmatrix}\right], \ \mbox{where} \ \diag(D_+) > \mathbf{0} \end{aligned} \right.$ \vspace{0.05in}
\STATE $R^{k+1} \ \leftarrow \ V_+D_+V_+^T$ \vspace{0.05in}
\STATE $\mathbf{\eta}^{k+1} \ \leftarrow \ \max\left\{\mathbf{z}^{k+1} - \frac{1}{\mu}\mathbf{\nu}^k,\mathbf{0}_{m}\right\}$ \vspace{0.05in}
\STATE $T^{k+1} \ \leftarrow \ -\mu V_-D_-V_-^T$  \vspace{0.05in}
\STATE $\mathbf{\nu}^{k+1} \ \leftarrow \ -\mu \min\left\{\mathbf{z}^{k+1} - \frac{1}{\mu}\mathbf{\nu}^k,\mathbf{0}_{m}\right\}$ \vspace{0.05in}
\ENDFOR
\end{algorithmic}
\end{algorithm}
\indent We refer the reader to~\cite{WenADM} for practical details related to termination rules using measures of infeasibility, stagnation detection, updating the parameter $\mu$ for faster convergence, additional step size parameter used to update the primal variables $T^k$ and $\nu^k$, and also for convergence analysis of ADM. Considering the convergence rate analysis of ADM provided in~\cite{ADMrate}, we need $\mathcal{O}(1/\epsilon)$ iterations in order to achieve an $\epsilon$ accuracy. Note that, at each iteration, the most computationally expensive step of Algorithm~\ref{alg:ADM} is the spectral decomposition of $F^{k+1}$. However, since we experimentally observe a stable SDP relaxation resulting in a low-rank primal solution $T^*$, computation of $V_-$ and $D_-$ can be greatly simplified by computing only a few negative eigenvalues of $F^{k+1}$ (e.g., by using Arnoldi iterations~\cite{ArnoldiIters}). As a result, assuming $\mathcal{O}(n^3)$ complexity for spectral decomposition, the time complexity of $\mathcal{O}(n^3/\epsilon)$ (already significantly less compared to interior point methods) can be even further reduced.

\section{Distributed Approach}
\label{sec:DistApproach}
The ADM framework introduced in \S\ref{sec:ADM} provides a computationally feasible alternative to classical SDP solvers and allows us to solve the SDR~(\ref{eq:SDRTransEst}) beyond $n \simeq 200$. However, for large sets of images ($n\gg1000$), the need for a distributed algorithm is apparent. In this section, we provide the details of a distributed algorithm for translation estimation, based on spectral graph partitioning and  convex programming.\\
\indent The main structure of our distributed location estimation algorithm is the following: Given a maximum problem size, i.e. an integer $N_{max}$ denoting the maximum number of locations our computational resources can efficiently estimate by~(\ref{eq:SDRTransEst}), we first partition $V_t$ into subsets (that we call ``patches'') of sizes at most $N_{max}$ (by maintaining sufficient connectivity in the induced subgraphs and sufficient overlaps between patches). Then, for each induced subgraph, we extract the maximally parallel rigid components. We then find for each rigid component the ``local'' coordinate estimates by the SDR~(\ref{eq:SDRTransEst}).
Finally, we stitch the local estimates into a global solution by convex programming.\\
\indent We note that, the main idea of our distributed approach, i.e. division of the problem into smaller subproblems and then constructing the global solution from the local solutions, is also adapted for various problems in the literature (see, e.g.,~\cite{AmitMihailSensors}). However, depending on the structure and the challenges of the specific problem studied, these distributed methods usually have significant differences. For instance, as compared to~\cite{AmitMihailSensors}, while the same algorithm (namely {\em the eigenvector method} (EVM)) is used in our approach to remove the pairwise sign ambiguities between local solutions (cf. \S\ref{sec:PatchReg}), the steps involving the graph partitioning and extraction of well-posed local problems, computation of local solutions, and estimation of global locations from (sign corrected) local estimates are significantly different. 

\subsection{Graph Partitioning}
\label{sec:GraphPartition}
In order to partition $V_t$ into sufficiently overlapping subsets (for high quality global reconstruction) with sufficiently dense induced graphs (for high quality local estimation) of sizes bounded by $N_{max}$, we use the following algorithm, which bears a resemblance with the graph partitioning algorithm of \cite{GraphDISCO}. Starting with $\mathcal{G}_t^1 = \{G_t\}$, at the $k$'th iteration partition each graph in $\mathcal{G}_t^k$ (where, $\mathcal{G}_t^k$ denotes the set of graphs to be partitioned) into two subgraphs using the spectral clustering algorithm of~\cite{SpectCluster}. Then, extend the computed partitions to include the $1$-hop neighborhoods of their vertices in $G_t$ (and, of course, the induced edges). Assign the (extended) partitions with sizes smaller than $N_{max}$ to the set of patches, and those with sizes larger than $N_{max}$ to $\mathcal{G}_t^{k+1}$. Repeat until there is no subgraph left to partition, i.e. until the $K$'th iteration, where $\mathcal{G}_t^{K+1} = \emptyset$.
\newline \indent After the partitioning step, we extract the maximally parallel rigid components of each patch as described in \S\ref{sec:ParRigidity} (after this stage we use the term patch for parallel rigid patches). We then remove the patches that are subsets of another patch from the set of patches. We also remove the patches that do not have sufficient overlap (i.e. overlap size $\geq 2$, also see next section) with any other patch (which happens very rarely and is required since they cannot be used in the global location estimation). At this stage, we get a patch graph $G_P = (V_P,E_P)$, where $V_P$ denotes the patches and $(i,j)\in E_P$ if and only if there is sufficient overlap between the patches $P_i$ and $P_j$. Here, if $G_P$ is not connected (which was never the case in our experiments), we can either extract the largest connected component of $G_P$ or extend the patches to include their $1$-hop neighborhoods until $G_P$ is connected for the next steps of our algorithm. We then compute the ``local'' coordinate estimates for these rigid patches (whose negation signs, scales and translations with respect to the global coordinate system are undetermined at this stage) by solving the SDR~(\ref{eq:SDRTransEst}). The computation of the local coordinates for each patch can be done in parallel in a multi-core processing environment, where each processing unit computes the local coordinates of one or more patches.

\subsection{Pairwise Patch Registration}
\label{sec:PatchReg}
After solving the SDR~(\ref{eq:SDRTransEst}) for each patch $P_i$, we obtain estimates $\{\hat{\mathbf{t}}_k^i\}_{k\in P_i}$ of the representations $\{\mathbf{t}_k^i\}_{k\in P_i}$ of the locations in the coordinate system of each patch. The representations $\{\mathbf{t}_k^i\}_{k\in P_i}$ satisfy 
\begin{equation}
\label{eq:LocalCoorSysRepres}
\mathbf{t}_k = c^i\mathbf{t}_k^i + \mathbf{t}^i \ , \ \ k\in P_i\ , i\in V_P
\end{equation}
where $\mathbf{t}_k$ denotes the global coordinates of the $k$'th location, and $c^i$ and $\mathbf{t}^i$ denote the {\em signed} scale and translation of patch $P_i$, respectively (we use the signed scale, i.e. $c^i\in\R$ can assume negative values, because of the unknown negation). Given $\{\hat{\mathbf{t}}_k^i\}_{k\in P_i}$, $i\in V_P$, our objective is to estimate the locations $\{\mathbf{t}_k\}_{k\in\bigcup P_i}$ by formulating an efficient algorithm, which will be robust to possible outliers in the estimates $\hat{\mathbf{t}}_k^i$. In this respect, firstly observe that any algorithm designed to minimize the errors in the linear equations~(\ref{eq:LocalCoorSysRepres}) should also exclude trivial solutions of the form $c^i = 0$ and $\mathbf{t}_k = \mathbf{t}^i = \mathbf{t}$ (for some $\mathbf{t}\in\R^d$), for all $i$ and $k$. However, similar to the location estimation from noisy pairwise lines problem, the existence of this null-space (spanned by the trivial solutions) results in collapsing solutions for under-constrained optimization programs. As in the case of the least squares solver for the location estimation problem, we experimentally observed such collapsing solutions for the spectral method designed to minimize the sum of squared $\ell_2$ norms of the errors in equations~(\ref{eq:LocalCoorSysRepres}) by excluding the solutions in the null-space.
\newline\indent Collapsing solutions can be avoided simply by requiring $|c^i|^2 \geq 1$, for all $i \in V_P$, which is a non-convex constraint. Similar to the construction of the SDR~(\ref{eq:SDRTransEst}), the non-convexity (resulting from the unknown patch signs allowing $c^i$ to assume negative values) can be resolved by using matrix lifting. An efficient method in this direction is the adaptation of the partial matrix lifting (only applied to the variables involved in non-convex constraints) method of~\cite{KunalRegistration} to our problem. In this method, using the sum of squared $\ell_2$ norms of the errors in equations~(\ref{eq:LocalCoorSysRepres}) as the cost function, the unconstrained variables ($\mathbf{t}_k$'s and $\mathbf{t}^i$'s) are analytically computed as functions of the constrained variables (in our case, $c^i$'s) and the resulting quadratic form (in $c^i$'s) is used to define a matrix lifting relaxation for the constrained variables (see~\cite{KunalRegistration} for further details). However, this time, instead of using a matrix lifting method, we pursue a different approach: To overcome the non-convexity in $|c^i|^2\geq 1$, we first estimate the unknown sign of each patch $P_i$ and then impose the convex constraints $c^i \geq 1$ for the sign-corrected patches (i.e. patches with the local estimates $\hat{\mathbf{t}}_k^i$ replaced with $\hat{z}^i\hat{\mathbf{t}}_k^i$, where $\hat{z}^i \in \{-1,+1\}$ is the estimate of the negation $z^i = \mbox{sign}(c^i)$ of patch $P_i$). Estimation of patch signs from pairwise signs (see~(\ref{eq:PatchReg}) for pairwise sign estimation) is performed using the eigenvector method (EVM) (see, e.g., \cite{AmitMihailSensors}), which is a robust and efficient spectral algorithm allowing a reliable estimation of patch signs. Using the estimated signs, we can minimize the sum of {\em unsquared} $\ell_2$ norms in equations~(\ref{eq:LocalCoorSysRepres}) (which cannot be used as a convex cost function in the matrix lifting approach), and hence maintain robustness to outliers in the estimates $\hat{\mathbf{t}}_k^i$. In our experiments with simulated data and real images, this two step formulation produced more accurate location estimates compared to the matrix lifting alternative (with similar running times, since the partial matrix lifting results in a semidefinite program with a matrix variable of size $|V_P|\times|V_P|$), making it our choice for stitching the global solution. We now provide the details of the sign estimation procedure, whereas the final step of location estimation from sign-corrected patches is covered in \S\ref{sec:GlobalStitch}. 
\newline\indent In order to estimate the patch signs $\{z^i\}_{i\in V_P}$, the relative pairwise signs $z^{ij} = z^iz^j$, $(i,j)\in E_P$, are estimated first. This is accomplished by solving the following least squares problem for each $(i,j)\in E_P$ 
\begin{equation}
\label{eq:PatchReg}
\underset{{\scriptstyle c^{ij} \in\R, \mathbf{t}^{ij}\in\R^d}}{\text{minimize}}
\ \ \sum_{k\in P_i\cap P_j}\left\| \hat{\mathbf{t}}_k^i - \left(c^{ij}\hat{\mathbf{t}}_k^j + \mathbf{t}^{ij}\right)\right\|_2^2
\end{equation}
where $c^{ij}, \mathbf{t}^{ij}$ denote the relative (signed) scale and translation between $P_i$ and $P_j$, respectively. The relative sign estimate $\hat{z}^{ij} \in \{-1,+1\}$ is given by $\hat{z}^{ij} = \mbox{sign}((c^{ij})^*)$.
\newline\indent Using the relative sign estimates $\{\hat{z}^{ij}\}_{(i,j)\in E_P}$, the sign estimates $\{\hat{z}^i\}_{i\in V_P}$ are computed by EVM, which is a spectral method for finding signs with the goal of satisfying as many equations of the form $\hat{z}^i\hat{z}^j = \hat{z}^{ij}$ for $(i,j)\in E_P$ as possible (see \cite{AmitMihailSensors} for the details). Here, we note that, although the sum of squared norms cost function in~(\ref{eq:PatchReg}) can be replaced by the sum of (unsquared) norms cost to improve robustness to outliers in $\hat{\mathbf{t}}_k^i$'s, we prefer the more efficient least squares version (in fact, (\ref{eq:PatchReg}) has a closed-form solution) since we did not experimentally observe a significant improvement in the accuracy of signs estimated by EVM. 

\subsection{Global Stitching of Local Patches}
\label{sec:GlobalStitch}
Stitching the local patches into a globally consistent $d$-dimensional map comprises the last step of our distributed approach. As we discussed in \S\ref{sec:PatchReg}, we aim to efficiently estimate the global locations $\mathbf{t}_k$ using the linear equations~(\ref{eq:LocalCoorSysRepres}), while maintaining robustness to outliers in $\hat{\mathbf{t}}_k^i$'s and preventing collapsing solutions. In this respect, using the estimated patch signs (i.e., estimates of signs of $c^i$) in~(\ref{eq:LocalCoorSysRepres}), we maintain robustness by minimizing sum of (unsquared) norms of errors in equations~(\ref{eq:LocalCoorSysRepres}), while simply constraining $c^i\geq 1$ to prevent collapse. Hence, we solve the following convex program (using, e.g.~\cite{SDPT32}), in which we jointly estimate the scales $\{c^i\}_{i\in V_P}$ and translations $\{\mathbf{t}^i\}_{i\in V_P}$ of the sign-corrected patches (i.e., patches with the local estimates $\hat{\mathbf{t}}_k^i$ replaced with $\hat{z}^i\hat{\mathbf{t}}_k^i$) and the global locations $\{\mathbf{t}_{k}\}_{k\in \bigcup P_i}$ 
\begin{subequations}
\label{eq:JointlyScTrSynch}
\begin{align}
\underset{{\scriptstyle \left\{ \mathbf{t}_k, \ c^i,\ \mathbf{t}^i\right\}}}{\text{minimize}}
& \ \ \sum_{i\in V_P}\sum_{k\in P_i}\left\| \mathbf{t}_k - \left(c^i\hat{z}^i\hat{\mathbf{t}}_k^i + \mathbf{t}^i\right)\right\|_2\\
\text{subject to} & \ \ c^i \geq 1 , \ \forall i\in V_P 
\end{align}
\end{subequations}

\subsection{Well-posedness of the Distributed Problem}
Similar to the well-posedness of location estimation from pairwise lines, we consider the following question for the distributed problem: Do the local coordinate estimates $\{\hat{\mathbf{t}}_k^i\}_{k\in P_i, \ i\in V_P}$ provide enough information to yield a well-posed instance of the global location estimation problem? Once again, we consider an instance of the distributed problem to be well-posed if the global locations can be uniquely (of course, up to congruence) estimated from the noiseless local coordinates $\{\mathbf{t}_k^i\}_{k\in P_i, \ i\in V_P}$. We (partially) answer this question in Proposition~{\ref{propo:ExactRecoveryDist}}, where it is shown that the local coordinate estimates provided via the specific construction of the patch graph $G_P$ given in \S\ref{sec:GraphPartition} are sufficient for well-posedness. This result is established by proving exact recovery of global locations from noiseless local coordinates using the two step global location construction algorithm.
\begin{proposition}[Exact Recovery from Noiseless Local Coordinates]
\label{propo:ExactRecoveryDist}
Consider a graph $G_P = (V_P,E_P)$ of patches $\{P_i\}_{i\in V_P}$ and a set of (noiseless) local coordinates $\{\mathbf{t}_k^i\}_{k\in P_i, i\in V_P}$ corresponding to the global locations $\{\mathbf{t}_k\}_{k\in \bigcup P_i}$ (i.e., $\mathbf{t}_k^i$ satisfy~$(\ref{eq:LocalCoorSysRepres})$ for a set of signed scales $c^i$ and translations $\mathbf{t}^i$ of the patches, for all $k\in P_i$ and $i\in V_P$). Then, if $G_P$ is connected and satisfies $(i,j)\in E_P$ if and only if $|P_i\cap P_j| \geq2$, the two step global location construction algorithm (i.e., estimation of patch signs by $(\ref{eq:PatchReg})$ and EVM~{\rm\cite{AmitMihailSensors}} followed by global location estimation by $(\ref{eq:JointlyScTrSynch})$) recovers the global locations exactly when provided with the noiseless local coordinates (i.e., $\hat{\mathbf{t}}_k^i = \mathbf{t}_k^i$ for $(\ref{eq:PatchReg})$ and $(\ref{eq:JointlyScTrSynch})$), in the sense that any solution of the algorithm is congruent to $\{\mathbf{t}_k\}_{k\in \bigcup P_i}$.
\end{proposition}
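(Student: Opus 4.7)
The plan is to verify the three stages of the algorithm in sequence --- pairwise registration by~(\ref{eq:PatchReg}), global sign synchronization by EVM, and global stitching by~(\ref{eq:JointlyScTrSynch}) --- under the noiseless hypothesis. Following the convention used earlier in the paper, I write $c^{i,0}$, $\mathbf{t}^{i,0}$, $\mathbf{t}_k^{0}$ for the true patch parameters and global locations, and set $z^{i,0}=\mbox{sign}(c^{i,0})$; the true locations are distinct by the genericity assumption.

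\textbf{Step 1 (pairwise registration).} Combining~(\ref{eq:LocalCoorSysRepres}) for patches $P_i$ and $P_j$ on their overlap yields $\mathbf{t}_k^i = (c^{j,0}/c^{i,0})\mathbf{t}_k^j + (\mathbf{t}^{j,0}-\mathbf{t}^{i,0})/c^{i,0}$ for each $k\in P_i\cap P_j$. Taking two indices $k_1,k_2$ in the overlap and subtracting gives $\mathbf{t}_{k_1}^i-\mathbf{t}_{k_2}^i = (c^{j,0}/c^{i,0})(\mathbf{t}_{k_1}^j-\mathbf{t}_{k_2}^j)$; since $\mathbf{t}_{k_1}^0\neq\mathbf{t}_{k_2}^0$ this pins the ratio $c^{j,0}/c^{i,0}$ uniquely. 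Hence~(\ref{eq:PatchReg}) attains its minimum $0$ at the unique pair $(c^{ij})^{\ast}=c^{j,0}/c^{i,0}$, $(\mathbf{t}^{ij})^{\ast}=(\mathbf{t}^{j,0}-\mathbf{t}^{i,0})/c^{i,0}$, so $\hat{z}^{ij}=\mbox{sign}((c^{ij})^{\ast})=z^{i,0}z^{j,0}$.

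\textbf{Step 2 (global sign synchronization).} With exact pairwise signs $z^{ij}=z^{i,0}z^{j,0}$ on the connected graph $G_P$, the signed adjacency matrix $\tilde{A}$ fed to EVM satisfies $\tilde{A}=Z A_P Z$, where $Z=\Diag(z^{1,0},\ldots,z^{|V_P|,0})$ and $A_P$ is the unsigned adjacency of $G_P$. Since $G_P$ is connected, Perron--Frobenius supplies a unique-up-to-sign top eigenvector $v$ of $A_P$ with strictly positive entries; the corresponding top eigenvector of $\tilde{A}$ is $Zv$, whose entrywise signs coincide with $(z^{1,0},\ldots,z^{|V_P|,0})$. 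Thus EVM returns $\hat{z}^i = s\,z^{i,0}$ for a single (unknown) $s\in\{-1,+1\}$, reflecting the usual eigenvector sign ambiguity.

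\textbf{Step 3 (global stitching).} Using $\mathbf{t}_k^i=(\mathbf{t}_k^0-\mathbf{t}^{i,0})/c^{i,0}$, rewrite $c^i\hat{z}^i\mathbf{t}_k^i=\alpha_i(\mathbf{t}_k^0-\mathbf{t}^{i,0})$ with $\alpha_i\defeq c^i\hat{z}^i/c^{i,0}$. A summand of the cost of~(\ref{eq:JointlyScTrSynch}) vanishes iff $\mathbf{t}_k=\alpha_i\mathbf{t}_k^0+\beta_i$, where $\beta_i\defeq\mathbf{t}^i-\alpha_i\mathbf{t}^{i,0}$. For any $(i,j)\in E_P$ the $\geq 2$ distinct common locations give, by the same elimination as in Step~1, $\alpha_i=\alpha_j$ and $\beta_i=\beta_j$; by the connectedness of $G_P$ there exist common constants $\alpha\neq 0$ and $\beta$ with $\mathbf{t}_k=\alpha\mathbf{t}_k^0+\beta$ for every $k\in\bigcup P_i$. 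The feasibility constraint $c^i\geq 1$ rewrites as $s\alpha|c^{i,0}|\geq 1$ for all $i$, which is satisfiable by choosing $\mbox{sign}(\alpha)=s$ and $|\alpha|\geq\max_i 1/|c^{i,0}|$; hence the optimum value is $0$, and every minimizer yields $\{\mathbf{t}_k\}$ congruent to $\{\mathbf{t}_k^0\}$.

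\textbf{Main obstacle.} Steps 1 and 2 are nearly immediate from noiselessness plus sign consistency on a connected graph. The substantive content lies in Step~3, where the unsquared-$\ell_2$ cost has no closed form: one must separately verify (i) existence of a feasible zero-cost point so that the optimum equals $0$, and (ii) that every zero-cost feasible point is a (possibly negated) scaled translate of $\{\mathbf{t}_k^0\}$. Both rely on the $\geq 2$-overlap rigidity of Step~1, now propagated across the connected patch graph $G_P$ to rule out spurious optima whose per-patch scales $\alpha_i$ disagree.
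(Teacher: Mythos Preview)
Your proof is correct and follows essentially the same route as the paper's: both argue (i) that the pairwise registration~(\ref{eq:PatchReg}) has a unique zero-cost minimizer because $|P_i\cap P_j|\ge 2$ lets you subtract two equations to pin $c^{ij}$, (ii) that EVM on a connected graph with consistent pairwise signs returns the true signs up to a global flip, and (iii) that any zero-cost feasible point of~(\ref{eq:JointlyScTrSynch}) forces all per-patch affine parameters $(\alpha_i,\beta_i)$ to agree via the overlap argument propagated along the connected $G_P$. Your Step~2 spells out the Perron--Frobenius mechanism behind EVM more explicitly than the paper (which simply cites it), and your Step~3 reverses the order of ``existence of a feasible zero-cost point'' and ``every zero-cost point is congruent'' relative to the paper, but the substance is the same.
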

\newline\indent{\em Proof}. See Appendix~\ref{Apdx:ExactRecoveryDist}.
\begin{remark} 
{\rm \hspace{0.1in}We note that, in the presence of noiseless pairwise lines $\{\Gamma_{ij}\}_{(i,j)\in E_t}$ (on parallel rigid $G_t = (V_t,E_t)$), and assuming that we can obtain a connected patch graph $G_P$ from $G_t$ using the graph partitioning procedure of \S\ref{sec:GraphPartition}, Propositions~\ref{propo:ExactRecovery} and~\ref{propo:ExactRecoveryDist} imply exact recovery of $\{\mathbf{t}_k\}_{k\in\bigcup P_i}$ from the lines $\{\Gamma_{ij}\}_{(i,j)\in E_t}$.}
\end{remark}
\begin{remark} 
{\rm \hspace{0.1in}The conditions imposed on $G_P$ in Proposition~\ref{propo:ExactRecoveryDist} (i.e. connectivity and that, for all $(i,j)\in E_P$, $|P_i\cap P_j|\geq2$) are usually not difficult to satisfy. Also, observe that these conditions are independent of the dimension $d$ of the locations (which is not the case, e.g., for the combinatorial conditions in~\cite{LaterRef1,LaterRef2}). However, it may be possible to assume even weaker conditions on $G_P$ to obtain exact recovery results using, e.g., the (partial) matrix lifting method discussed in \S\ref{sec:PatchReg}: We conjecture that if the patches $\{P_i\}_{i\in V_P}$ satisfy a specific $2$-lateration\footnote{We note that the notion of lateration considered here should not be confused with the classical laterated graphs, although it resembles the classical concept in some aspects.} condition, i.e. if there exists a reordering of the patch indices such that, for every $2 \leq i \leq |V_P|$, $P_i$ and $P_1 \cup \ldots \cup P_{i-1}$ have at least $2$ points in common (which is, obviously, a weaker condition compared to the conditions in Proposition~\ref{propo:ExactRecoveryDist}), then the matrix lifting method should recover the locations exactly. On the other hand, since in our two step method, the first step requires the estimation of pairwise signs (in order to estimate the patch signs), the condition that $|P_i\cap P_j|\geq2$, $\forall (i,j)\in E_P$, is in fact necessary for exact recovery.}
\end{remark}

\section{Camera Motion Estimation}
\label{sec:CamMotEst}
In this section, we provide the details of the application of our location estimation algorithm (developed for the general problem in $d$ dimensions) to the camera location estimation part of the structure from motion (SfM) problem in computer vision (defined in $\R^3$). In the SfM problem (see Figure~\ref{fig:SfMProblem}), camera motion estimation is based on point correspondences between pairs of images. As a result, misidentification of corresponding points can induce estimation errors in pairwise line estimates, denoted $\hat{\Gamma}_{ij}$'s, and manifest itself through large errors in the estimated camera locations. In that respect, our primary goal in this section is to formulate a robust (to misidentified corresponding points) and efficient procedure for pairwise line estimation that would then be used as input to our SDR framework (see Figure~\ref{fig:NoiseTol} for a comparison of the accuracy of the line estimates computed using our robust procedure and a simpler estimator, which is not robust to misidentified corresponding points). We also devise a robust algorithm for the camera orientation estimation part, which directly affects the recovery performance of the location estimation part. We start with a brief description of the measurement process.   
\begin{figure}[!htbp]
\begin{center}
   \includegraphics[trim=1cm 1cm 1cm 1cm, clip=true, width=0.7\linewidth]{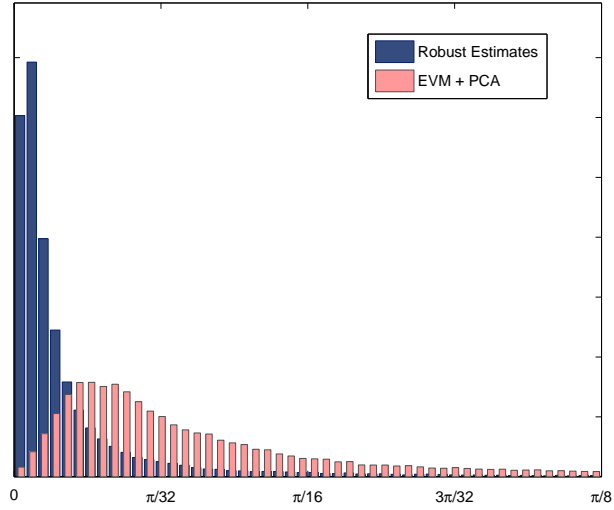}
\end{center}
   \caption{Histogram plots of the errors in line estimates computed by our robust method (cf. {\rm\S\ref{sec:RotEstim}} and {\rm\S\ref{sec:SubspaceEstim}}) and a simpler estimator. The simpler estimator uses the eigenvector method (EVM)~{\rm\cite{MicaAmitSfM}} for rotation estimation, and PCA for subspace estimation (using the noisy estimates of the $2$D subspace samples $\nu_{ij}^k$ in {\rm(\ref{eq:MeasEpiCst})}). The errors represent the angles between the estimated lines and the corresponding ground truth lines (computed from the camera location estimates of~{\rm\cite{SnavelyData}} for the Notre-Dame dataset, studied in {\rm\S\ref{sec:RealDataExp}}). The errors take values in $[0,\pi/2]$, however the histograms are restricted to $[0,\pi/8]$ to emphasize the difference of the quality in the estimated lines. We note that, for the robust method, the percentage of the line estimates having errors larger than $\pi/8$ is $3.7\%$, whereas, for the simple estimator, it is $11.5\%$. \label{fig:NoiseTol}}
\end{figure}
\newline\indent Let $\{\mbox{I}_1,\mbox{I}_2,\ldots,\mbox{I}_n\}$ represent a collection of images of a stationary $3$D scene. We use a pinhole camera model (see Figure~\ref{fig:EpipolarGeo}), and denote the orientations, locations of the focal points, and focal lengths of the $n$ cameras corresponding to these images by $\{R_i\}_{i=1}^n \subseteq \mbox{SO}(3)$, $\{\mathbf{t}_i\}_{i=1}^n \subseteq \R^3$, and $\{f_i\}_{i=1}^n \subseteq \R^+$, respectively. Consider a scene point $\mathbf{P} \in \R^3$ represented in the $i$'th image plane by $\mathbf{p}_i \in \R^3$ (as in Figure~\ref{fig:EpipolarGeo}). To produce $\mathbf {p}_i$, we first represent $\mathbf {P}$ in the $i$'th camera's coordinate system, that is, we compute $\mathbf {P}_i = R_i^T(\mathbf{P} - \mathbf{t}_i) = (\mathbf{P}_i^x,\mathbf{P}_i^y,\mathbf{P}_i^z)^T$ and then project it to the $i$'th image plane by $\mathbf{p}_i = (f_i/P_i^z)\mathbf{P}_i$. Note that, for the image $\mbox{I}_i$, we in fact observe $\mathbf{q}_i = (\mathbf{p}_i^x,\mathbf{p}_i^y)^T \in \R^2$ (i.e., the coordinates on the image plane) as the measurement corresponding to $\mathbf{P}$.
\begin{figure}[!htbp]
\begin{center}
   \includegraphics[trim=0cm 0cm 0cm 0cm, clip=true, width=0.6\linewidth]{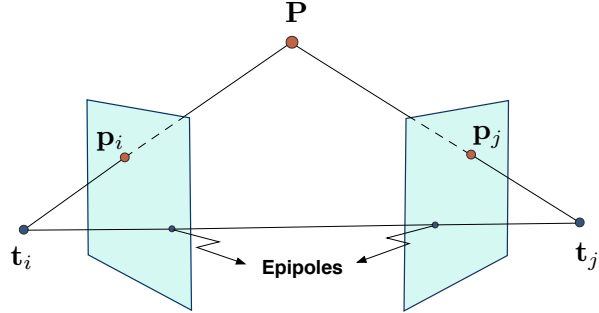}
\end{center}
   \caption{$3$D projective geometry of the pinhole camera model (using virtual image planes for mathematical simplicity)\label{fig:EpipolarGeo}}
\end{figure}
\newline \indent Following the conventions from the SfM literature, for an image pair $\mbox{I}_i$ and $\mbox{I}_j$, the pairwise rotation and translation between the $i$'th and $j$'th camera coordinate frames are denoted by $R_{ij} = R_i^TR_j$ and $\mathbf{t}_{ij} = R_i^T(\mathbf{t}_j-\mathbf{t}_i)$ (not to be confused with $\mathbf{t}_i-\mathbf{t}_j$ or $\gamma_{ij}, \mathbf{t}^{ij}$ used previously), respectively. The essential matrix $E_{ij}$ is then defined by $E_{ij} = [\mathbf{t}_{ij}]_{\times}R_{ij}$, where $[\mathbf{t}_{ij}]_{\times}$ is the skew-symmetric matrix corresponding to the cross product with $\mathbf{t}_{ij}$. If the projections of a $3$D scene point $\mathbf{P}$ onto the $i$'th and the $j$'th image planes are denoted by $\mathbf{p}_i\in\R^3$ and $\mathbf{p}_j\in\R^3$, respectively, the essential matrix $E_{ij}$ satisfies the ``epipolar constraint'' given by 
\begin{equation}
\label{eq:EpipolarConst}
\mathbf{p}_i^TE_{ij}\mathbf{p}_j = 0
\end{equation}
In fact, the epipolar constraint~(\ref{eq:EpipolarConst}) is a restatement of the coplanarity of the three vectors $\mathbf{P}-\mathbf{t}_i$, $\mathbf{P}-\mathbf{t}_j$ and $\mathbf{t}_i-\mathbf{t}_j$ (see Figure~\ref{fig:EpipolarGeo}). However, since~(\ref{eq:EpipolarConst}) is given in terms of the measurable variables $\mathbf{p}_i$, it is used as an extra constraint (on the special structure of $E_{ij}$ having $6$ degrees of freedom) in the estimation of $E_{ij}$. 
\newline\indent Provided with the image set $\{\mbox{I}_i\}_{i=1}^n$, to estimate the essential matrices, we first extract feature points and find pairs of corresponding points between images (see Figure~\ref{fig:CorrPtsEx} for an image pair with corresponding points) using SIFT~\cite{MicaAmit22}, and then estimate the essential matrices using the eight-point algorithm\footnote{We note that, the essential matrix $E_{ij}$ can be estimated using only $5$ point correspondences between the images $\mbox{I}_i$ and $\mbox{I}_j$ (while assuming $\|\mathbf{t}_i-\mathbf{t}_j\|_2=1$). In practice, however, we use the eight-point algorithm, which is an efficient linear method requiring a minimum of $8$ point correspondences.} (see, e.g.,~\cite{HartleyBook}), while also employing the RANSAC protocol (to reduce the effect of outliers in point correspondences). For image pairs with sufficiently many inliers, the estimated essential matrices $\hat{E}_{ij}$ are then (uniquely) factorized into $\hat{R}_{ij}$ and $[\hat{\mathbf{t}}_{ij}]_{\times}$. 
\begin{figure}[!htbp]
\begin{center}
   \includegraphics[trim=0cm 0cm 0cm 0cm, clip=true, width=0.75\linewidth]{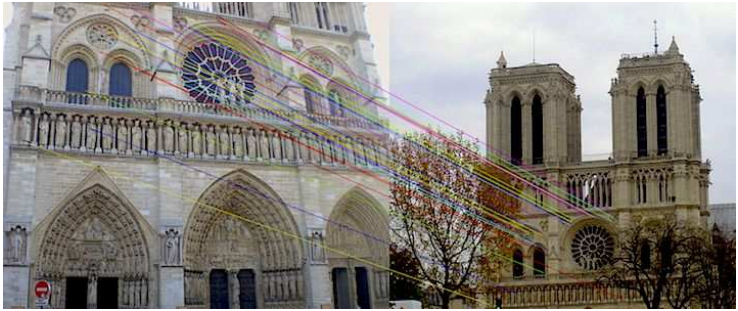}
\end{center}
   \caption{Two images of the Notre-Dame Cathedral set from~{\rm\cite{SnavelyData}}, with corresponding feature points (extracted using SIFT~{\rm\cite{MicaAmit22}}). The essential matrix $E_{ij}$ can be estimated using the (non-linear) five-point or the (linear) eight-point algorithms (see, e.g.,~{\rm\cite{HartleyBook}}), that require at least $5$ and $8$ pairs of corresponding feature points, respectively.\label{fig:CorrPtsEx}}
\end{figure}
\newline\indent Classically, the relative rotation and translation estimates, $\hat{R}_{ij}$ and $\hat{\mathbf{t}}_{ij}$, computed from the decomposition of $\hat{E}_{ij}$, are used to estimate the camera locations $\mathbf{t}_i$. However, for large and unordered collections of images, these estimates usually have errors resulting from misidentified and/or small number of corresponding points. Especially, the erroneous estimates $\hat{\mathbf{t}}_{ij}$ result in large errors for the location estimation part. As a result, instead of using the existing algorithms (e.g.,~\cite{HartleyRotations, MicaAmitSfM, MartinecRotations}) to find the rotation estimates $\hat{R}_i$ and then computing the pairwise line estimates $\hat{\Gamma}_{ij} =  (\hat{R}_i\hat{\mathbf{t}}_{ij})(\hat{R}_i\hat{\mathbf{t}}_{ij})^T$ (assuming $\|\hat{\mathbf{t}}_{ij}\|_2 = 1$) for the SDR solver~(\ref{eq:SDRTransEst}), we follow a different procedure: First, the rotation estimates $\hat{R}_i$ are computed using an iterative, robust algorithm (as detailed in \S\ref{sec:RotEstim}), and then we go back to the epipolar constraints~(\ref{eq:EpipolarConst}) (as explained below) to estimate the pairwise lines using a robust subspace estimation algorithm (cf. \S\ref{sec:SubspaceEstim}).
\newline\indent To clarify the main idea of our robust pairwise line estimation method using the epipolar constraints, we first emphasize the linearity of~(\ref{eq:EpipolarConst}) in the camera locations $\mathbf{t}_i$ and $\mathbf{t}_j$, by rewriting it as (also see~\cite{MicaAmitSfM})
\begin{eqnarray}
\nonumber
\mathbf{p}_i^TE_{ij}\mathbf{p}_j &=&  \mathbf{p}_i^T\left[R_i^T(\mathbf{t}_{j}-\mathbf{t}_{i})\right]_{\times}R_{i}^TR_j\mathbf{p}_j\\
\nonumber
&=& \mathbf{p}_i^TR_i^T\left\{(\mathbf{t}_{j}-\mathbf{t}_{i})\times R_j\mathbf{p}_j\right\} \\
\label{eq:EpipolarRewrite1}
 &=& \left(R_i\mathbf{p}_i\times R_j\mathbf{p}_j \right)^T\left(\mathbf{t}_i -\mathbf{t}_j\right) = 0
\end{eqnarray}
As mentioned before, for an image $\mbox{I}_i$, the measurement corresponding to a $3$D point $\mathbf{P}$ is given in terms of the coordinates of the $2$D image plane by $\mathbf{q}_i = (\mathbf{p}_i^x,\mathbf{p}_i^y)^T \in \R^2$. For an image pair $\mbox{I}_i$ and $\mbox{I}_j$, let $\{\mathbf{q}_{i}^{k}\}_{k=1}^{m_{ij}} \subseteq \R^2$ and $\{\mathbf{q}_{j}^{k}\}_{k=1}^{m_{ij}}\subseteq \R^2$ denote $m_{ij}$ corresponding feature points. Then, using~(\ref{eq:EpipolarRewrite1}), we get (in the noiseless case)
\begin{subequations}
\label{eq:MeasEpiCst}
\begin{align}
&(\mathbf{\nu}_{ij}^k)^T\left(\mathbf{t}_i - \mathbf{t}_j\right) = 0, \ k = 1,\ldots, m_{ij}, \ \ \mbox{for} \\
& \mathbf{\nu}_{ij}^k \defeq \mathbf{\Theta} \left[\left(R_i\left[\begin{matrix} \mathbf{q}_i^k/f_i \\ 1\end{matrix}\right]\right) \times \left(R_j\left[\begin{matrix} \mathbf{q}_j^k/f_j \\ 1\end{matrix}\right]\right)\right]
\end{align}
\end{subequations}
where, $\mathbf{\nu}_{ij}^k$'s are normalized (using the homogeneity of~(\ref{eq:EpipolarRewrite1}) and the normalization function $\mathbf{\Theta}[\mathbf{x}] = \mathbf{x}/\|\mathbf{x}\|_2$, $\mathbf{\Theta}[\mathbf{0}] = \mathbf{0}$), and $f_i$ and $f_j$ denote the focal lengths of the $i$'th and $j$'th cameras, respectively. Hence, in the noiseless case, we make the following observation: Assuming $m_{ij}\geq2$ (and, that we can find at least two $\nu_{ij}^k$'s not parallel to each other), $\{\mathbf{\nu}_{ij}^k\}_{k=1}^{m_{ij}}$ determine a $2$D subspace to which $\mathbf{t}_i - \mathbf{t}_j$ is orthogonal (and hence the ``line'' through $\mathbf{t}_i$ and $\mathbf{t}_j$). This $2$D subspace spanned by $\{\mathbf{\nu}_{ij}^k\}_{k=1}^{m_{ij}}$ can be obtained by, e.g., principal component analysis (PCA) (i.e., as the subspace spanned by the two left singular vectors of the matrix $\left[\begin{smallmatrix}\mathbf{\nu}_{ij}^1 & \ldots & \mathbf{\nu}_{ij}^{m_{ij}}\end{smallmatrix}\right]$ corresponding to its two non-zero singular values). However, in the presence of noisy measurements, if we replace $R_i$'s, $f_i$'s and $\mathbf{q}_i$'s in~(\ref{eq:MeasEpiCst}) with their estimates, then we essentially obtain noisy samples $\hat{\mathbf{\nu}}_{ij}^k$'s from these subspaces (for which, e.g., PCA might not produce robust estimates in the presence of outliers among $\hat{\mathbf{\nu}}_{ij}^k$'s). Hence, for pairwise line estimation, our approach is to reduce the effects of noise by employing robust rotation and subspace estimation steps, that we discuss next.

\subsection{Rotation Estimation}
\label{sec:RotEstim}
In this section, we provide the details of the rotation estimation step using the pairwise rotation estimates $\hat{R}_{ij}$'s, extracted from the essential matrix estimates $\hat{E}_{ij}$'s. Our main objective here is to reduce the effects of outliers in $\hat{R}_{ij}$'s in the estimation of the rotations $R_i$, while preserving computational efficiency. The outliers in $\hat{R}_{ij}$'s, which mainly occur due to misidentified and/or small number of corresponding points used to estimate the essential matrices, can result in large errors in rotation estimation, which manifests itself as large errors in the pairwise line estimates through the noisy subspace samples $\hat{\nu}_{ij}^k$'s computed via~(\ref{eq:MeasEpiCst}). Specifically, for large and sparsely correlated image sets (i.e., image sets, for which we can obtain the estimates $\hat{R}_{ij}$ for a relative small fraction of all pairs), the proportion of outliers in $\hat{R}_{ij}$ is typically large enough to degrade the quality of rotation estimates (e.g., when estimated via a single iteration of the eigenvector method (EVM) used in~\cite{MicaAmitSfM}). In the literature, there are various algorithms to estimate camera rotations from the pairwise rotation estimates $\hat{R}_{ij}$'s (see, e.g.,~\cite{LanhuiSync, HartleyRotations, MicaAmitSfM, MartinecRotations}) with various theoretical and experimental robustness properties. Our procedure for estimating the rotations is iterative, where in each iteration we apply EVM in order to detect the outliers in $\hat{R}_{ij}$'s with respect to the rotation estimates evaluated at that iteration, and continue to the next iteration by removing the detected outliers. We now provide the details. 
\newline \indent We represent the pairwise rotation estimates $\hat{R}_{ij}$'s as the edges of a rotation measurement graph $G_R = \left(V_R,E_R\right)$, where the set of vertices $V_R$ represents the cameras. We assume that $G_R$ is connected. At the $k$'th iteration of our procedure, we are given a set $\{\hat{R}_{ij}\}_{(i,j)\in E_R^k}$ of pairwise rotation measurements represented by the connected rotation measurement graph $G_R^k = \left(V_R^k,E_R^k\right)$. First, we apply EVM to compute the $k$'th iteration rotation estimates $\{\hat{R}_i\}_{i\in V_R^k}$. We then identify the outlier edges, denoted by  $(i,j) \in E_O^k$, for which the consistency errors $\|(\hat{R}_i^k)^T\hat{R}_{j}^k - \hat{R}_{ij}^k\|_F$ are relatively large, i.e. which are ``significantly'' larger from the mean (one can also identify the outliers as a fixed portion, say $10\%$, of the edges with largest consistency error, or as the edges with a consistency error above a certain threshold, etc.) At this stage, we remove the outlier edges from the measurement graph and identify $G_R^{k+1}$ as the largest connected component of $G' = (V_R^k, E_R^k\setminus E_O^k)$. The iterations are then repeated until a convergent behavior in the consistency errors is observed (one can also repeat the iterations a fixed number of times, up to an allowed number of removed points, etc.) We note that since the eigenvector-based method is computationally very efficient, the extra iterations induce a negligible computational cost for the whole algorithm, however the change in the estimated rotations can significantly improve the final estimation of camera locations.

\subsection{Subspace Estimation}
\label{sec:SubspaceEstim}
Let $\bar{G}_R = (\bar{V}_R,\bar{E}_R)$ denote the resulting graph of the rotation estimation step. For each $(i,j)\in \bar{E}_R$, the estimates $\{\hat{\mathbf{\nu}}_{ij}^k\}_{k=1}^{m_{ij}}$ are evaluated using the rotation estimates $\{\hat{R}_i\}_{i\in \bar{V}_R}$ in~(\ref{eq:MeasEpiCst}). $\{\hat{\mathbf{\nu}}_{ij}^k\}_{k=1}^{m_{ij}}$ are noisy samples of unit vectors in a $2$D subspace. As also mentioned previously, we can estimate this subspace, e.g., by PCA, however PCA is not robust to outliers in the sample set. There are various algorithms for robust subspace estimation from noisy sample points (e.g. see~\cite{SSReaper,SSTylerM,SSGiannakis} and references therein), with different performance and convergence guarantees, computational efficiency, and numerical stability. We choose to use the S-REAPER algorithm introduced in~\cite{SSReaper}. S-REAPER solves the following convex problem:
\begin{subequations}
\label{eq:REAPER}
\begin{align}
\underset{{\scriptstyle Q_{ij}}}{\text{minimize}}
& \ \ \sum_{k=1}^{m_{ij}}  \| \hat{\mathbf{\nu}}_{ij}^k - Q_{ij}\hat{\mathbf{\nu}}_{ij}^k \|_2 \\
\text{subject to} & \ \ 0 \preceq Q_{ij} \preceq I_3, \ \tr\left(Q_{ij}\right) = 2 \,
\end{align}
\end{subequations}
After finding the solution $Q_{ij}^*$ of (\ref{eq:REAPER}), the robust subspace $\hat{Q}_{ij}$ is defined to be the subspace spanned by the two normalized leading eigenvectors, $\mathbf{q}_{1,ij}^*, \mathbf{q}_{2,ij}^*$, of $Q_{ij}^*$. Hence, we set $\hat{\Gamma}_{ij} \defeq I_3 - (\mathbf{q}_{1,ij}^*(\mathbf{q}_{1,ij}^*)^T + \mathbf{q}_{2,ij}^*(\mathbf{q}_{2,ij}^*)^T)$ as our robust line estimates.\\ \\
\indent A summary of our algorithm for camera motion estimation is provided in Table~\ref{tab:AlgorithmSummary}.
\begin{table}[!htbp]
\caption{Algorithm for camera motion estimation\label{tab:AlgorithmSummary}}
\begin{center}
\footnotesize
\tabcolsep=0.1cm
\begin{tabular}{L{2.5cm}||L{10cm}}
\hline 
& Input: Images: $\{\mbox{I}_i\}_{i=1}^n$, \ Focal lengths: $\{f_i\}_{i=1}^n$ \vspace{0.035in}\\ \hline\hline
Features Points, Essential Matrices, Relative Rotations  & {\bf 1.} Find corresponding points between image pairs (using SIFT~\cite{MicaAmit22}) \newline {\bf 2.} Compute essential matrices $\hat{E}_{ij}$, using RANSAC (for pairs with \newline \hspace{0.15in}sufficiently many correspondences) \newline {\bf 3.} Factorize $\hat{E}_{ij}$ to compute $\{\hat{R}_{ij}\}_{(i,j)\in E_{R}}$ and $G_R = (V_R,E_R)$ \vspace{0.035in} \\ \hline
Rotation Estimation (\S\ref{sec:RotEstim}) & {\bf 4.} Starting with $G^1_R = G_R$, at the $k$'th iteration: \newline \hspace{0.05in} {\bf -} For $G^k_R = (V_R^k,E_R^k)$ and $\{\hat{R}_{ij}\}_{(i,j)\in E^k_R}$, compute $\{\hat{R}_i\}_{i\in V_R^k}$ by EVM~\cite{MicaAmitSfM} \newline \hspace{0.05in} {\bf -} Detect outlier edges $E_O^k$ \newline \hspace{0.05in} {\bf -} Set $G^{k+1}_R$ to be the largest connected component of $G' = (V_R^k,E_R^ k\setminus E_O^k)$ \newline {\bf 5.} Repeat until convergence, output $\{\hat{R}_i\}_{i\in \bar{V}_R}$ and $\bar{G}_R = (\bar{V}_R,\bar{E}_R)$ \vspace{0.035in}\\ \hline
Pairwise Line Estimation (\S\ref{sec:SubspaceEstim})& {\bf 6.} Compute the $2$D subspace samples $\{\hat{\nu}_{ij}^k\}_{k=1}^{m_{ij}}$ for each $(i,j)\in \bar{E}_R$ (\ref{eq:MeasEpiCst}) \newline {\bf 7.} Estimate the pairwise lines $\{\hat{\Gamma}_{ij}\}_{(i,j)\in \bar{E}_{R}}$ using S-REAPER (\ref{eq:REAPER}) \vspace{0.035in}\\ \hline
Location Estimation (\S\ref{sec:TransEstim})& {\bf 8.} Extract the largest maximally parallel rigid component $G_t$ of $\bar{G}_R$~\cite{MaximalRigid} \newline {\bf 9.} If $|V_t|$ is small enough, estimate $\{\mathbf{t}_i\}_{i\in V_t}$ by the SDR (\ref{eq:SDRTransEst}) \newline {\bf 9}'{\bf.} If $|V_t|$ is large (w.r.t. the computational resources): \newline \hspace{0.05in} {\bf -} Partition $G_t$ into parallel rigid patches (\S\ref{sec:GraphPartition}), form the patch graph $G_P$ \newline \hspace{0.05in} {\bf -} Compute camera location estimates for each patch, using the SDR (\ref{eq:SDRTransEst}) \newline \hspace{0.05in} {\bf -} Compute pairwise patch signs, using (\ref{eq:PatchReg}), synchronize patches in $\mathbb{Z}_2$~\cite{AmitMihailSensors} \newline \hspace{0.05in} {\bf -} Estimate patch scales, translations and locations $\{\hat{\mathbf{t}}_{i}\}_{i\in \bigcup P_k}$, by (\ref{eq:JointlyScTrSynch}) \vspace{0.035in}\\ \hline\hline
 & Output: Camera orientations and translations: $\{\hat{R}_i, \hat{\mathbf{t}}_i\}$ \vspace{0.035in}\\
\hline
\end{tabular}
\end{center}
\end{table}

\section{Experiments} 
\label{sec:Experims}
\subsection{Synthetic Data Experiments}
\label{sec:SyntExp}
We first provide experimental evaluation of the performance of our SDR~(\ref{eq:SDRTransEst}) in $\R^3$ with synthetic line data. The experiments present the SDR performance with respect to the underlying graph (e.g., number of nodes, number of edges), and noise model and noise level. Also, we provide comparisons to the least squares (LS) method of~\cite{MicaAmitSfM,BATL2} and $\ell_{\infty}$ method of~\cite{HartleySim}, which also directly use pairwise line information in location estimation. Moreover, we compare the performance of our distributed algorithm to that of the SDR applied directly to the entire graph.\\
\indent We use a noise model incorporating the effects of small disturbances and outliers. Given a set of locations $\{\mathbf{t}_i\}_{i=1}^n$ and $G_t = (V_t,E_t)$, for each $(i,j)\in E_t$, we first let
\begin{equation}
\gamma_{ij} = \begin{cases} \gamma_{ij}^U \ \ , & \mbox{w.p. } \ p \\ (\mathbf{t}_i-\mathbf{t}_j)/\|\mathbf{t}_i-\mathbf{t}_j\|_2 + \sigma\gamma_{ij}^G \, & \mbox{w.p.} \ 1-p \end{cases}
\label{eq:NoiseModel}
\end{equation}
and normalize $\gamma_{ij}$'s to obtain $\{\tilde{\gamma}_{ij} = \mathbf{\Theta}[\gamma_{ij}]\}_{(i,j)\in E_t}$ as our ``directed'' lines. Here, $\{\gamma_{ij}^U\}_{(i,j)\in E_t}$  and $\{\gamma_{ij}^G\}_{(i,j)\in E_t}$ are i.i.d. random variables drawn from uniform distribution on $S^2$ and standard normal distribution on $\mathbb{R}^3$, respectively. For the SDR and the LS solvers, we use the (undirected) lines $\Gamma_{ij} = \tilde{\gamma}_{ij}(\tilde{\gamma}_{ij})^T$, while the $\ell_{\infty}$ solver requires the directed lines $\tilde{\gamma}_{ij}$.\\
\indent We evaluate the performance of each method in terms of the ``normalized root mean squared error'' (NRMSE) given by 
\begin{equation}
\label{eq:NRMSE}
\mbox{NRMSE}(\{\hat{\mathbf{t}}_i\}) = \sqrt{\frac{\sum_i \|\hat{\mathbf{t}}_i - \mathbf{t}_i\|_2^2} {\sum_i \|\mathbf{t}_i - \mathbf{t}_c\|_2^2}}
\end{equation}
where $\hat{\mathbf{t}}_i$'s are the location estimates (after removal of the global scale, translation and negation) and $\mathbf{t}_c$ is the center of mass of $\mathbf{t}_i$'s. \\
\indent We performed experiments with fixed parallel rigid graphs with $n = 100$ and $n = 200$ nodes having average and minimum degrees of $n/4$ and $3n/100$, respectively. The original locations $\mathbf{t}_i$'s are i.i.d. random variables drawn from standard normal distribution on $\mathbb{R}^3$. The NRMSE values (averaged over $10$ random realizations of the noise model and the original locations) are summarized in Table~\ref{tab:SynDataExp}. In order to observe the performance of the solvers with different noise structures (and since the LS and $\ell_{\infty}$ solvers are already very sensitive to outliers), we perform these experiments with pure small disturbances, i.e. with $p = 0$ (the first three rows of Table~\ref{tab:SynDataExp}) and with pure outliers, i.e. $\sigma  = 0$ (the last three rows of Table~\ref{tab:SynDataExp}).
\begin{table}[!htbp]
\caption{NRMSE~$(\ref{eq:NRMSE})$ performance of the SDR~$(\ref{eq:SDRTransEst})$ vs. least squares (LS)~{\rm \cite{MicaAmitSfM,BATL2}} and $\ell_{\infty}$~{\rm \cite{HartleySim}} solvers. Measurements are generated by the noise model~$(\ref{eq:NoiseModel})$ and NRMSE values are averaged over $10$ trials.\label{tab:SynDataExp}}
\begin{center}\footnotesize
\tabcolsep=0.3cm
\begin{tabular}{c|c|c|c||c|c|c|}
\cline{2-7}
& \multicolumn{3}{ ||c|| }{$n = 100$} & \multicolumn{3}{ c| }{$n = 200$} \\ \hline
\multicolumn{1}{ |c| }{$\sigma$} & \multicolumn{1}{ ||c| }{SDR} & LS & $\ell_{\infty}$ & SDR & LS & $\ell_{\infty}$ \\ \hline
\multicolumn{1}{|c| }{0.01}& \multicolumn{1}{ ||c| }{{\bf 0.0209}} & 0.0417 & 0.0619 & {\bf 0.0178} & 0.0525 &   0.0194  \\ \cline{1-7}
\multicolumn{1}{ |c| }{0.05}& \multicolumn{1}{ ||c| }{{\bf 0.0752}} & 1.1947 & 0.3742 & {\bf 0.0368} & 1.0760 &   0.7448  \\ \cline{1-7}
\multicolumn{1}{ |c| }{0.1}& \multicolumn{1}{ ||c| }{{\bf 0.1936}} & 1.2704 & 1.0247 & {\bf 0.1453} & 1.3870 &   0.8976  \\ \hline\hline
\multicolumn{1}{ |c| }{$p$} & \multicolumn{1}{ ||c| }{SDR} & LS & $\ell_{\infty}$ & SDR & LS & $\ell_{\infty}$ \\ \hline
\multicolumn{1}{ |c| }{0.01}& \multicolumn{1}{ ||c| }{{\bf 0.1049}} & 1.1584 & 1.1350 & {\bf 0.1189} & 1.1063 &   0.8326  \\ \cline{1-7}
\multicolumn{1}{|c|}{0.02}& \multicolumn{1}{ ||c| }{{\bf 0.1481}} & 1.1994 & 1.0876 & {\bf 0.1333} & 1.1226 &   1.0825  \\ \cline{1-7}
\multicolumn{1}{ |c| }{0.05}& \multicolumn{1}{ ||c| }{{\bf 0.2458}} & 1.2248 & 1.0689  & {\bf 0.2064} & 1.3848 &  1.1163    \\ \cline{1-7}
\end{tabular}
\end{center}
\end{table}\\
\indent Table~\ref{tab:SynDataExp} indicates that the estimator given by our SDR is robust to both types of noise, and increasing the number of nodes and edge connectivity further improves its accuracy. However, the LS and the $\ell_{\infty}$ solvers are sensitive to both kinds of noise. This is mainly due to the collapse phenomenon for the LS solver, and due to the structure of the cost function, which is not robust to large errors, for the $\ell_{\infty}$ solver. A collapsing solution of the LS solver is compared to the SDR solution in Figure~\ref{fig:CollapseExample} (the $\ell_{\infty}$ solution is not included since it produces a ``cloud'' of locations unrelated to the ground truth and degrades the visibility of the figure). 
\begin{figure}[!htbp]
\begin{center}
   \includegraphics[trim=2cm 1cm 2cm 1cm, clip=true, width=0.65\linewidth]{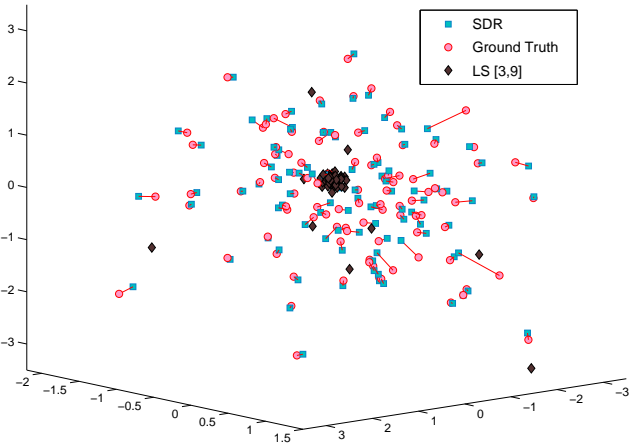}
\end{center}
   \caption{A sample solution, with $n = 100$, $p = 0$, $\sigma = 0.05$, demonstrating the collapsing behavior of the least squares (LS) solution. The line segments represent the error incurred by the SDR solution compared to the ground truth.\label{fig:CollapseExample}}
\end{figure}

\begin{table}[!htbp]
\caption{NRMSE~$(\ref{eq:NRMSE})$ results of the SDR solver~$(\ref{eq:SDRTransEst})$ (denoted by `Full') and the distributed SDR solver (denoted by `Dist.') (see $\S\ref{sec:DistApproach}$ for the distributed SDR).\label{tab:FullvsPartSDR}}
\begin{center}\footnotesize
\tabcolsep=0.3cm
\begin{tabular}{c|*{6}{c|}}\cline{2-7}
& \multicolumn{2}{ |c| }{$p = 0.01$}& \multicolumn{2}{ |c| }{$p = 0.02$} & \multicolumn{2}{ |c| }{$p = 0.05$} \\\hline
\multicolumn{1}{ |c| }{$\sigma$}&Full & Dist. & Full & Dist. & Full & Dist.\\ \hline 
\multicolumn{1}{ |c| }{0.01} &{\bf 0.1089} &0.1175 &{\bf 0.1255} & 0.1342 &0.1957 &{\bf 0.1852}\\\hline
\multicolumn{1}{ |c| }{0.02} &{\bf 0.1174} & 0.1207 &0.1381 &{\bf 0.1364} &0.2064 &{\bf 0.1960}\\\hline
\multicolumn{1}{ |c| }{0.05} &0.1426 &{\bf 0.1385} &{\bf 0.1490} & 0.1523 &0.2137 &{\bf 0.2058}\\\hline
\end{tabular}
\end{center}
\end{table}
\indent We also compare the performance of our distributed algorithm to that of the SDR applied to the whole graph for $n = 200$. For the distributed algorithm, we fix a maximum patch size of $N_{max} = 70$ and divide the whole graph into $8$ patches. 
The NRMSE results, summarized in Table~\ref{tab:FullvsPartSDR}, demonstrate that the accuracy of the distributed approach is comparable to that of the non-distributed approach, perhaps even slightly better for higher levels of noise.

\subsection{Real Data Experiments}
\label{sec:RealDataExp}
We tested our location estimation algorithm on three sets of real images with different sizes. To solve the SDR~(\ref{eq:SDRTransEst}), we use the SDPT3 package from~\cite{SDPT32} for small data sets (for its super-linear convergence with respect to the level of accuracy), and for the large data set, we use ADM (Algorithm~\ref{alg:ADM}). To construct a sparse $3$D structure in our experiments, we use the parallel bundle adjustment (PBA) algorithm of~\cite{PBA}. We also use the patch-based multi-view stereo (PMVS) algorithm of~\cite{Dense3D} to evaluate a dense $3$D reconstruction. We perform our computations on multiple workstations with Intel(R) Xeon(R) X$7542$ CPUs, each with $6$ cores, running at $2.67$ GHz. In order to directly compare the accuracy of the location estimation by SDR to that of least squares (LS)~\cite{MicaAmitSfM,BATL2} and $\ell_{\infty}$~\cite{HartleySim} solvers, we feed all three solvers with the orientation estimates produced by our iterative solver (\S\ref{sec:RotEstim}) and the robust subspace estimates (\S\ref{sec:SubspaceEstim}), that produced more accurate estimates for all data sets.

\subsection{Small Data Sets}
We first provide our results for the small Fountain-P$11$ and HerzJesu-{P}$25$ data sets of~\cite{FountainData}, which include $11$ and $25$ images, respectively. For camera calibration and performance evaluation, we use the focal lengths and ground truth locations provided with the data. For these small data sets, we do not require a distributed approach. For both data sets, the SDR~(\ref{eq:SDRTransEst}) estimates the camera locations very accurately in less than a second and the solutions of~(\ref{eq:SDRTransEst}) have rank equal to $1$, demonstrating the tightness of our relaxation. We also run the PBA algorithm, resulting in an average reprojection error of $0.17$ pixels for the Fountain-P$11$ data and $0.43$ pixels for the HerzJesu-{P}$25$ data, to construct the $3$D structures given in Figure~\ref{fig:Fountain3D}. We summarize and compare our end results to previous works\footnote{Results of~\cite{MultiLinear} are cited from~\cite{MicaAmitSfM}} in Table~\ref{tab:FountainRes}.
\begin{table}[!htbp]
\caption{Location estimation errors for the Fountain-{P$11$} and the HerzJesu-{P}$25$ data sets.\label{tab:FountainRes}}
\begin{center}\footnotesize
\begin{tabular}{|l|c|c|}
\hline
& \multicolumn{2}{ c| }{Error (in meters)} \\
\hline
Method & Fountain-P$11$ & HerzJesu-{P}$25$ \\
\hline
SDR~(\ref{eq:SDRTransEst})  & {\bf 0.0002} & {\bf 0.0053} \\
LS~\cite{MicaAmitSfM,BATL2} & 0.0048 & 0.0054\\
$\ell_{\infty}$~\cite{HartleySim} & 0.0064 & 0.0253 \\
Linear method of~\cite{MultiLinear} & 0.1317 & 0.2538\\
Bundler~\cite{SnavelySkeletal}& 0.0072 & 0.0308\\
\hline
\end{tabular}
\end{center}
\end{table}

\begin{figure}[!htbp]
\centering
\subfloat[]{\vspace{0.1in}
\includegraphics[trim=12cm 3cm 14cm 3cm, clip=true, width=0.4\linewidth]{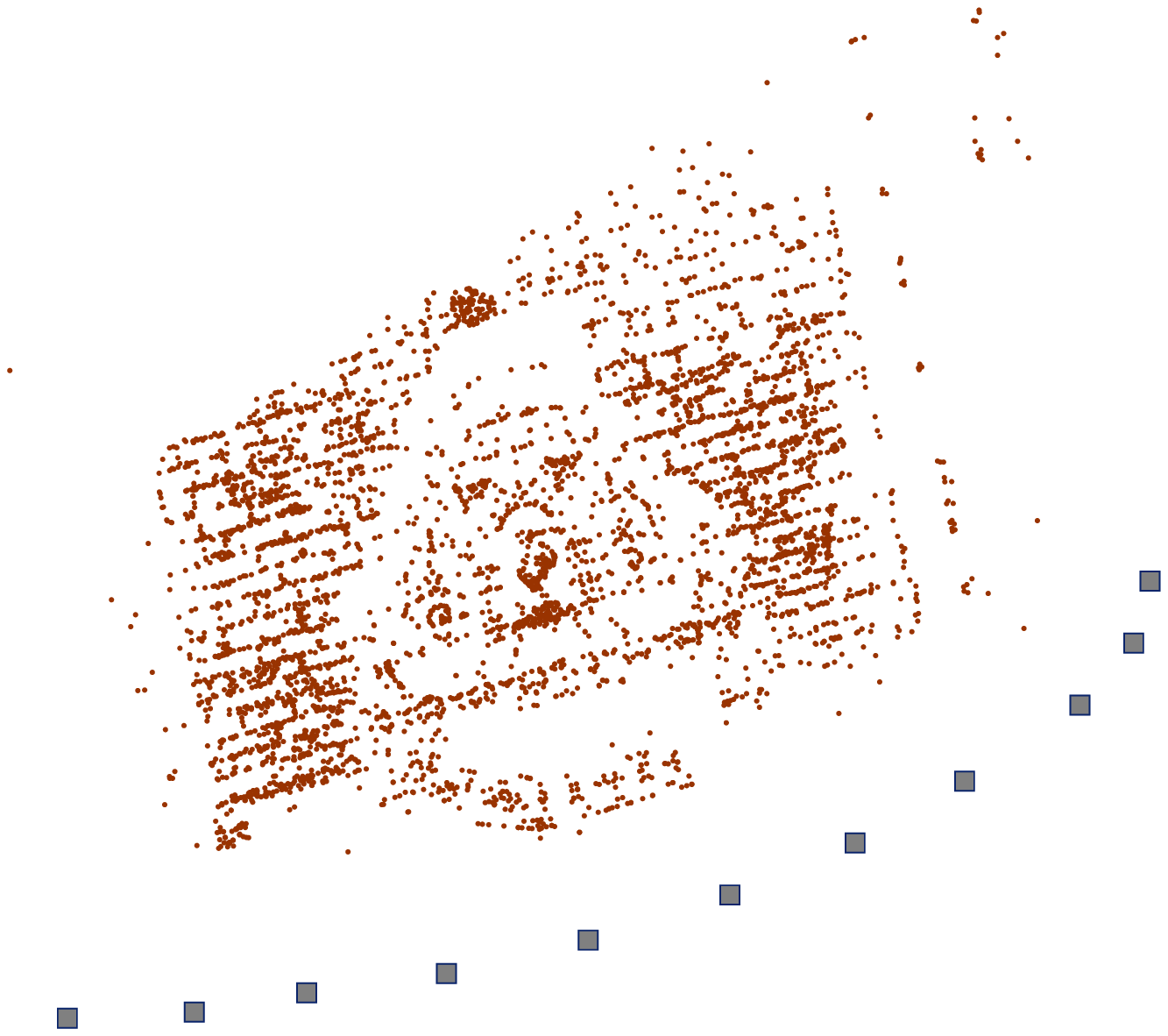}} \hspace{0.18in}
\subfloat[]{\includegraphics[trim=0cm 0cm 0cm 0cm, clip=true, width=0.45\linewidth]{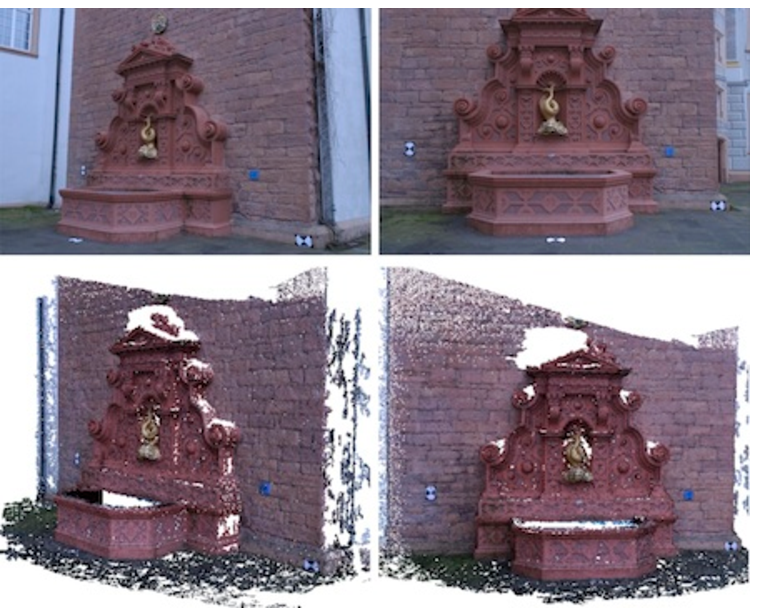}} \\ \vspace{0.1in}
\subfloat[]{\includegraphics[trim=5cm 2cm 7cm 2cm, clip=true, width=0.43\linewidth]{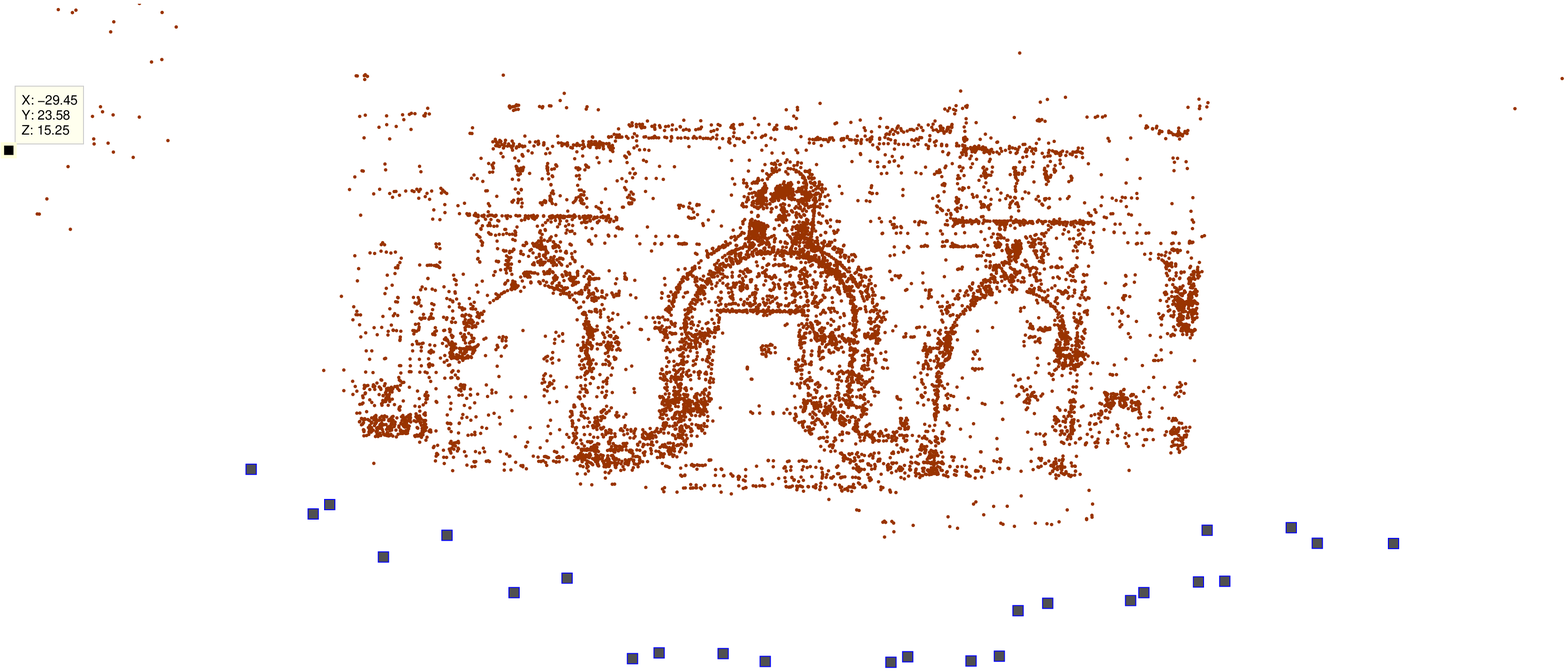}}\hspace{0.1in}
\subfloat[]{\includegraphics[trim=0cm 0cm 0cm 0cm, clip=true, width=0.45\linewidth]{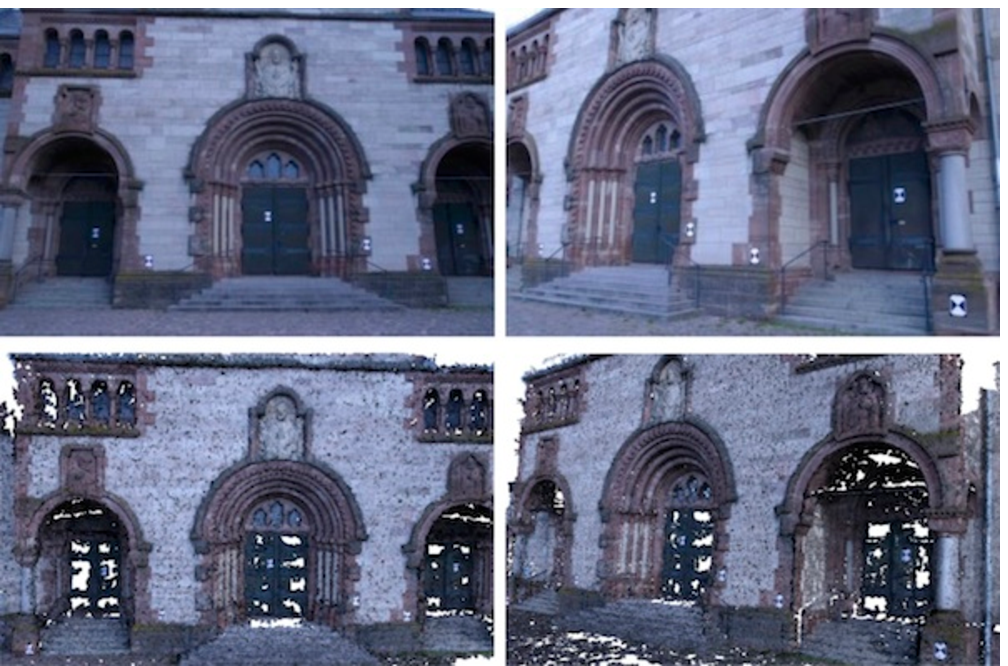}}
 \caption{{\rm(a)} Sparse $3$D structure and $11$ estimated camera locations (in blue) for the Fountain-{P}$11$ data {\rm(b)} Sample images and snapshots of the dense $3$D reconstruction {\rm(c)} Sparse $3$D structure and $25$ estimated camera locations (in blue) for the HerzJesu-{P}$25$ data {\rm(d)} Sample images and snapshots of the dense $3$D reconstruction\label{fig:Fountain3D}}
\end{figure}

\subsection{Large Data Set}
The next set of images is the larger Notre-Dame Cathedral set from~\cite{SnavelyData}, composed of $715$ images. This is an irregular collection of images and hence estimating the camera locations for all of these images (or a large subset) is challenging. Even for orientation estimation on this data set, previous works usually discard a large subset (see, e.g.~\cite{HartleyRotations, MicaAmitSfM}). In our experiment, we use the focal lengths and radial distortion corrections from~\cite{SnavelyData}. We can accurately and efficiently estimate the camera locations for a subset of size $637$. This reduction in size is due to our rotation estimation step (\S\ref{sec:RotEstim}) (we can robustly recover rotations for a subset of size $687$) and due to the node removals during the distributed algorithm (\S\ref{sec:DistApproach}). We partition the whole graph into patches of sizes smaller than $150$ (\S\ref{sec:GraphPartition}), getting $20$ rigid patches in less than a minute (where, extraction of parallel rigid components takes a total of $42$ secs). The related SDRs are solved in parallel, in about $21$ mins. Finally, the stitching of the patches (\S\ref{sec:PatchReg} and \S\ref{sec:GlobalStitch}) takes $57$ secs. \\
\indent We assess the performance of our algorithm in terms of the NRMSE measure~(\ref{eq:NRMSE}), using the location estimates of~\cite{SnavelyData} as the ground truth. We first get an NRMSE of $0.104$, and then apply PBA once (with an initial $3$D structure of about $204$K points) to get an NRMSE of $0.054$, and an average reprojection error of $0.43$ pixels, in less than $2$ mins. The resulting $3$D structure is also provided in Figure~\ref{fig:Notre3D}. \\
\begin{figure}[!htbp]
\centering
\subfloat[]{\includegraphics[trim=0cm 0cm 0cm 0cm, clip=true, width=0.49\linewidth]{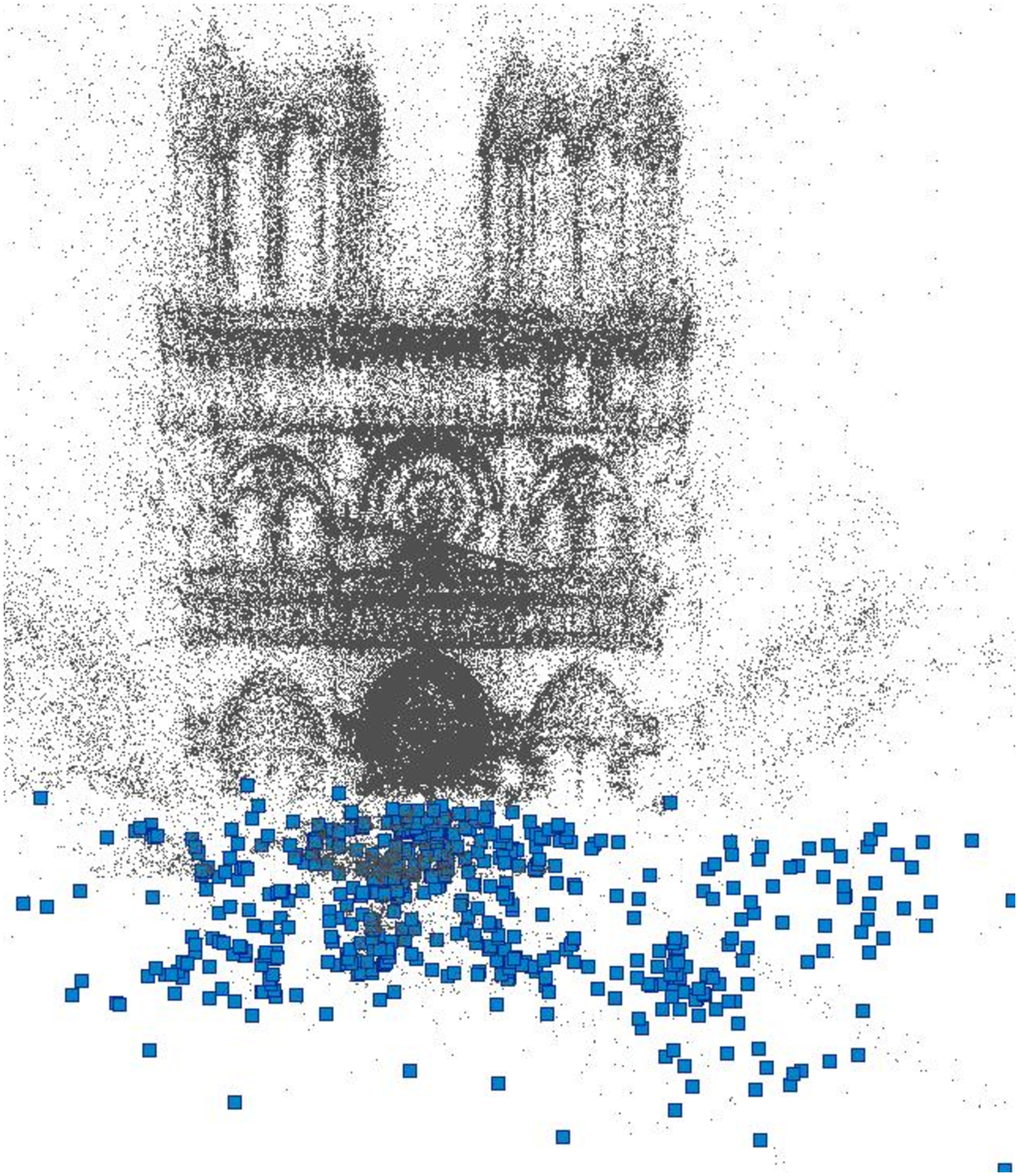}}\hspace{0.15in}
\subfloat[]{\includegraphics[trim=0cm 0cm 0cm 0cm, clip=true, width=0.42\linewidth]{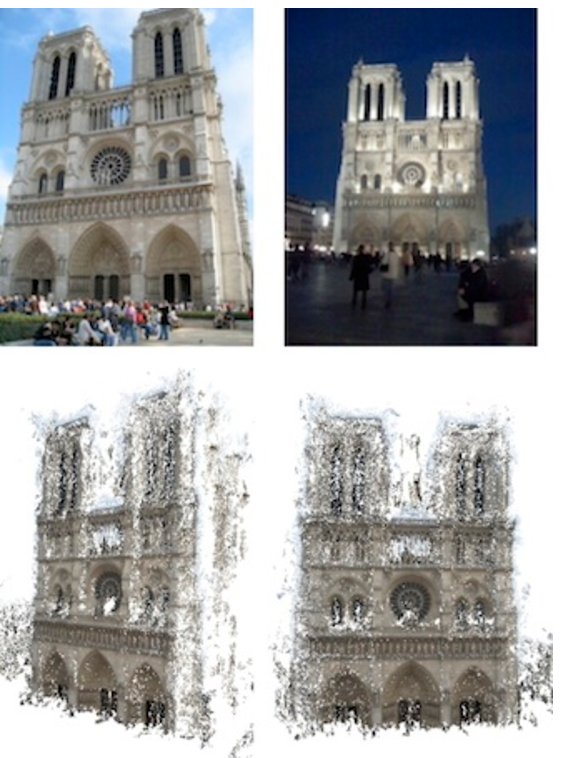}}
 \caption{{\rm(a)} Sparse $3$D structure of $\mathord{\sim}204$K points and (some of) $637$ estimated camera locations (in blue) for the Notre-Dame data {\rm(b)} Sample images and snapshots of the dense $3$D reconstruction\label{fig:Notre3D}}
\end{figure}
\indent We also compare our results to those of the LS and $\ell_{\infty}$ solvers applied to the whole graph and in a distributed fashion. In both cases, the LS and $\ell_{\infty}$ solvers resulted in very large errors\footnote{We note that (a slightly modified version of) the least squares solver in~\cite{MicaAmitSfM} achieves reasonable accuracy for the Notre-Dame data set when a significant number of the images are discarded.} (also, because of the very low quality of initial $3$D structures for these solutions, PBA produced no improvements). The NRMSE results are summarized in Table~\ref{tab:NotreRes}. Also, a snapshot of the location estimates for our distributed algorithm and the LS solver, demonstrating the collapsing LS solution, are provided in Figure~\ref{fig:NotreSDRvsL2}.
\begin{figure}[!htbp]
\begin{center}
\includegraphics[trim=1.5cm 0.75cm 1.4cm 0cm, clip=true, width=0.7\linewidth]{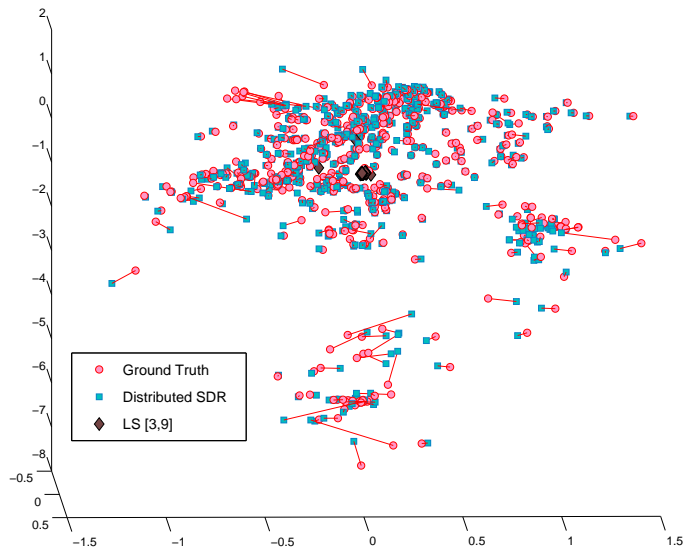}
\end{center}
   \caption{$637$ camera locations of the Notre-Dame data set (with $715$ images) estimated using the distributed SDR. The solution of~{\rm\cite{SnavelyData}} is taken as the ground truth. The collapsing solution of the least squares (LS) solver of~{\rm\cite{MicaAmitSfM,BATL2}} is also provided.\label{fig:NotreSDRvsL2}}
\end{figure}

\begin{table}[!htbp]
\caption{Location estimation errors for the Notre-Dame data.\label{tab:NotreRes}}
\begin{center}\footnotesize
\begin{tabular}{|l|c|}
\hline
Method & NRMSE \\
\hline\hline
Distributed SDR & 0.104 \\
Distributed SDR (followed by PBA) & 0.054 \\
Distributed LS~\cite{MicaAmitSfM,BATL2} & 1.087\\
LS~\cite{MicaAmitSfM,BATL2} & 1.392\\
Distributed $\ell_{\infty}$~\cite{HartleySim}  & 1.125\\
$\ell_{\infty}$~\cite{HartleySim}  & 1.273\\
\hline
\end{tabular}
\end{center}
\end{table}

\section{Conclusion and Future Work}
\label{sec:Conclusion}
We formulated a semidefinite relaxation for estimating positions from pairwise line measurements, and applied it to the problem of camera location estimation in structure from motion. We elucidated the importance of parallel rigidity in determining uniqueness and well-posedness of the estimation problem and provided rigorous analysis for stability of our SDR. Also, we introduced an alternating direction method to solve the SDR and an efficient, distributed version of our SDR designed to handle very large sets of input images. 
\newline \indent In the context of structure from motion, our algorithm can be used to efficiently and stably obtain the locations and orientations of the corresponding cameras, i.e. the camera motion, in order to produce a high-quality initial point for reprojection error minimization algorithms, as demonstrated by our experiments on real image sets. We also note that, for collections of images taken from an approximately planar surface (which is usually the case for images of large $3$D structures taken by visitors), the pairwise lines can also be well-approximated to lie within the same plane, and hence the SDR can take this prior knowledge to allow a more efficient (and perhaps, more accurate) computation of camera locations rendered by the reduction of the problem from $3$D to $2$D. Also in SfM, since the sign information (i.e. the directions) of the pairwise lines between camera locations can be estimated, an optimization framework that uses this additional information is a topic for further study (see, e.g., \cite{LUDSfM,TronVidalDist}).
\newline \indent As future work, we plan to investigate and explain the tightness of the SDR, i.e. to characterize the conditions under which it returns a rank-$1$ matrix as its solution. Also, we plan to apply the SDR to the problem of sensor network localization with bearing information.

\appendix
\section{Parallel Rigidity}
\label{Apdx:ParallelRigidity}
In this appendix, we review fundamental concepts and results in parallel rigidity theory (also see~\cite{ErenNetwork,ErenNetwork2,WhiteleyMatroidBook,WhiteleyMatroid} and the references therein). We begin with the concept of {\em point formation}. A $d$-dimensional point formation $\mathbb{F}_{\mathbf{p}}$ at $\mathbf{p} = \left[\begin{smallmatrix} \mathbf{p}_1^T & \mathbf{p}_2^T & \ldots & \mathbf{p}_n^T\end{smallmatrix}\right]^T$ is a set of $n$ points $\left\{\mathbf{p}_1,\mathbf{p}_2,\ldots,\mathbf{p}_n\right\} \subseteq \R^d$ (assumed to be separate, i.e. $\mathbf{p}_i \neq \mathbf{p}_j$), together with a set $E$ of {\em links}, satisfying $E \subseteq \left\{(i,j) : i \neq j, \ i,j \in \{1,2,\ldots,n\} \right\}$. For the camera location estimation problem, we think of the points $\mathbf{p}_i$ as representing the camera locations and the pairs $(i,j)\in E$ are used to denote the camera pairs, between which there is a pairwise measurement. Note that each formation $\mathbb{F}_{\mathbf{p}}$ uniquely determines a {\em graph} $G_{\mathbb{F}_{\mathbf{p}}} \defeq (V,E)$, having the vertex set $V = \left\{1,2,\ldots,n\right\}$ and the edge set $E$ of $\mathbb{F}_{\mathbf{p}}$, and also a {\em measurement function} $\Gamma_{\mathbb{F}_{\mathbf{p}}}$, whose value at $(i,j)\in E$ is the measured quantity between $\mathbf{p}_i$ and $\mathbf{p}_j$ (to keep the notation simple in \S\ref{sec:ParRigidity}, by abuse of notation, we refer to the set $\{\Gamma_{ij}\}_{(i,j)\in E_t}$ of measurements $\Gamma_{ij}\defeq \Gamma_{\mathbb{F}_{\mathbf{p}}}(i,j)$, defined on $E_t$ of $G_t = (V_t,E_t)$, as {\em the formation}). In order to represent the {\em pairwise line measurements}, we use a measurement function given by $\Gamma_{\mathbb{F}_{\mathbf{p}}}(i,j) = (\mathbf{p}_i - \mathbf{p}_j)(\mathbf{p}_i - \mathbf{p}_j)^T/\|\mathbf{p}_i - \mathbf{p}_j\|_2^2$. In the literature (see, e.g.,~\cite{ErenNetwork,ErenNetwork2}), the pairwise measurements are considered only in terms of the {\em direction constraints} they imply. These constraints are used to define {\em parallel point formations} $\mathbb{F}_{\mathbf{q}}$ of a formation $\mathbb{F}_{\mathbf{p}}$ (explained below), and are homogeneous equations given by
\begin{eqnarray}
\label{eq:ClassicalDirContrnts}
(\mathbf{p}_i - \mathbf{p}_j)_{N_1}^T(\mathbf{q}_i-\mathbf{q}_j) &=& 0 , \   (i,j) \in E \nonumber \\
&\vdots& \nonumber \\
(\mathbf{p}_i - \mathbf{p}_j)_{N_{d-1}}^T(\mathbf{q}_i-\mathbf{q}_j) &=& 0 , \   (i,j) \in E \nonumber
\end{eqnarray}
where $(\mathbf{p}_i - \mathbf{p}_j)_{N_i}$, for $i =1,\ldots, d-1$, are (linearly independent) vectors that span the subspace orthogonal to $p_i-p_j$, and therefore, uniquely define the line between $\mathbf{p}_i$ and $\mathbf{p}_j$. We use the measurement function $\Gamma_{\mathbb{F}_{\mathbf{p}}}$ to compactly represent these equations by $(I_d - \Gamma_{\mathbb{F}_{\mathbf{p}}}(i,j))(\mathbf{q}_i-\mathbf{q}_j) = 0, (i,j) \in E$. Also, note that, for the camera location estimation problem, $\Gamma_{\mathbb{F}_{\mathbf{p}}}$ encapsulates the maximal information we can extract from the epipolar constraints, e.g., we cannot estimate the {\em signed pairwise lines} $(\mathbf{p}_i - \mathbf{p}_j)/\|\mathbf{p}_i - \mathbf{p}_j\|_2$ based solely on the epipolar constraints (see \S\ref{sec:CamMotEst} for further details).\\
\indent Two point sets $\left\{\mathbf{p}_1,\mathbf{p}_2,\ldots,\mathbf{p}_n\right\}$ and $\left\{\mathbf{q}_1,\mathbf{q}_2,\ldots,\mathbf{q}_n\right\}$ in $\R^d$ are said to be {\em congruent} if there exist $\mathbf{x}\in\R^d$ and $c\in\R,$ such that $c\mathbf{p}_i + \mathbf{x} = \mathbf{q}_i$, for all $i\in\{1,2,\ldots,n\}$, i.e. congruent point sets can be obtained from one another by translation, scaling or negation. A point formation $\mathbb{F}_{\mathbf{p}}$ that is uniquely determined, up to congruence, by its graph $G_{\mathbb{F}_{\mathbf{p}}}$ and its measurement function $\Gamma_{\mathbb{F}_{\mathbf{p}}}$ is called {\em globally parallel rigid}.\\
\indent For a given formation $\mathbb{F}_{\mathbf{p}}$ in $\R^d$, a {\em parallel point formation} $\mathbb{F}_{\mathbf{q}}$ is a point formation (with the same graph $G_{\mathbb{F}_{\mathbf{p}}} = (V,E)$) in which $\mathbf{p}_i - \mathbf{p}_j$ is parallel to $\mathbf{q}_i - \mathbf{q}_j$, for all $(i,j)$ in $E$ (i.e., $\Gamma_{\mathbb{F}_{\mathbf{p}}}(i,j) = \Gamma_{\mathbb{F}_{\mathbf{q}}}(i,j)$ on $E$). It is clear that congruence transformations, i.e. translations, scalings and negation, produce {\em trivial} parallel point formations of the original point formation, any other parallel formation is termed {\em non-trivial}. A point formation that does not admit any non-trivial parallel point formations is called a {\em parallel rigid} point formation, otherwise it is called {\em flexible} (see Figure~\ref{fig:ParallelRigidityEx} for a simple example). We note that, in contrast to the case of classical rigidity involving distance measurements, equivalence of global parallel rigidity and (simple) parallel rigidity turns out to be a rephrasing of definitions (also see~\cite{WhiteleyMatroidBook,JacksonJordanSurvey,KatzUnique,ErenNetwork}). \\
\indent The concept of parallel point formations allows us to obtain a linear algebraic characterization: Given a point formation $\mathbb{F}_{\mathbf{p}}$ with the graph $G_{\mathbb{F}_{\mathbf{p}}} =  (V,E)$, $\mathbb{F}_{\mathbf{q}}$ is a parallel formation if and only if its point set satisfies 
\begin{equation}
\label{eq:ParallelFormations}
 \left(I_d - \Gamma_{\mathbb{F}_{\mathbf{p}}}(i,j)\right)\left(\mathbf{q}_i-\mathbf{q}_j\right) = 0 \ , \   (i,j) \in E
\end{equation}
which can be rewritten in matrix form as
\begin{equation}
\label{eq:ParallelFormationsMatrixForm}
 R_{\mathbb{F}_{\mathbf{p}}}\mathbf{q} = 0
\end{equation}
where, $R_{\mathbb{F}_{\mathbf{p}}} \in \R^{d|E|\times d|V|}$ is termed {\em the parallel rigidity matrix} of the formation $\mathbb{F}_{\mathbf{p}}$ (see, e.g.~\cite{ErenNetwork}, for a slightly different, but equivalent formulation). Here, point sets of the trivial parallel formations of $\mathbb{F}_{\mathbf{p}}$ span a $d+1$ dimensional subspace of the null space of $R_{\mathbb{F}_{\mathbf{p}}}$. As a result, $\mathbb{F}_{\mathbf{p}}$ is parallel rigid if and only if $\mbox{dim}\left(\mathcal{N}\left(R_{\mathbb{F}_{\mathbf{p}}}\right)\right) = d+1$, i.e. $\rank\left(R_{\mathbb{F}_{\mathbf{p}}}\right) = d|V| - (d+1)$ (note that, $R_{\mathbb{F}_{\mathbf{p}}}^TR_{\mathbb{F}_{\mathbf{p}}}$ is the matrix $L$ of the linear cost function in (\ref{eq:SDRTransEst}) with noiseless measurements). \\ 
\indent Now, we consider the generic properties of formations. A point $\mathbf{p}$ in $\R^{dn}$ (or the point set $\{\mathbf{p}_1,\mathbf{p}_2,\ldots,\mathbf{p}_n\}$ in $\R^d$) is said to be {\em generic} if its $dn$ coordinates are algebraically independent, i.e. there is no non-zero polynomial $\psi$ on $\R^{dn}$, with integer coefficients, satisfying $\psi(\mathbf{p}) = 0$ (for a more general definition see~\cite{GSSbook}). The set of generic points is an open dense subset of $\R^{dn}$. A graph $G = (V,E)$ is called {\em generically parallel rigid} (in $\R^d$) if, for a generic point $\mathbf{p}\in\R^{d|V|}$, the formation $\mathbb{F}_{\mathbf{p}}$ having the underlying graph $G$ is parallel rigid (in fact, if $G$ of $\mathbb{F}_{\mathbf{p}}$ is generically parallel rigid, then $\mathbb{F}_{\mathbf{p}}$ is parallel rigid for all generic $\mathbf{p}$). Also, a formation $\mathbb{F}_{\mathbf{p}}$ is called a {\em generically parallel rigid} formation (in $\R^d$) if its underlying graph is generically parallel rigid (in $\R^d$). Generic parallel rigidity is a combinatorial property of the underlying graph as characterized by Theorem~\ref{thm:LamanConds} (also see~\cite{ErenNetwork,ErenNetwork2,KatzUnique,WhiteleyMatroidBook,WhiteleyMatroid,JacksonJordanSurvey}). Combinatorial conditions of Theorem~\ref{thm:LamanConds} also translate to efficient combinatorial algorithms for generic parallel rigidity testing (see, e.g.,~\cite{PebbleGame}). \\
\indent There is also a linear algebraic characterization of generically parallel rigid formations using the notion of {\em generic rank}: The generic rank of the parallel rigidity matrix $R$ is given by $\mbox{GenericRank}(R) = \max\{\rank(R_{\mathbb{F}_{\mathbf{p}}}), \ \mathbf{p}\in\R^{d|V|}\}$, which clearly depends only on the underlying graph. A formation $\mathbb{F}_{\mathbf{p}}$ (and hence its underlying graph $G = (V,E)$) is generically parallel rigid in $\R^d$ if and only if $\mbox{GenericRank}(R) = d|V|-(d+1)$. Also, the set of points with maximal rank, i.e. $\{\mathbf{p}\in\R^{d|V|}: \rank(R_{\mathbb{F}_{\mathbf{p}}}) = \mbox{GenericRank}(R)\}$, is an open dense subset of $\R^{d|V|}$. As a result, similar to the randomized algorithm of~\cite{HendricksonRigid} for (classical) generic rigidity testing, we propose the following randomized test for generic parallel rigidity: 
\begin{remunerate}
\item Given a formation on the graph $G = (V,E)$, randomly sample $\mathbf{p}\in\R^{d|V|}$ from an absolutely continuous (w.r.t.~Lebesgue measure) probability measure (e.g., let $\mathbf{p}$ be sampled from i.i.d. Gaussian distribution) and centralize $\mathbf{p}$ such that $\sum_{i=1}^{|V|} \mathbf{p}_i = \mathbf{0}_d$.
\item Construct an orthogonal basis for the trivial null-space of $R_{\mathbb{F}_{\mathbf{p}}}$: For $\mathbf{u}_i = \mathbf{1}_{|V|}\otimes\mathbf{e}_i$ ($\mathbf{e}_i$ denoting the $i$'th canonical basis vector in $\R^d$), such a basis is given by $\{\mathbf{u}_1,\mathbf{u}_2,\ldots,\mathbf{u}_d,\mathbf{p}\}$.
\item To check if $\rank(R_{\mathbb{F}_{\mathbf{p}}}) = d|V|-(d+1)$ or not, compute the smallest eigenvalue $\lambda_{1}(W_{\mathbb{F}_{\mathbf{p}}})$ of $W_{\mathbb{F}_{\mathbf{p}}} = R_{\mathbb{F}_{\mathbf{p}}}^TR_{\mathbb{F}_{\mathbf{p}}} + U_{\mathbf{p}}U_{\mathbf{p}}^T$, where $U_{\mathbf{p}} = \left[\begin{smallmatrix}\mathbf{u}_1&\mathbf{u}_2&\ldots&\mathbf{u}_d&\mathbf{p}\end{smallmatrix}\right]$. If $\lambda_{1}(W_{\mathbb{F}_{\mathbf{p}}}) > \epsilon$ (where $\epsilon$ is a small positive constant set to prevent numerical precision errors), declare $G$ to be generically parallel rigid, otherwise declare $G$ to be flexible.
\end{remunerate}

\indent This randomized test correctly decides (up to precision errors) in the generic rigidity of $G$ with probability $1$ (i.e., up to precision errors, our procedure can produce a false negative with probability $0$, and does not produce a false positive). Also note that, $\lambda_{1}(W_{\mathbb{F}_{\mathbf{p}}}) = \lambda_{dn}(W_{\mathbb{F}_{\mathbf{p}}}) + \lambda_{1}(W_{\mathbb{F}_{\mathbf{p}}} - \lambda_{dn}(W_{\mathbb{F}_{\mathbf{p}}})I_{dn})$, and since $W_{\mathbb{F}_{\mathbf{p}}}$ is positive semidefinite, $\lambda_{dn}(W_{\mathbb{F}_{\mathbf{p}}})$ and $\lambda_{1}(W_{\mathbb{F}_{\mathbf{p}}} - \lambda_{dn}(W_{\mathbb{F}_{\mathbf{p}}})I_{dn})$ are largest magnitude eigenvalues, which can be computed (e.g., by the power method\footnote{Although every iteration of the power method has time complexity $\mathcal{O}(m)$, the number of iterations is greater than $\mathcal{O}(1)$ as it depends on the spectral gap of $W_{\mathbb{F}_{\mathbf{p}}}$.}) in $\mathcal{O}(m)$ time. For fixed $d$, this yields a time complexity of $\mathcal{O}(m)$ (dominated by the complexity of step $2$) for the randomized test (compare to the time complexity $O(n^2)$ of the pebble game algorithm, which is, however, an integer algorithm, i.e. is free of numerical errors). Also considering its simplicity, we choose to use this test for testing parallel rigidity.
\section{Proof of Theorem~\ref{thm:WeakStability}}
\label{Apdx:SDRStability}
Let $\tilde{c}$ denote the suboptimal constant for (\ref{eq:ErrorMeasure}) given by $\tilde{c} = (\tr(T^*))^{-1}$. We first consider the decomposition of $\tilde{c}T^*$ in terms of the null space $\mathcal{S}$ of $L^0$ and its complement $\bar{\mathcal{S}}$, that is we let 
\begin{equation}
\label{eq:Tdecompose}
\tilde{c}T^* = X + Y + Z
\end{equation}
where $X \in \mathcal{S}\otimes\mathcal{S}$, $Y \in \bar{\mathcal{S}}\otimes\bar{\mathcal{S}}$ and $Z \in (\bar{\mathcal{S}}\otimes\mathcal{S})\oplus(\mathcal{S}\otimes\bar{\mathcal{S}})$ \footnote{For two subspaces $\mathcal{U}$ and $\mathcal{V}$, the tensor product $\mathcal{U}\otimes \mathcal{V}$ denotes the space of matrices spanned by $\rank$-1 matrices $\{\mathbf{u}\mathbf{v}^T: \mathbf{u}\in\mathcal{U}, \ \mathbf{v}\in\mathcal{V}\}$.}. Using this decomposition, we can bound $\delta(T^*,T_0)$ in terms of $\tr(Y)$:
\begin{lemma}
\label{lem:TraceBnd}
\begin{equation}
\label{eq:TraceBnd}
\delta(T^*,T_0) \leq \|\tilde{c}T^* - T_0\|_F \leq \sqrt{2\tr(Y)} 
\end{equation}
\end{lemma}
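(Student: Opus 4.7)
The first inequality is immediate: $\tilde c = 1/\tr(T^*) \geq 0$ is a feasible choice in the infimum defining $\delta(T^*,T_0)$, so $\delta(T^*,T_0) \leq \|\tilde c T^* - T_0\|_F$. The real work is in the second inequality.

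My plan is to expand $\|\tilde c T^* - T_0\|_F^2$ through the orthogonal decomposition in the hypothesis and then bound each piece in terms of $\tr(Y)$. Since the three subspaces $\mathcal{S}\otimes\mathcal{S}$, $\bar{\mathcal{S}}\otimes\bar{\mathcal{S}}$, and $(\mathcal{S}\otimes\bar{\mathcal{S}})\oplus(\bar{\mathcal{S}}\otimes\mathcal{S})$ are mutually orthogonal in the Frobenius inner product, and $T_0 \in \mathcal{S}\otimes\mathcal{S}$ (because $\mathbf{t}^0 \in \mathcal{S}$, as noted in the proof of Proposition~\ref{propo:ExactRecovery}), Pythagoras will yield
\begin{equation*}
\|\tilde c T^* - T_0\|_F^2 \;=\; \|X - T_0\|_F^2 + \|Y\|_F^2 + \|Z\|_F^2 ,
\end{equation*}
and the target is to bound each summand by a function of $\tr(Y)$ so that the three bounds total $2\tr(Y)$.

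The step I expect to be the crux is exploiting the feasibility constraint $\tr(HT^*)=0$. Since $T^* \succeq 0$, this forces $HT^* = 0$, so every column of $T^*$ is orthogonal to $\operatorname{range}(H) = \operatorname{span}\{\mathbf{v}_H^1,\ldots,\mathbf{v}_H^d\}$. Because the proof of Proposition~\ref{propo:ExactRecovery} identifies $\mathcal{S}$ as the orthogonal direct sum $\operatorname{span}(\mathbf{t}^0)\oplus\operatorname{range}(H)$, this collapses $X$ onto the one-dimensional subspace spanned by $T_0$: using $\tr(T_0)=\|\mathbf{t}^0\|_2^2=1$, I can write $X = \tr(X)\,T_0$. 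Combining with $\tr(Z)=0$ (cross-block) and $\tr(\tilde c T^*)=1$ gives $\tr(X)=1-\tr(Y)$, and hence $\|X-T_0\|_F^2 = (1-\tr(X))^2 = \tr(Y)^2$.

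The remaining two bounds should be routine PSD estimates. Since $Y$ is a principal block of the PSD matrix $\tilde c T^*$, it is PSD, so $\|Y\|_F^2 = \sum_i \lambda_i(Y)^2 \le \bigl(\sum_i \lambda_i(Y)\bigr)^2 = \tr(Y)^2$. For $\|Z\|_F^2$, restricting $\tilde c T^*$ to $\operatorname{span}(\mathbf{t}^0)\oplus\bar{\mathcal{S}}$ produces a PSD block matrix $\bigl(\begin{smallmatrix}\tr(X) & \mathbf{z}^T \\ \mathbf{z} & Y\end{smallmatrix}\bigr)$ (the $\operatorname{range}(H)$ rows and columns being zero by the previous step), and the Schur complement yields $\|\mathbf{z}\|_2^2 \le \tr(X)\tr(Y)=(1-\tr(Y))\tr(Y)$, so $\|Z\|_F^2 = 2\|\mathbf{z}\|_2^2 \le 2(1-\tr(Y))\tr(Y)$. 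Summing the three bounds gives $\tr(Y)^2+\tr(Y)^2+2(1-\tr(Y))\tr(Y)=2\tr(Y)$, and taking square roots delivers the claim. The whole argument hinges on the reduction $X=\tr(X)T_0$ from the constraint $HT^*=0$; once that is in place, everything else is short PSD arithmetic.
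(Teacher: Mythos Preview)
Your proposal is correct and follows essentially the same route as the paper's own proof: the orthogonal splitting of $\|\tilde c T^*-T_0\|_F^2$, the reduction $X=aT_0$ via $\tr(HT^*)=0$ (equivalently $HT^*=0$) together with $\tr(\tilde c T^*)=1$ to get $\|X-T_0\|_F^2=\tr(Y)^2$, the PSD bound $\|Y\|_F\le\tr(Y)$, and the Schur complement bound $\|Z\|_F^2\le 2(1-\tr(Y))\tr(Y)$ are exactly the ingredients the paper uses. The only cosmetic difference is that you make the implication $\tr(HT^*)=0\Rightarrow HT^*=0$ explicit, whereas the paper passes directly to $X=a\mathbf{t}^0(\mathbf{t}^0)^T$.
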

\begin{proof}
First we note that the orthogonal decomposition (\ref{eq:Tdecompose}) and $T_0\in\mathcal{S}\otimes\mathcal{S}$ implies
\begin{equation}
\label{eq:SumSqrs}
\delta^2(T^*,T_0) \leq \|\tilde{c}T^* - T_0\|_F^2 = \|X - T_0\|_F^2 + \|Y\|_F^2 + \|Z\|_F^2
\end{equation}
Here, since the formation is parallel rigid, we get $dim(\mathcal{S}) = d+1$, where $\mathcal{S}$ is spanned by the $d$ eigenvectors of $H$ (corresponding to its only nonzero eigenvalue $n$) and $\mathbf{t}^0$. Since $\tr(\tilde{c}T^*H) = 0$, we get $X = a\mathbf{t}^0(\mathbf{t}^0)^T$ (with $1\geq a = \tilde{c}(\mathbf{t^0})^TT^*\mathbf{t}^0\geq0$). Also, since $\tr(\tilde{c}T^*) = 1$ and $\tr(Z) = 0$, we get
\begin{equation}
\label{eq:XnormBnd}
\tr(Y) = 1 - \tr(X) = 1-a = \tr((1-a)\mathbf{t}^0(\mathbf{t}^0)^T) = \tr(T_0 - X) = \|X-T_0\|_F
\end{equation}
We now consider the spectral decomposition of $L^0$ given by $L^0 = U_0\Sigma_0 U_0^T$, where $U_0 = [\begin{smallmatrix}\mathbf{v}_1&\mathbf{v}_2&\ldots&\mathbf{v}_d&\mathbf{t}^0&\mathbf{s}_{d+2}&\ldots&\mathbf{s}_{dn}\end{smallmatrix}]$, for $\mathbf{v}_i$ denoting the $d$ eigenvectors of $H$ and $\{\mathbf{s}_{d+2},\ldots,\mathbf{s}_{dn}\}$ is an arbitrary (orthonormal) basis for $\bar{\mathcal{S}}$. The representations of $X,Y,Z$ in this basis, given by $\tilde{X} = U_0^TXU_0$, $\tilde{Y} = U_0^TYU_0$ and $\tilde{Z} = U_0^TZU_0$, are as follows: $\tilde{X}$ has a single nonzero entry $\tilde{X}_{d+1,d+1} = a$, $\tilde{Y}$ is supported on its lower $(dn-(d+1))\times(dn-(d+1))$ block, $\tilde{Z}$ is supported on its $(d+1)$'th row and column except the first $d+1$ entries. Hence we get
\begin{equation}
\label{eq:QbasisRepresent}
\tilde{c}U_0^TT^*U_0 = \left[\begin{matrix}0_{d\times d}&&0_{d\times (dn-d)}\\&a&\tilde{\mathbf{z}}^T\\ 0_{(dn-d)\times d} &\tilde{\mathbf{z}} & \tilde{Y}\end{matrix}\right] \succeq 0
\end{equation}
where $\tilde{\mathbf{z}}$ denotes the nonzero entries of $\tilde{Z}$. By a Schur complement argument, (\ref{eq:QbasisRepresent}) implies 
\begin{equation}
\label{ZnormBnd}
\tilde{Y} - \frac{\tilde{\mathbf{z}}\tilde{\mathbf{z}}^T}{1-\tr(\tilde{Y})} \succeq 0 \ \Rightarrow \ 2\tr(Y)(1-\tr(Y)) \geq 2\|\tilde{\mathbf{z}}\|_2^2 = \|\tilde{Z}\|_F^2 = \|Z\|_F^2
\end{equation}
where we use $a = 1-\tr(Y)$ and $\tr(\tilde{Y}) = \tr(Y)$. Combining (\ref{eq:SumSqrs}), (\ref{eq:XnormBnd}), (\ref{ZnormBnd}) with the simple fact $\|Y\|_F \leq \tr(Y)$, we get the assertion of  Lemma~\ref{lem:TraceBnd}. 
\qquad\end{proof}
\newline The next step is to bound $\tr(Y)$. We provide the result in Lemma~\ref{lem:TrYBnd}, where, $L_G$ denotes the Laplacian of the graph $G_t$ and the parameter $\kappa$ is given by $\kappa = (\min_{(i,j)\in E_t}\|\mathbf{t}^0_i-\mathbf{t}_j^0\|_2^2)^{-1}$.
\begin{lemma}
\label{lem:TrYBnd}
\begin{equation}
\label{eq:TrYBnd}
\tr(Y) \leq \alpha_1\epsilon\|\tilde{c}T^*-T_0\|_F + \alpha_2\epsilon^2, 
\end{equation}
where $\alpha_1 = \frac{\sqrt{2}m}{\lambda_{d+2}(L^0)}$ and $\alpha_2 = (\frac{\kappa\sqrt{d}\|L_{G}\|_F}{m} + 1)\frac{\lambda_{n}(L_G)}{\lambda_{d+2}(L^0)}$.
\end{lemma}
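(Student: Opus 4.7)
The plan is to bound $\tr(Y)$ by relating it to $\tr(L^0\tilde{c}T^*)$ and then exploiting the optimality of $T^*$ against a suitably rescaled feasible version of $T_0$. First, since $L^0$ annihilates $\mathcal{S}$ and the decomposition $\tilde{c}T^* = X+Y+Z$ from the proof of Lemma~\ref{lem:TraceBnd} places $X\in\mathcal{S}\otimes\mathcal{S}$ and $Z\in(\mathcal{S}\otimes\bar{\mathcal{S}})\oplus(\bar{\mathcal{S}}\otimes\mathcal{S})$, we have $\tr(L^0 X)=\tr(L^0 Z)=0$, so $\tr(L^0\tilde{c}T^*)=\tr(L^0 Y)$. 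As $Y\succeq 0$ and is supported on $\bar{\mathcal{S}}$, and $L^0$ restricted to $\bar{\mathcal{S}}$ has smallest eigenvalue $\lambda_{d+2}(L^0)$, we obtain $\tr(L^0 Y)\geq\lambda_{d+2}(L^0)\tr(Y)$. Hence it suffices to bound $\tr(L^0\tilde{c}T^*)$ from above by the right-hand side of~(\ref{eq:TrYBnd}) times $\lambda_{d+2}(L^0)$.

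Observe that $\kappa T_0$ is feasible for~(\ref{eq:SDRTransEst}), since $\tr(C^{ij}\kappa T_0)=\kappa\|\mathbf{t}_i^0-\mathbf{t}_j^0\|_2^2\geq 1$ by the definition of $\kappa$, and $\tr(H\kappa T_0)=0$. Writing $L^0=L-\Delta L$ and using $\tr(L^0 T_0)=0$ (since $T_0\in\mathcal{S}\otimes\mathcal{S}$), the optimality of $T^*$ yields $\tr(LT^*)\leq\kappa\tr(L T_0)=\kappa\tr(\Delta L\,T_0)$. Setting $E\defeq\tilde{c}T^*-T_0$ and expanding $\tr(\Delta L\,\tilde{c}T^*)=\tr(\Delta L\,T_0)+\tr(\Delta L\,E)$, we then arrive at
\[\tr(L^0\tilde{c}T^*)=\tilde{c}\,\tr(LT^*)-\tr(\Delta L\,\tilde{c}T^*)\leq(\tilde{c}\kappa-1)\tr(\Delta L\,T_0)-\tr(\Delta L\,E).\]
The remainder of the proof amounts to controlling $|\tr(\Delta L\,T_0)|$, the factor $|\tilde{c}\kappa-1|$, and $|\tr(\Delta L\,E)|$.

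For the first quantity, since $T_0=\mathbf{t}^0(\mathbf{t}^0)^T$ we have
\[\tr(\Delta L\,T_0)=\sum_{(i,j)\in E_t}\|\mathbf{t}_i^0-\mathbf{t}_j^0\|_2^2\,(\gamma_{ij}^0)^T\Delta Q_{ij}\gamma_{ij}^0,\]
and $(\gamma_{ij}^0)^T\Delta Q_{ij}\gamma_{ij}^0 = (\gamma_{ij}^0)^T(\Gamma_{ij}^0-\Gamma_{ij})\gamma_{ij}^0 = 1-((\gamma_{ij}^0)^T\gamma_{ij})^2 = \sin^2\theta_{ij} \leq \epsilon^2$. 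Combining with $\sum_{(i,j)\in E_t}\|\mathbf{t}_i^0-\mathbf{t}_j^0\|_2^2 \leq \lambda_n(L_G)\|\mathbf{t}^0\|_2^2=\lambda_n(L_G)$ (applying the Rayleigh quotient bound to $L_G$ for each of the $d$ coordinate vectors of $\mathbf{t}^0$ and summing) yields $|\tr(\Delta L\,T_0)|\leq\epsilon^2\lambda_n(L_G)$. For $\tilde{c}$, summing the repulsion constraints gives $\tr((L_G\otimes I_d)T^*)\geq m$; by Cauchy--Schwarz combined with $\|T^*\|_F\leq\tr(T^*)$ (valid since $T^*\succeq 0$), we get $m\leq\|L_G\otimes I_d\|_F\,\tr(T^*)=\sqrt{d}\|L_G\|_F\tr(T^*)$, whence $\tilde{c}\leq\sqrt{d}\|L_G\|_F/m$ and therefore $|\tilde{c}\kappa-1|\leq\kappa\sqrt{d}\|L_G\|_F/m+1$, reproducing the factor in $\alpha_2$.

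The main obstacle is the estimate $|\tr(\Delta L\,E)|\leq\sqrt{2}m\epsilon\|E\|_F$. I would expand $\tr(\Delta L\,E)=\sum_{(i,j)\in E_t}\tr(\Delta Q_{ij}\bar{E}_{ij})$ with $\bar{E}_{ij}=E_{ii}+E_{jj}-E_{ij}-E_{ji}$, use $\|\Delta Q_{ij}\|_F=\sqrt{2}|\sin\theta_{ij}|\leq\sqrt{2}\epsilon$ via the identity $\|\Gamma_{ij}-\Gamma_{ij}^0\|_F^2=2\sin^2\theta_{ij}$, and apply Cauchy--Schwarz: $|\tr(\Delta L\,E)|\leq\sqrt{2}\epsilon\sum_{(i,j)}\|\bar{E}_{ij}\|_F$. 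The delicate step is controlling $\sum_{(i,j)}\|\bar{E}_{ij}\|_F$ in terms of $m\|E\|_F$; this requires exploiting the rank-2 block structure of $M_{ij}=(\mathbf{e}_i-\mathbf{e}_j)(\mathbf{e}_i-\mathbf{e}_j)^T$ and the fact that $\bar{E}_{ij}$ is obtained from $E$ via a bilinear contraction with $(\mathbf{e}_i-\mathbf{e}_j)\otimes I_d$, so that individual blocks $\|\bar{E}_{ij}\|_F$ are controlled by $\|E\|_F$ up to fixed constants. Assembling the three estimates and dividing by $\lambda_{d+2}(L^0)$ then delivers $\tr(Y)\leq\alpha_1\epsilon\|E\|_F+\alpha_2\epsilon^2$ with the stated constants.
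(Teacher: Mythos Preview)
Your argument follows the paper's proof almost exactly: the same splitting $\tr(L^0\tilde c T^*)\le\tr((L^0-L)(\tilde c T^*-T_0))+(\tilde c\kappa-1)\tr(LT_0)$ (with your $\Delta L = L - L^0$), the same spectral lower bound $\tr(L^0Y)\ge\lambda_{d+2}(L^0)\tr(Y)$, and identical estimates for $\tr(LT_0)\le\epsilon^2\lambda_n(L_G)$ and $\tilde c\kappa-1\le \kappa\sqrt d\|L_G\|_F/m+1$.

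The one place you diverge is the term you flag as the ``main obstacle,'' and there you make life harder than necessary. The paper does \emph{not} decompose $\tr((L^0-L)E)$ edge-by-edge; it applies a single Cauchy--Schwarz to the whole trace,
\[
\tr\bigl((L^0-L)E\bigr)\le\|L^0-L\|_F\,\|E\|_F,
\]
and then bounds $\|L^0-L\|_F$ by $\sum_{(i,j)\in E_t}\|\Gamma_{ij}-\Gamma_{ij}^0\|_F\le\sqrt{2}\,m\epsilon$, using exactly the $\sqrt{2}\,|\sin\theta_{ij}|$ computation you already carry out. Your route instead leaves you needing $\sum_{(i,j)}\|\bar E_{ij}\|_F\le m\|E\|_F$, which you call ``delicate'' and do not actually establish; the natural bound $\|\bar E_{ij}\|_F\le\|(\mathbf e_i-\mathbf e_j)\otimes I_d\|_2^{\,2}\,\|E\|_F=2\|E\|_F$ only yields $2m\|E\|_F$, and the ``rank-2 block structure'' considerations you gesture at do not recover the missing factor. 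So the gap is self-inflicted: drop the edge-wise splitting of $E$ in that term and apply Cauchy--Schwarz globally, as the paper does, and nothing remains to be delicate.
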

\begin{proof}
We start with a (loose) bound on $\tr(Y)$, given by
\begin{equation}
\label{eq:TrYBnd1}
\tr(L^0(\tilde{c}T^*)) = \tr(L^0Y) \geq \tr(\lambda_{d+2}(L^0)Y) = \lambda_{d+2}(L^0)\tr(Y)
\end{equation}
Now, in order to bound $\tr(L^0(\tilde{c}T^*))$, we consider the following partitioning
\begin{subequations}
\begin{align}
\tr(L^0(\tilde{c}T^*)) &= \tr((L^0-L)(\tilde{c}T^*)) + \tilde{c}\tr(LT^*) \\
\label{eq:BndPart1}
&= \tr((L^0-L)(\tilde{c}T^*-T_0)) - \tr(LT_0) + \tilde{c}\tr(LT^*) \\
\label{eq:BndPart2}
&\leq \tr((L^0-L)(\tilde{c}T^*-T_0)) + (\tilde{c}\kappa-1)\tr(LT_0)
\end{align}
\end{subequations}
where $L$ is given by (\ref{eq:CostLaplacian}), and (\ref{eq:BndPart1}) follows from $\tr(L^0T_0) = 0$ and (\ref{eq:BndPart2}) holds by the feasibility of $\kappa T_0$ and the optimality of $T^*$ for the SDR. Now, we use the noise model to bound the terms in (\ref{eq:BndPart2}). For the first term, we get
\begin{subequations}
\begin{align}
&\noindent \tr((L^0 - L)(\tilde{c}T^* - T_0)) \leq \|L^0 - L\|_F\|\tilde{c}T^*-T_0\|_F\\
&\leq \sum_{i\sim j} \|(\gamma^0_{ij} + \epsilon_{ij})(\gamma^0_{ij} +\epsilon_{ij})^T - \gamma^0_{ij}(\gamma^0_{ij})^T\|_F\|\tilde{c}T^* - T_0\|_F  \\
&= \sum_{i\sim j} \left(\|\epsilon_{ij}\|_2^4 + 4\|\epsilon_{ij}\|_2^2\epsilon_{ij}^T\gamma_{ij}^0 + 2(\epsilon_{ij}^T\gamma_{ij}^0)^2 + 2\|\epsilon_{ij}\|_2^2\right)^{\frac{1}{2}}\|\tilde{c}T^* - T_0\|_F \\
\label{eq:BndPart13}
& = \sum_{i\sim j}\left(2\|\epsilon_{ij}\|_2^2 - \|\epsilon_{ij}\|_2^4/2\right)^{\frac{1}{2}}\|\tilde{c}T^* - T_0\|_F \\ 
\label{eq:TrYBnd15}
&\leq \epsilon\sqrt{2}m\|\tilde{c}T^* - T_0\|_F
\end{align}
\end{subequations}
where (\ref{eq:BndPart13}) follows from $\|\gamma_{ij}\|_2^2 = \|\gamma_{ij}^0 + \epsilon_{ij}\|_2^2 = 1 + 2\epsilon_{ij}^T\gamma_{ij}^0 + \|\epsilon_{ij}\|_2^2 = 1$. For the second term in (\ref{eq:BndPart2}), we first have
\begin{subequations}
\begin{align}
&\tr(LT_0) = \tr((L-L^0)T_0) \\
&= \sum_{i\sim j} \tr((\gamma_{ij}^0(\gamma_{ij}^0)^T - \gamma_{ij}\gamma_{ij}^T)\gamma^0_{ij}(\gamma_{ij}^0)^T)\|\mathbf{t}_i^0-\mathbf{t}_j^0\|_2^2 \\
&= -  \sum_{i\sim j} ((\epsilon_{ij}^T\gamma_{ij}^0)^2 + 2\epsilon_{ij}^T\gamma_{ij}^0)\|\mathbf{t}_i^0-\mathbf{t}_j^0\|_2^2 \\
&= \sum_{i\sim j} (\|\epsilon_{ij}\|_2^2 - \frac{\|\epsilon_{ij}\|_2^4}{4})\|\mathbf{t}_i^0-\mathbf{t}_j^0\|_2^2 \\
\label{eq:TrYBnd24}
&\leq \epsilon^2\lambda_{n}(L_G)
\end{align}
\end{subequations}
In order to bound the multiplier $\tilde{c}\kappa - 1$, we use the feasibility of $T^*$
\begin{subequations}
\begin{align}
&m \leq \sum_{i\sim j} \tr(C^{ij}T^*) = \tr((L_G\otimes I_d)T^*) \leq \sqrt{d}\|L_G\|_F\tr(T^*) \\
\label{eq:TryBnd2Cst}
&\Rightarrow \ \tilde{c}\kappa - 1 \leq \frac{\kappa}{\tr(T^*)} + 1 \leq \frac{\kappa\sqrt{d}\|L_G\|_F}{m} + 1
\end{align} 
\end{subequations}
\indent Finally, combining (\ref{eq:TrYBnd1}), (\ref{eq:BndPart2}), (\ref{eq:TrYBnd15}), (\ref{eq:TrYBnd24}) and (\ref{eq:TryBnd2Cst}), we obtain the claim (\ref{eq:TrYBnd}) of Lemma~\ref{lem:TrYBnd}.
\qquad\end{proof}
\newline\indent We now use Lemmas~\ref{lem:TraceBnd} and~\ref{lem:TrYBnd} to obtain our \emph{SDR Stability} result:
\begin{proof} (Proof of Theorem~\ref{thm:WeakStability})
The (second) inequality in (\ref{eq:TraceBnd}) and (\ref{eq:TrYBnd}) provide a quadratic inequality for $\|\tilde{c}T^* - T_0\|_F$, given by
\begin{equation}
\label{eq:QuadIneqNT}
\|\tilde{c}T^* - T_0\|_F^2 - 2\epsilon\alpha_1\|\tilde{c}T^* - T_0\|_F - 2\epsilon^2\alpha_2 \leq 0
\end{equation}
Examining the roots of this polynomial immediately yields
\begin{equation}
\label{eq:WeakStaPoly}
\delta(T^*,T_0) \leq \|\tilde{c}T^* - T_0\|_F \leq  \epsilon\left[\alpha_1 + \left(\alpha_1^2 + 2\alpha_2\right)^{\frac{1}{2}}\right]
\end{equation}
which was to be shown.
\qquad\end{proof}
\section{Construction of ADM Framework}
\label{Apdx:ADMdetails}
We first introduce some notation: $\mathbf{1}_m$ and $\mathbf{0}_m$ denote all-ones and all-zeros vectors of dimension $m$. $\mbox{S}^{m\times m}$ denotes the space of $m\times m$ symmetric matrices. For $X\in\R^{m\times m}$, $\mbox{vec}(X)$ is the vector in $\R^{m^2}$ that contains the columns of $X$, stacked each on top of the next in the order that they appear in the matrix, and $\mbox{mat}(\mathbf{x})$ is the matrix $X$ such that $\mathbf{x} = \mbox{vec}(X)$. We also use $0$ for the all-zeros matrix when its dimensions are obvious from the context. \\
\indent In order to rewrite the SDR in standard form, consider the matrix variable $X$ and the (non-negative) slack variable $\mathbf{\nu} \in \R^{m}$ satisfying
\begin{subequations}
\begin{align}
\label{eq:Xvar}
X &= \left[\begin{matrix} T  & 0 \\ 0& \Diag(\mathbf{\nu})\end{matrix}\right] \ ,\\ 
1 &= \tr(C^{ij}T) -  \nu_{ij} = \tr(A^{(1)}_{ij}X)\ , \ (i,j) \in E_t 
\end{align}
\end{subequations}
where $A^{(1)}_{ij}$ is the matrix corresponding to the linear functional $X \rightarrow \tr(C^{ij}T) -  \mathbf{\nu}_{ij}$. Let $M$ denote the number of ({\em a priori}) zero entries of $X$, i.e. $M = 2dnm + m(m-1)$. We can also replace~(\ref{eq:Xvar}) by a set of linear equalities in $X$ given by $\tr(A^{(2)}_{k}X) = 0$, for $k = 1,\ldots,M/2$. Hence, for $X = \left[\begin{smallmatrix}T & \star \\ \star &\star \end{smallmatrix}\right]$, we get 
\begin{equation}
\left\{ \begin{matrix} T\succeq0 \\ \tr(C^{ij}T) \geq 1, \ (i,j)\in E_t \end{matrix}\right\} \iff \left\{ \begin{matrix} X\succeq0 \\ \tr(A^{(1)}_{ij}X) = 1, \ (i,j)\in E_t \\ \tr(A^{(2)}_{k}X) = 0, \ k = 1,\ldots,M/2 \end{matrix}\right\} \nonumber
\end{equation}
Now, we can rewrite~(\ref{eq:SDRTransEst}) in standard form (ignoring the equality constraint $\tr(HT) = 0$, for now)
\begin{subequations}
\label{eq:StndrdSDP}
\begin{align}
\underset{{\scriptstyle X}}{\text{minimize}}
& \ \ \tr(WX)\\
\text{subject to}
& \ \ \mathcal{A}(X) = \mathbf{b}\\
& \ \ X \succeq 0
\end{align}
\end{subequations}
where $W = \left[\begin{smallmatrix} L & 0 \\ 0 & 0\end{smallmatrix}\right]$ (for $L$ given by (\ref{eq:CostLaplacian})), $\mathbf{b} = \left[\begin{smallmatrix}\mathbf{1}_{m} \\ \mathbf{0}_{M/2}\end{smallmatrix}\right]$ and the linear operator $\mathcal{A}$ satisfies $\mathcal{A}(X) = \left[\begin{smallmatrix} \mathcal{A}^{(1)}(X)\\ \mathcal{A}^{(2)}(X)\end{smallmatrix}\right]$ for $\mathcal{A}^{(1)}$ and $\mathcal{A}^{(2)}$ given by
\begin{subequations}
\label{eq:Aopers}
\begin{align}
& \mathcal{A}^{(1)}:\mbox{S}^{dn+m\times dn+m} \rightarrow \R^{m} \ ; \ \ \ \mathcal{A}^{(1)}(X) = [\begin{matrix} \tr(A^{(1)}_{1}X)\ldots \tr(A^{(1)}_{m}X)\end{matrix}]^T\\
& \mathcal{A}^{(2)}:\mbox{S}^{dn+m\times dn+m} \rightarrow \R^{M/2} \ ; \ \ \ \mathcal{A}^{(2)}(X) = [\begin{matrix} \tr(A^{(2)}_{1}X)\ldots \tr(A^{(2)}_{M/2}X)\end{matrix}]^T
\end{align}
\end{subequations}
Here, we let $A^{(i)}$ ($i=1,2$) denote the matrices satisfying $\mathcal{A}^{(i)}(X) = A^{(i)}\mbox{vec}(X)$ and, hence, are given by
\begin{subequations}
\label{eq:dualStndrdSDPAmats}
\begin{align}
& A^{(1)}\defeq \left[\begin{matrix} \mbox{vec}(A^{(1)}_{1}) & \ldots & \mbox{vec}(A^{(1)}_{m}) \end{matrix}\right]^T \in \R^{m\times(dn+m)^2} \\
& A^{(2)}\defeq \left[\begin{matrix} \mbox{vec}(A^{(2)}_{1}) & \ldots & \mbox{vec}(A^{(2)}_{M/2}) \end{matrix}\right]^T \in \R^{M/2\times(dn+m)^2}
\end{align}
\end{subequations}
We also let $A \defeq \left[\begin{smallmatrix} A^{(1)} \\ A^{(2)}\end{smallmatrix}\right]$ satisfying $\mathcal{A}(X) = A\mbox{vec}(X)$. Note that $A^{(1)}(A^{(2)})^T = 0$ and $A^{(2)}(A^{(2)})^T = I_{M/2}$ (after scaling $A^{(2)}_k$'s). Now, the dual of~(\ref{eq:StndrdSDP}) is 
\begin{subequations}
\label{eq:dualStndrdSDP}
\begin{align}
\underset{{\scriptstyle \mathbf{y}, S}}{\text{minimize}}
& \ \ -\mathbf{b}^T\mathbf{y}\\
\text{subject to}
& \ \ \mathcal{A}^*(\mathbf{y}) + S =  W\\
& \ \ S \succeq 0
\end{align}
\end{subequations}
where the operator $\mathcal{A}^*: \R^{m+M/2} \rightarrow \mbox{S}^{dn+m\times dn+m}$ is the adjoint of $\mathcal{A}$ and is defined by $\mathcal{A}^*(\mathbf{y}) \defeq \mbox{mat}(A^T\mathbf{y})$. Hence, the augmented Lagrangian function for the dual SDP~(\ref{eq:dualStndrdSDP}) is given by
\begin{equation}
\label{eq:AugLagran}
\mathcal{L}_{\mu}(\mathbf{y},S,X) = -\mathbf{b}^T\mathbf{y} + \tr(X(\mathcal{A}^*(\mathbf{y})+S-W)) + \frac{\mu}{2}\left\|\mathcal{A}^*(\mathbf{y})+S-W\right\|_F^2
\end{equation}
where $X\in\mbox{S}^{dn+m\times dn+m}$ and $\mu > 0$ is a penalty parameter. Here, we can obtain an alternating direction method (ADM) to minimize $\mathcal{L}_{\mu}(\mathbf{y},S,X)$ with respect to $\mathbf{y},S,X$ in an alternating fashion: Starting from an initial point $(\mathbf{y}^0,S^0,X^0)$, ADM sequentially solves the following problems at each iteration~\cite{WenADM}
\begin{subequations}
\begin{align}
\label{eq:ADMprblms1}
\mathbf{y}^{k+1} &\defeq \underset{{\scriptstyle \mathbf{y}}}{\text{argmin}} \ \mathcal{L}_{\mu}(\mathbf{y},S^k,X^k)\\
\label{eq:ADMprblms2}
S^{k+1} &\defeq \underset{{\scriptstyle S\succeq0}}{\text{argmin}} \ \mathcal{L}_{\mu}(\mathbf{y}^{k+1},S,X^k)
\end{align}
\end{subequations}
and finally updates (the dual Lagrange multiplier) $X$ by
\begin{equation}
\label{eq:ADMXupdate}
X^{k+1} \ \defeq \ X^k + \mu(\mathcal{A}^*(\mathbf{y}^{k+1})+S^{k+1}-W)
\end{equation}
We solve~(\ref{eq:ADMprblms1}) by setting $\nabla_{\mathbf{y}}{\mathcal{L}_{\mu}} = 0$, and obtain
\begin{equation}
\label{eq:yUpdate}
\mathbf{y}^{k+1} = -\left(\mathcal{A}\mathcal{A}^*\right)^{-1}\left(\frac{1}{\mu}\left(\mathcal{A}(X^k)-\mathbf{b}\right) + \mathcal{A}\left(S^k-W\right)\right)
\end{equation}
where the operator $\mathcal{A}\mathcal{A}^*$ satisfies $\mathcal{A}\mathcal{A}^*(\mathbf{y}) = \left[\begin{smallmatrix} A^{(1)}(A^{(1)})^T& 0 \\ 0 & \mbox{\scriptsize I}_{M/2}\end{smallmatrix}\right]\mathbf{y}$. Letting $\mathcal{B}:\mbox{S}^{dn+m\times dn+m} \rightarrow \R^{m}$ denote the linear operator
\begin{equation}
\label{eq:Boperator}
\mathcal{B}(X) = [\begin{matrix} \tr(C^1T)\ldots \tr(C^{m}T)\end{matrix}]^T \ , \ \mbox{for} \ \ \ X = \left[\begin{matrix} T & \star \\ \star & \star\end{matrix}\right]
\end{equation}
where $C^k$ is the constraint matrix of the $k$'th edge in $E_t$ (given by (\ref{eq:ConstLaplacian})), we get $\mathcal{A}^{(1)}(\mathcal{A}^{(1)})^* = \mathcal{B}\mathcal{B}^* + I$ (and hence, $A^{(1)}(A^{(1)})^T = BB^T + I_{m}$, where $B$ satisfies $\mathcal{B}(X) = B\mbox{vec}(X)$), making $\mathcal{A}\mathcal{A}^*$ invertible. Also, by rearranging the terms of $\mathcal{L}_{\mu}(\mathbf{y}^{k+1},S,X^k)$, (\ref{eq:ADMprblms2}) becomes equivalent to
\begin{equation}
\label{eq:SUpdate1}
S^{k+1} = \underset{{\scriptstyle S\succeq0}}{\text{argmin}} \left\| S - \Theta^{k+1}\right\|_F
\end{equation}
where $\Theta^{k+1}\defeq W - \mathcal{A}^*(\mathbf{y}^{k+1})-\frac{1}{\mu}X^k$. Hence, we get the solution $S^{k+1} = U_+\Sigma_+U_+^T$ where 
\begin{equation}
\label{eq:Hdecomp}
\Theta^{k+1} = \left[\begin{matrix}U_+ & U_-\end{matrix}\right]\left[\begin{matrix}\Sigma_+ & 0 \\ 0 & \Sigma_-\end{matrix}\right]\left[\begin{matrix}U_+^T \\ U_-^T\end{matrix}\right]
\end{equation}
is the spectral decomposition of $\Theta^{k+1}$, and $\Sigma_+$ denotes the diagonal matrix of positive eigenvalues of $\Theta^{k+1}$. Lastly, using (\ref{eq:ADMXupdate}), $X^{k+1}$ is given by
\begin{equation}
\label{eq:ADMXupdate2}
X^{k+1} = -\mu U_-\Sigma_-U_-^T
\end{equation}

\indent When implemented as is, the $\mathbf{y},S$ and $X$ updates of each iteration of ADM are extremely costly. This computational cost stems from the increase in the dimensions of the variables involved (after stating the primal and dual SDPs in standard form), and manifests itself via the computation of the inverse of $\mathcal{AA^*}$, the $\mathbf{y}$ update~(\ref{eq:yUpdate}) and the decomposition~(\ref{eq:Hdecomp}). However, considering the convergence analysis of ADM given in~\cite{WenADM}, since $\mathcal{A}$ is full-rank and the Slater condition (clearly) holds for the SDP~(\ref{eq:StndrdSDP}) (i.e., there exists $\hat{X}\succ 0$ such that $\mathcal{A}(\hat{X}) = \mathbf{b}$), ADM converges to a primal-dual solution $(X^*, \mathbf{y}^*, S^*)$ of (\ref{eq:StndrdSDP}) and (\ref{eq:dualStndrdSDP}) {\em irrespective of the initial point $(X^0, \mathbf{y}^0, S^0)$}. Here, we make the following crucial observation: Starting from $X^0$ and $S^0$ having the form
\begin{equation}
\label{eq:initXSform}
X^0 = \left[\begin{matrix} T^0  & 0 \\ 0& \Diag(\mathbf{\nu}^0)\end{matrix}\right] \ , \ \ S^0 = \left[\begin{matrix} R^0  & 0 \\ 0& \Diag(\mathbf{\eta}^0)\end{matrix}\right] 
\end{equation}
i.e., satisfying $\mathcal{A}^{(2)}(X^0) = \mathcal{A}^{(2)}(S^0) = \mathbf{0}_{M/2}$, $\mathbf{y}^k$ becomes essentially $m$ dimensional (i.e., $\mathbf{y}^k_i = 0$ for $i = m + 1,\ldots, m + M/2$) and $X^k$, $S^k$ preserve their form (i.e. $\mathcal{A}^{(2)}(X^k) = \mathcal{A}^{(2)}(S^k) = \mathbf{0}_{M/2}$), for each $k\geq1$. To see this, consider $X^k$ and $S^k$ satisfying (\ref{eq:initXSform}) for some $T^k\succeq0$, $\mathbf{\nu}^k\geq0$, and $R^k\succeq0$, $\mathbf{\eta}^k$. Then, by (\ref{eq:yUpdate}), we have
\begin{subequations}
\label{eq:Revyupdate}
\begin{align}
\mathbf{y}^{k+1} &= -\left[\begin{matrix}\left(A^{(1)}(A^{(1)})^T\right)^{-1} & 0\\ 0 & I_{M/2}\end{matrix}\right]\left(\frac{1}{\mu}\left[\begin{matrix}\mathcal{A}^{(1)}(X^k)-\mathbf{1}_m\\ \mathbf{0}_{M/2}\end{matrix}\right]+ \left[\begin{matrix}\mathcal{A}^{(1)}\left(S^k-W\right)\\ \mathbf{0}_{M/2}\end{matrix}\right]\right)\\
&= -\left[\begin{matrix} \left(\tilde{\mathcal{B}}\tilde{\mathcal{B}}^* + I\right)^{-1}\left(\frac{1}{\mu}(\tilde{\mathcal{B}}(T^k)-\mathbf{\nu}^k-\mathbf{1}_{m}) + \tilde{\mathcal{B}}(R^k-L)-\mathbf{\eta}^k\right) \\ \mathbf{0}_{M/2}\end{matrix}\right] \\
& =  \left[\begin{matrix}\mathbf{z}^{k+1}\\ \mathbf{0}_{M/2}\end{matrix}\right]
\end{align}
\end{subequations}
where, $\tilde{\mathcal{B}}$ is the (trivial) restriction of $\mathcal{B}$ to $\mbox{S}^{dn\times dn}$, i.e. $\tilde{\mathcal{B}}(T) = \mathcal{B}(X)$ for $X = \left[\begin{smallmatrix}T & \star \\ \star &\star \end{smallmatrix}\right]$. For $\Theta^{k+1} = W - \mathcal{A}^*(\mathbf{y}^{k+1})-\frac{1}{\mu}X^k$, we obtain
\begin{subequations}
\label{eq:RevHupdate}
\begin{align}
\Theta^{k+1} &= \left[\begin{matrix}L & 0\\ 0 & 0\end{matrix}\right] - \mbox{mat}\left((A^{(1)})^T\mathbf{z}^{k+1}\right) - \frac{1}{\mu}\left[\begin{matrix}T^k & 0 \\ 0 & \Diag(\mathbf{\nu^k})\end{matrix}\right] \\ 
& = \left[\begin{matrix}L - \frac{1}{\mu}T^k-\tilde{\mathcal{B}}^*(\mathbf{z}^{k+1}) & 0 \\ 0 & \Diag(\mathbf{z}^{k+1}-\frac{1}{\mu}\mathbf{\nu^k})\end{matrix}\right] 
\end{align}
\end{subequations}
Here, we let $F^{k+1} = \left[\begin{smallmatrix}V_+ & V_-\end{smallmatrix}\right]\left[\begin{smallmatrix}D_+ & 0 \\ 0 & D_-\end{smallmatrix}\right]\left[\begin{smallmatrix}V_+^T \\ V_-^T\end{smallmatrix}\right]$ be the spectral decomposition of $F^{k+1} \defeq L - \frac{1}{\mu}T^k-\tilde{\mathcal{B}}^*(\mathbf{z}^{k+1})$, where $D_+$ denotes the diagonal matrix of positive eigenvalues of $F^{k+1}$. Then, using (\ref{eq:SUpdate1}), (\ref{eq:Hdecomp}) and (\ref{eq:ADMXupdate2}), the $S^{k+1}$ and $X^{k+1}$ updates are given by
\begin{subequations}
\label{eq:RevSXupdate}
\begin{align}
S^{k+1} &= \left[\begin{matrix} V_+D_+V_+^T & 0 \\ 0 & \Diag\left(\max\left\{\mathbf{z}^{k+1} - \frac{1}{\mu}\mathbf{\nu}^k,\mathbf{0}_{m}\right\}\right) \end{matrix}\right] \\
X^{k+1} &= \left[\begin{matrix} -\mu V_-D_-V_-^T & 0 \\ 0 & -\mu\Diag\left(\min\left\{\mathbf{z}^{k+1} - \frac{1}{\mu}\mathbf{\nu}^k,\mathbf{0}_{m}\right\}\right) \end{matrix}\right]
\end{align}
\end{subequations}

As a result, we can rewrite the updates~(\ref{eq:ADMprblms1}), (\ref{eq:ADMprblms2}) and~(\ref{eq:ADMXupdate}) (or equivalently, (\ref{eq:yUpdate}), (\ref{eq:SUpdate1}) and~(\ref{eq:ADMXupdate2})), in terms of the new variables $\mathbf{z},\mathbf{\nu},\mathbf{\eta}\in\R^{m}$ and $R,T\in\R^{dn\times dn}$, as
\begin{subequations}
\label{eq:newADMupdates}
\begin{align}
\mathbf{z}^{k+1} &\defeq -\left(\tilde{\mathcal{B}}\tilde{\mathcal{B}}^* + I\right)^{-1}\left(\frac{1}{\mu}(\tilde{\mathcal{B}}(T^k)-\mathbf{\nu}^k-\mathbf{1}_{m}) + \tilde{\mathcal{B}}(R^k-L)-\mathbf{\eta}^k\right) \ ,\\
R^{k+1} &\defeq V_+D_+V_+^T \ , \\
\mathbf{\eta}^{k+1} &\defeq \max\left\{\mathbf{z}^{k+1} - \frac{1}{\mu}\mathbf{\nu}^k,\mathbf{0}_{m}\right\} \ ,\\
T^{k+1} &\defeq -\mu V_-D_-V_-^T \ , \\
\mathbf{\nu}^{k+1} &\defeq -\mu \min\left\{\mathbf{z}^{k+1} - \frac{1}{\mu}\mathbf{\nu}^k,\mathbf{0}_{m}\right\} \ .
\end{align}
\end{subequations}
Hence, we are able to overcome the excess computational cost induced by rewriting~(\ref{eq:SDRTransEst}) in standard form and maintain convergence. Also, the linear constraint $\tr(HT) = 0$ of (\ref{eq:SDRTransEst}) can be trivially incorporated into the ADM framework in the following way: Let $\tilde{B}\in\R^{m\times d^2n^2}$ denote the matrix corresponding to the linear operator $\tilde{\mathcal{B}}$, i.e. $\tilde{\mathcal{B}}(T) = \tilde{B}\mbox{vec}(T)$, and let $\tilde{B}_l$ denote the $l$'th row of $\tilde{B}$ given by $\tilde{B}_l = \mbox{vec}(C^l)^T$, for $l\in E_t$. Then, since $\tilde{\mathcal{B}}^*(\mathbf{z}^{k+1}) = \mbox{mat}(\tilde{B}^T\mathbf{z}^{k+1}) = \sum_l \mathbf{z}^{k+1}_l C^l$, we get
\begin{subequations}
\label{eq:RevZeroTrV}
\begin{align}
\tr(HF^{k+1}) &= \tr(HL) - \frac{1}{\mu}\tr(HT^{k}) - \sum_{l=1}^m \mathbf{z}^{k+1}_l\tr(HC^l) \\
& = - \frac{1}{\mu}\tr(HT^{k})
\end{align}
\end{subequations}
Here, if we have $\tr(HT^{k}) = 0$, we get $\tr(HF^{k+1}) = 0$, which, together with $T^k\succeq0$, implies that the eigenvectors of $H$ are in the nullspace of $F^{k+1}$. Hence, we obtain $\tr(HT^{k+1}) = 0$. As a result, if we choose $T^0$ such that $\tr(HT^0) = 0$, by induction we get $\tr(HT^k) = 0, \forall k$, hence for the solution $T^*$ of ADM (as requested in~(\ref{eq:SDRTransEst})). 

Moreover, we now show that the operator $\tilde{\mathcal{B}}\tilde{\mathcal{B}}^*$ has a simple structure allowing efficient computation of the inverse $(\tilde{\mathcal{B}}\tilde{\mathcal{B}}^* + I)^{-1}$.
\begin{lemma}
\label{lem:InverBBT}
Let $\tilde{B}\in\R^{m\times d^2n^2}$ denote the matrix corresponding to the linear operator $\tilde{\mathcal{B}}$, i.e. $\tilde{\mathcal{B}}(T) = \tilde{B}\mbox{vec}(T)$. Then the inverse $(\tilde{\mathcal{B}}\tilde{\mathcal{B}}^* + I)^{-1}$ is given by
\begin{subequations}
\label{eq:BBTinv}
\begin{align}
\left(\tilde{\mathcal{B}}\tilde{\mathcal{B}}^* + I\right)^{-1}(\mathbf{z}) &= \left(\tilde{B}\tilde{B}^T+I_{m}\right)^{-1}\mathbf{z} \\
&=\left(V_{\tilde{B}}\left(D_{\tilde{B}}+I_{n}\right)^{-1}V_{\tilde{B}}^T + \frac{1}{2d+1}\left(I_{m} - V_{\tilde{B}}V_{\tilde{B}}^T\right)\right)\mathbf{z}
\end{align}
\end{subequations}
where $V_{\tilde{B}}$ and $D_{\tilde{B}}$ denote the matrix of eigenvectors and the diagonal matrix of eigenvalues of $\tilde{B}\tilde{B}^T$, corresponding to its $n$ largest eigenvalues.
\end{lemma}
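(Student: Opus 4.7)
The plan is to reduce $(\tilde{\mathcal{B}}\tilde{\mathcal{B}}^* + I)^{-1}$ to inverting a matrix whose deviation from a scalar multiple of the identity has rank at most $n$. The first equality of the lemma is essentially a bookkeeping step: since each $C^l$ is symmetric, the adjoint is $\tilde{\mathcal{B}}^*(\mathbf{z}) = \sum_l z_l C^l$, so the matrix representing $\tilde{\mathcal{B}}\tilde{\mathcal{B}}^*$ in the standard basis of $\R^m$ has entries $\tr(C^l C^k) = (\tilde{B}\tilde{B}^T)_{lk}$. Hence $(\tilde{\mathcal{B}}\tilde{\mathcal{B}}^* + I)^{-1}(\mathbf{z}) = (\tilde{B}\tilde{B}^T + I_m)^{-1}\mathbf{z}$.

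The core of the proof is to identify the spectral structure of $\tilde{B}\tilde{B}^T$. Using the block form~(\ref{eq:ConstLaplacian}) of $C^{ij}$, a direct computation gives $\tr(C^{ij}C^{ij}) = 4d$ (four nonzero $d\times d$ blocks, each of squared Frobenius norm $d$). For two distinct edges $(i,j)$ and $(k,l)$ the only block positions that can overlap are $(i,i),(j,j)$ against $(k,k),(l,l)$, contributing $d$ for each shared vertex, while the $\pm I_d$ blocks at $(i,j),(j,i)$ versus $(k,l),(l,k)$ overlap only when $\{i,j\}=\{k,l\}$. Consequently $\tr(C^{ij}C^{kl}) = d\cdot|\{i,j\}\cap\{k,l\}|$ for distinct edges, and gathering these entries yields the identity
\begin{equation*}
\tilde{B}\tilde{B}^T \;=\; 2d\, I_m \;+\; d\, N^T N,
\end{equation*}
where $N\in\R^{n\times m}$ is the unsigned vertex-edge incidence matrix of $G_t$.

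Since $N^T N$ has rank at most $n$, it follows that $\tilde{B}\tilde{B}^T - 2d\, I_m$ has rank at most $n$, i.e.~at most $n$ eigenvalues of $\tilde{B}\tilde{B}^T$ differ from $2d$ and the remaining $m-n$ (or more) eigenvalues all equal $2d$. Writing the spectral decomposition accordingly, $\tilde{B}\tilde{B}^T = V_{\tilde{B}} D_{\tilde{B}} V_{\tilde{B}}^T + 2d(I_m - V_{\tilde{B}} V_{\tilde{B}}^T)$, and adding $I_m$ gives $\tilde{B}\tilde{B}^T + I_m = V_{\tilde{B}}(D_{\tilde{B}}+I_n)V_{\tilde{B}}^T + (2d+1)(I_m - V_{\tilde{B}} V_{\tilde{B}}^T)$. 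Because $V_{\tilde{B}}V_{\tilde{B}}^T$ and $I_m - V_{\tilde{B}}V_{\tilde{B}}^T$ are orthogonal projections onto complementary invariant subspaces, the inverse splits across these subspaces, producing the formula in~(\ref{eq:BBTinv}).

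There is no substantive obstacle: the argument is a direct computation plus the standard fact that a symmetric operator that is a scalar plus a low-rank perturbation can be inverted via its invariant decomposition. The only minor subtlety is that $\rank(N^T N)$ may be strictly less than $n$ (e.g.~when $G_t$ is bipartite), in which case some of the "top-$n$" eigenvalues of $\tilde{B}\tilde{B}^T$ themselves equal $2d$; but the formula is unaffected, since then the corresponding diagonal entry of $(D_{\tilde{B}}+I_n)^{-1}$ is $1/(2d+1)$, matching the contribution that the second term would have made for that direction.
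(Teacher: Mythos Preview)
Your proof is correct and follows essentially the same route as the paper: compute the entries of $\tilde{B}\tilde{B}^T$ via $\tr(C^{ij}C^{kl})$, recognize the result as $2d\,I_m + d\,N^TN$ with $N$ the (unsigned) vertex-edge incidence matrix of $G_t$, and then invert using the low-rank-plus-scalar structure. Your added remarks on the adjoint computation and on the possibility $\rank(N^TN)<n$ are welcome clarifications, but the argument is the same as the paper's.
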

\begin{proof}
Let $k = (i_k,j_k)$ denote the $k$'th edge in $E_t$ and $\tilde{B}_k$ denote the $k$'th row of $\tilde{B}$ given by $\tilde{B}_k = \mbox{vec}(C^k)^T$, where $C^k$ is given by (\ref{eq:ConstLaplacian}). Then, the $(k,l)$'th entry of $\tilde{B}\tilde{B}^T$ is given by
\begin{equation}
\label{eq:entryBBT}
\tilde{B}\tilde{B}^T_{k,l} = \begin{cases} &0 \ \ \ \ ,\ \ \ \{(i_k,i_l),(i_k,j_l),(j_k,i_l),(j_k,j_l)\}\cap E_t = \emptyset \\ &d \ \ \ \ , \ \ \ \{(i_k,i_l),(i_k,j_l),(j_k,i_l),(j_k,j_l)\}\cap E_t \neq \emptyset \ \mbox{and} \ k\neq l \\ & 4d \ \  \ , \ \ \ k = l\end{cases}
\end{equation}
i.e., $\tilde{B}\tilde{B}^T = dM_{G}^TM_{G} + 2dI_{m}$, where $M_{G}\in\R^{n\times m}$ is the (vertex-edge) incidence matrix of $G_t$ (see, e.g.,~\cite{GraphSpecThy}). Hence, the spectral decomposition of $\tilde{B}\tilde{B}^T$ is given by
\begin{equation}
\label{eq:BBtSpecDec}
\tilde{B}\tilde{B}^T = \left[\begin{matrix}V_{\tilde{B}} & U_{\tilde{B}}\end{matrix}\right]\left[\begin{matrix}D_{\tilde{B}} & 0 \\ 0 & 2dI_{m-n}\end{matrix}\right]\left[\begin{matrix}V_{\tilde{B}}^T \\ U_{\tilde{B}}^T\end{matrix}\right]
\end{equation}
which clearly implies the claim of the lemma.
\qquad\end{proof}
\newline\indent As a result, in order to compute $(\tilde{\mathcal{B}}\tilde{\mathcal{B}}^* + I)^{-1}$, we need to compute only the $n$ largest eigenvalues and the corresponding eigenvectors of the sparse matrix $\tilde{B}\tilde{B}^T$. 
\section{Proof of Proposition~\ref{propo:ExactRecoveryDist}}
\label{Apdx:ExactRecoveryDist}
We prove the result in two steps: First we prove exact recovery of patch signs by (\ref{eq:PatchReg}) and EVM, then we prove exact recovery of global locations by (\ref{eq:JointlyScTrSynch}) (given the noiseless signs). \\
Consider a pair of patches $P_i$ and $P_j$, $(i,j)\in E_P$. Using (\ref{eq:LocalCoorSysRepres}), we obtain (where, we tamely assume $|c^i| > 0$, $\forall i\in V_P$, and $\mathbf{t}_k = \mathbf{t}_l \iff k = l$, $\forall k,l \in \bigcup P_i$)
\begin{equation}
\label{eq:LocalCoorSysRepresInv}
\mathbf{t}_k^i = \frac{c^j}{c^i}\mathbf{t}_k^j + \frac{\mathbf{t}^j-\mathbf{t}^i}{c^i}, \ \ k\in P_i\cap P_j
\end{equation}
Observe that, (\ref{eq:LocalCoorSysRepresInv}) implies, $c^{ij} = c^j/c^i$ and $\mathbf{t}^{ij} = (\mathbf{t}^j-\mathbf{t}^i)/c^i$ comprise a minimizer of the pairwise patch registration step (\ref{eq:PatchReg}), with a cost equal to $0$. Hence, every minimizer of (\ref{eq:PatchReg}) has a cost equal to $0$, i.e. is a solution of the linear equations
\begin{equation}
\label{eq:PatchRegSolns}
\mathbf{t}_k^i = c^{ij}\mathbf{t}_k^j + \mathbf{t}^{ij}, \ \ k\in P_i\cap P_j
\end{equation}
However, since $|P_i\cap P_j|\geq2$, $c^{ij}$ can be uniquely determined by simply selecting $2$ equations from (\ref{eq:PatchRegSolns}), subtracting one equation from the other (to eliminate $\mathbf{t}^{ij}$) and solving for $c^{ij}$ (note that, since we assume $\mathbf{t}_k \neq \mathbf{t}_l$, for $k\neq l$, and $|c^i| > 0$, $\forall i\in V_P$, we get $\mathbf{t}_k^i \neq \mathbf{t}_l^i$, for $k\neq l$, which allows us to solve for $c^{ij}$). Also, $\mathbf{t}^{ij}$ is uniquely determined by substituting the solution for $c^{ij}$ in one of the equations in (\ref{eq:PatchRegSolns}). As a result, $c^{ij} = c^j/c^i$ and $\mathbf{t}^{ij} = (\mathbf{t}^j-\mathbf{t}^i)/c^i$ is the unique solution of (\ref{eq:PatchReg}), for each $(i,j)\in E_P$, which yields the (unique) pairwise sign solution $z^{ij} = \mbox{sign}(c^j/c^i) = z^iz^j$ (i.e., (\ref{eq:PatchReg}) recovers the pairwise signs exactly for all $(i,j)\in E_P$). Given the noiseless pairwise signs $\{z^{ij}\}_{(i,j)\in E_P}$ defined on the connected graph $G_P$, EVM~\cite{AmitMihailSensors} recovers the exact patch signs $z^i$ (up to a global sign $z$), i.e. we obtain $\hat{z}^i = zz^i$ (for arbitrary $z \in\{-1,+1\}$).\\
Now, we define $\beta \defeq (\min_i |c^i|)^{-1}$, $\mathbf{x}_k \defeq z\beta\mathbf{t}_k$, $s^i \defeq \beta|c^i|$ and $\mathbf{u}^i\defeq z\beta\mathbf{t}^i$, for each $i\in V_P$ and $k\in \bigcup P_i$. Here, $\{\mathbf{x}_k, s^i, \mathbf{u}^i\}$ satisfy
\begin{equation}
\label{eq:SolnConstr}
\mathbf{x}_k-(s^i\hat{z}^i\mathbf{t}_k^i + \mathbf{u}^i) = z\beta(\mathbf{t}_k - (c^i\mathbf{t}_k^i + \mathbf{t}^i)) = \mathbf{0}, \ \forall \ k\in P_i, \ i\in V_P
\end{equation} 
where, the last equality follows from (\ref{eq:LocalCoorSysRepres}). We note that (\ref{eq:SolnConstr}), together with the feasibility $s^i\geq1$, implies that $\{\mathbf{x}_k, s^i, \mathbf{u}^i\}_{k\in P_i, i\in V_P}$ is a minimizer of (\ref{eq:JointlyScTrSynch}), with a cost equal to $0$. We now show that, for any minimizer $\{\mathbf{y}_k, r^i, \mathbf{w}^i\}_{k\in P_i, i\in V_P}$ of (\ref{eq:JointlyScTrSynch}), $\{\mathbf{y}_k\}_{k\in\bigcup P_i}$ is congruent $\{\mathbf{x}_k\}_{k\in\bigcup P_i}$, which is itself congruent to $\{\mathbf{t}_k\}_{k\in\bigcup P_i}$. Since $\{\mathbf{y}_k, r^i, \mathbf{w}^i\}_{k\in P_i, i\in V_P}$ (with $r^i \geq 1$, $\forall i\in V_P$) must have the cost value $0$, for each $i\in V_P$ we get
\begin{equation}
\label{eq:AltSolnEqns}
\mathbf{t}_k^i = \frac{\mathbf{y}_k - \mathbf{w}^i}{zz^ir^i} = \frac{\mathbf{x}_k - \mathbf{u}^i}{zz^is^i} \ \Rightarrow \ \mathbf{y}_k = \frac{r^i}{s^i}\mathbf{x}_k + \mathbf{w}^i - \frac{r^i}{s^i}\mathbf{u}^i, \ \ \forall k\in P_i
\end{equation}
Here, for each $(i,j)\in E_P$, we can find at least two separate $k \in P_i\cap P_j$ (say, $k_1$ and $k_2$), for which (\ref{eq:AltSolnEqns}) implies
\begin{subequations}
\begin{align}
\label{eq:AltSolnEqnsTwoPatches1}
&\mathbf{y}_{k_1} - \mathbf{y}_{k_2} = \frac{r^i}{s^i}\left(\mathbf{x}_{k_1} - \mathbf{x}_{k_2}\right)  = \frac{r^j}{s^j}\left(\mathbf{x}_{k_1} - \mathbf{x}_{k_2}\right) \\
\label{eq:AltSolnEqnsTwoPatches2}
&\Rightarrow \ \frac{r^i}{s^i} = c \ , \ \forall i\in V_P \ \Rightarrow \ \mathbf{w}^i - \frac{r^i}{s^i}\mathbf{u}^i = \mathbf{t} \ , \ \forall i\in V_P
\end{align}
\end{subequations}
where, in~(\ref{eq:AltSolnEqnsTwoPatches2}), the first implication follows from $r^i\geq1$, $\mathbf{x}_{k_1}\neq\mathbf{x}_{k_2}$ (since $\mathbf{x}_k$ are congruent to $\mathbf{t}_k$) and the connectivity of $G_P$ (i.e., since each pair results in a constant ratio of scales $r^i/s^i$, we can propagate this information to all the patches to conclude that this constant is the same for all patches), and also the second implication follows from substituting $r^i/s^i = c$ in (\ref{eq:AltSolnEqns}) and the connectivity of $G_P$ (again, allowing us to conclude that $\mathbf{w}^i - (r^i/s^i)\mathbf{u}^i$ is fixed for all patches). This completes the proof, since we just showed that, for any minimizer $\{\mathbf{y}_k, r^i, \mathbf{w}^i\}_{k\in P_i, i\in V_P}$ of (\ref{eq:JointlyScTrSynch}), we have 
\begin{equation}
\label{eq:ProofConclude}
\mathbf{y}_k = z(c\beta)\mathbf{t}_k + \mathbf{t} \ \ ; \ \ z\in \{-1,+1\}\ , \ \beta>0 \ , \ c\geq1 \ , \ \mathbf{t} \in \R^d 
\end{equation}
i.e., $\{\mathbf{y}_k\}_{k\in\bigcup P_i}$ is congruent to $\{\mathbf{t}_k\}_{k\in\bigcup P_i}$. \qquad\endproof \\ 

{\bf Acknowledgements.} \ \ The authors wish to thank Shahar Kovalsky and Mica Arie-Nachimson for valuable discussions, and for their technical support in data generation and $3$D structure reconstruction for the real data experiments. They also thank Federica Arrigoni for pointing out a typo in the statement of Theorem~\ref{thm:LamanConds} in an earlier version of this paper. O.~\"{O}zye\c{s}il and A.~Singer were partially supported by Award Number R01GM090200 from the NIGMS, by Award Number FA9550-12-1-0317 and FA9550-13-1-0076 from AFOSR, and by Award Number LTR DTD 06-05-2012 from the Simons Foundation. R.~Basri is supported in part by the Israel Science Foundation grant No. 764/10, The Israeli Ministry of Science, and the Citigroup Foundation. The vision group at the Weizmann Institute is supported in part by the Moross Laboratory for Vision Research and Robotics.

\bibliographystyle{siam}
\bibliography{SfMbib}

\end{document}